\pdfoutput=1
\documentclass{article}
\usepackage[T1]{fontenc}
\usepackage{iclr2027_conference,times}
\iclrfinalcopy

\usepackage{amsmath}
\usepackage{amsfonts}
\usepackage{amssymb}
\usepackage{amsthm}
\usepackage{booktabs}
\usepackage{graphicx}
\usepackage{float}
\usepackage{placeins}
\usepackage{array}
\usepackage{longtable}
\usepackage{caption}
\usepackage{hyperref}
\usepackage{url}
\usepackage{xcolor}
\ifdefined\pdfgentounicode
  \IfFileExists{glyphtounicode.tex}{\input{glyphtounicode}}{}
\fi

\newtheorem{theorem}{Theorem}
\newtheorem{lemma}{Lemma}
\newtheorem{proposition}{Proposition}

\DeclareMathOperator{\Tr}{tr}
\DeclareMathOperator{\diag}{diag}
\newcommand{\R}{\mathbb{R}}
\newcommand{\norm}[1]{\left\lVert #1 \right\rVert}
\newcommand{\cen}{\mathrm{cen}}

\title{
Understanding Head Geometry and Dynamics in Federated Regression \\
 through a Natural Solution Selection Rule:
 \\ An Unconstrained Feature Model Analysis
}
\author{Chuang Ma$^{1,2}$ \quad Tomoyuki Obuchi$^{1,3}$\\
$^{1}$Kyoto University \quad $^{2}$NII LLMC \quad $^{3}$RIKEN AIP\\
\texttt{\{ma.chuang.52h@st, obuchi@i\}.kyoto-u.ac.jp}}
\begin{document}
\maketitle
\lhead{Preprint}
\begin{abstract}
In federated averaging, local objectives can admit multiple optimal heads, making the aggregate depend on which heads clients return.
We study this ambiguity in federated multivariate regression with private backbones and a shared linear head, using an unconstrained feature model (UFM) that treats training-sample features as free variables.
We introduce a natural selection rule: each client returns the optimal head closest to the broadcast head.
We show that global minimization with a vanishing proximal penalty on the head realizes this rule.
When the clients' optimal Gram matrices and the initial shared Gram matrix are positive definite, the shared Gram matrix follows a closed recursion and converges to the unique Bures--Wasserstein barycenter of the clients' optimal Gram matrices.
Even with this alignment, the limit generally differs from the centralized optimal Gram matrix.
We decompose this gap into three positive-semidefinite terms arising from differences in client target means, covariance heterogeneity, and averaging the aligned heads.
A correction based on a one-time exchange of target means and covariances recovers the centralized optimal Gram matrix in one round under exact local optimization and the same selection rule.
We verify these results numerically in the UFM and test its predictions on five tabular and five image regression datasets using deep networks with feature regularization and long local training.
In these experiments, ordinary training approaches the predicted barycenter, while a weak proximal penalty improves endpoint agreement and yields trajectories that closely follow the predicted Gram dynamics.
The correction moves the final Gram matrices close to the centralized UFM prediction.
\end{abstract}
\section{Introduction}
\label{sec:intro}
In federated averaging, clients update the shared model
parameters using their local data at each communication
round and send the updated parameters to the server.
The server then averages these parameters and broadcasts
the result~\citep{pmlr-v54-mcmahan17a}.
Effective aggregation relies on alignment between client
models in parameter space, but local objectives alone
do not guarantee such alignment.
Prior work has characterized the limiting behavior of
federated averaging in strongly convex settings, where
local minimizers are unique~\citep{pmlr-v130-charles21a,pmlr-v258-mangold25a}.
In overparameterized linear regression with fixed features,
gradient descent selects the local minimizer closest to
the broadcast model, and repeated averaging can recover
the centralized solution~\citep{zhu2026effectivenessdistributedgradientdescent}.

In contrast, local training of deep neural networks need
not return the minimizer closest to the broadcast model.
Different parameter settings can represent the same
function~\citep{Wang2020Federated}, so clients may return
mutually incompatible solutions.
For example, permuting hidden units can cause the same
unit index to correspond to different features across clients.
Averaging such solutions can produce an arbitrarily poor
model~\citep{pmlr-v54-mcmahan17a}. Indeed, nonzero local updates can largely cancel upon averaging~\citep{jhunjhunwala2023fedexp}, and averaging-based optimization methods can stagnate at nonoptimal
points~\citep{NEURIPS2020_4ebd440d,NEURIPS2019_fea16e78}. To address mismatches in parameter representations, prior work has proposed matching methods across clients before averaging~\citep{Wang2020Federated,NEURIPS2020_fb269786}. These studies, however, did not characterize the limiting behavior of repeated averaging or its relation to centralized training.

\begin{figure}[t]
 \centering
 \includegraphics[width=\linewidth]{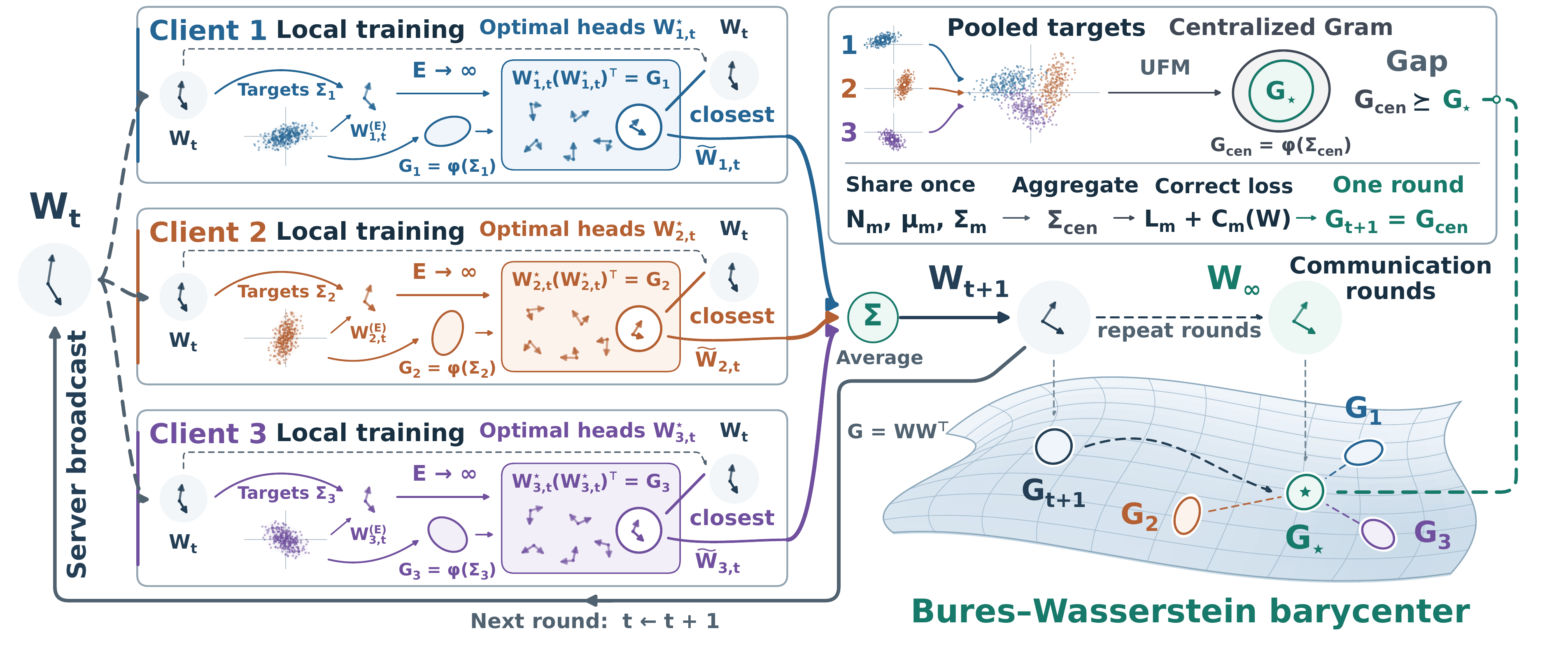}
 \caption{\textbf{Federated averaging of a shared head under the selection rule.}
 In the UFM, local optimality fixes the Gram matrix of a client's head but not the head itself.
 Our selection rule chooses the optimal head closest to the broadcast head.
 Averaging these heads over communication rounds drives the Gram matrix of the shared head to the BW barycenter of the clients' optimal Gram matrices.
 Inset: the gap to centralized training and its one-round correction from shared target moments.}
 \label{fig:intro-overview}
\end{figure}
To study the limiting behavior of federated averaging when local minimizers are nonunique, we focus on the unconstrained feature model (UFM)~\citep{Mixon2022,doi:10.1073/pnas.2103091118,NEURIPS2021_f92586a2,pmlr-v145-e22b,LU2022224}. This model treats the backbone outputs on the training samples as free optimization variables. Originally introduced to explain neural collapse~\citep{doi:10.1073/pnas.2015509117} in classification, it was later extended to regression~\citep{NEURIPS2024_e4748b6b,NEURIPS2025_93da65de}. For regularized multivariate regression, \citet{NEURIPS2024_e4748b6b} derived an explicit expression for the Gram matrix of an optimal linear head in terms of the target covariance. The head itself, however, is determined only up to an orthogonal transformation.

Building on this characterization, we introduce and analyze a UFM for federated multivariate regression with private backbones and a shared linear head~\citep{liang2020thinklocallyactglobally,10.1145/3545008.3545073}. Each client's features are optimized independently, and only the head is averaged (Figure~\ref{fig:intro-overview}). The orthogonal freedom, harmless for an individual model, now complicates aggregation because the server averages the heads rather than their Gram matrices. To resolve the ambiguity in the client returns, we select, for each client, the optimal head closest to the broadcast head in Frobenius norm. This uses the broadcast as a common reference while preserving optimality for each client's original objective. We further show that this selection arises as a vanishing-regularization limit. Specifically, we add a quadratic proximal penalty on the head, of the form used in FedProx~\citep{MLSYS2020_1f5fe839}, and show that the heads of global minimizers of the penalized local objective converge to the selected head as the penalty weight tends to zero.

Under this selection rule, we identify the limiting Gram matrix of the shared head as the Bures--Wasserstein (BW) barycenter of the clients' optimal Gram matrices. When these matrices are positive definite, we prove
convergence to this unique barycenter from any positive-definite initial Gram matrix. The key connection is that optimal head alignment in Frobenius norm corresponds to BW distance between Gram matrices~\citep{BHATIA2019165}: averaging the selected heads induces the barycenter iteration of \citet{ALVAREZESTEBAN2016744}.

Even with exact local optimization and aligned heads, this barycenter generally differs from the optimal Gram matrix of the centralized UFM trained on the pooled data, i.e., the combined data from all clients. We express the centralized Gram matrix as the barycenter plus three positive-semidefinite terms, corresponding to differences in client target means, the nonlinear dependence of optimal Gram matrices on target covariances, and averaging heads rather than their Gram matrices. We further construct a correction to the local objectives that recovers the centralized Gram matrix in a single communication round under the same selection rule. The correction requires a one-time exchange of the clients' target means and covariances (Figure~\ref{fig:intro-overview}, inset).

We first verify these results numerically in the UFM.
To test whether they extend to learned features, we train deep networks with the feature and head penalties of the UFM on five tabular and five image regression datasets.
With a weak proximal regularizer, the Gram matrix of the shared head closely follows the predicted communication dynamics and ends near the BW barycenter, and the correction moves this endpoint close to the centralized Gram matrix.
Without the regularizer, the endpoint also lies near the barycenter but less precisely, and an intermediate proximal weight agrees best.
Aligning the heads to the broadcast also implements the selection rule, whereas averaging unaligned heads raises the training objective.
Removing the between-client mean term leaves a gap consistent with the decomposition.
The agreement persists for other architectures, larger training sets and full-model averaging, and weakens with less local training, as our analysis of finite local steps in the UFM also shows (Appendix~\ref{sub:supplementary-analysis}).
On the image datasets, proximal training keeps test error comparable, and the correction lowers it slightly.

\section{Related work}
\label{sec:related}
\paragraph{Unconstrained feature models.}
The UFM and the layer-peeled model explain neural collapse through free-feature optimization~\citep{Mixon2022,doi:10.1073/pnas.2103091118}.
Prior work characterizes their global minimizers, landscapes and dynamics under cross-entropy and squared losses~\citep{NEURIPS2021_f92586a2,pmlr-v162-zhou22c,han2022neural}.
Extensions address class imbalance~\citep{doi:10.1073/pnas.2103091118,NEURIPS2022_ae54ce31,JMLR:v25:23-1215}, deeper models~\citep{pmlr-v162-tirer22a,NEURIPS2023_a60c43ba,NEURIPS2024_f9c2ab8d} and constrained features~\citep{pmlr-v202-tirer23a}.
Related analyses examine neural collapse in shallow ReLU~\citep{JMLR:v27:24-1429} and mean-field networks~\citep{pmlr-v267-wu25u}.
The UFM has also been extended to multi-label classification~\citep{pmlr-v235-li24ai}, supervised contrastive learning~\citep{10619192}, ordinal regression~\citep{NEURIPS2025_93da65de} and multivariate regression~\citep{NEURIPS2024_e4748b6b,andriopoulos2025neuralmultivariateregressionqualitative}.
In federated classification, related analyses study feature collapse and degradation under aggregation~\citep{10.1007/978-981-99-8132-8_34,shi2023towards,zhu2025coinunveilingdownsidesmodel}, and other methods prescribe the classifier geometry~\citep{Li_2023_ICCV,ICLR2024_db174d37}.
We build on the characterization of multivariate regression~\citep{NEURIPS2024_e4748b6b} to analyze repeated averaging of selected optimal heads across clients.
\paragraph{Federated optimization and selection.}
Prior analyses identify objective inconsistency and biased fixed points~\citep{NEURIPS2020_564127c0,NEURIPS2020_4ebd440d}.
Surrogate objectives describe local updates for quadratic losses~\citep{pmlr-v130-charles21a}, and bias expansions quantify the limiting deviation from centralized training in strongly convex settings~\citep{pmlr-v258-mangold25a}.
Multiple local updates also enable shared representation learning in multitask linear regression~\citep{NEURIPS2022_449590df}.
With fixed features, local gradient descent in overparameterized linear regression selects the minimizer closest to its broadcast initialization~\citep{zhu2026effectivenessdistributedgradientdescent}.
Matching methods explicitly align models before averaging~\citep{Wang2020Federated,NEURIPS2020_fb269786}, while FedProx penalizes deviations from the broadcast model~\citep{MLSYS2020_1f5fe839}.
In our UFM, the heads of global minimizers of the penalized objective converge to the selected head as the proximal weight vanishes.
For classifier calibration, CCVR uses uploaded feature moments~\citep{NEURIPS2021_2f2b2656}; our correction instead uses target means and covariances to recover the centralized Gram matrix.
\paragraph{Optimal transport and matrix barycenters.}
For positive-definite matrices, the BW distance is the Procrustes distance between their factors~\citep{BHATIA2019165}.
\citet{berardini2026distributionalignmentoneshotfederated} use BW barycenters to align frozen-encoder feature distributions in one-shot federated learning.
Under our selection rule, repeated head averaging instead induces the convergent BW barycenter iteration of \citet{ALVAREZESTEBAN2016744} on the shared-head Gram matrix.

\section{Formulation}
\label{sub:formulation}

\paragraph{Federated multivariate regression.}
We consider a federated multivariate regression problem with $M\ge2$ clients. Client $m\in\{1,\ldots,M\}$ owns
$\mathcal D_m=\{(x_{m,i},y_{m,i})\}_{i=1}^{N_m}$ with inputs $x_{m,i}$ and targets
$y_{m,i}\in\R^C$ ($C\ge2$, not assumed centered), collected columnwise as
$Y_m=[y_{m,1},\ldots,y_{m,N_m}]\in\R^{C\times N_m}$. Write $X_m=[x_{m,1},\ldots,x_{m,N_m}]$ for the
inputs, and let $N=\sum_mN_m$ and $p_m=N_m/N$.

\paragraph{Federated averaging (FedAvg) with private backbones and a shared head.}
A class of federated methods keeps each client's feature backbone local and averages only a shared
head~\citep{liang2020thinklocallyactglobally,10.1145/3545008.3545073}.
In our model, the shared head is the last linear layer $(W,b)$, with $W\in\R^{C\times P}$ and $b\in\R^C$, where $P$ is the
common feature dimension. Only $(W,b)$ is broadcast and averaged, while the backbone parameters $\theta_m$
stay local.
At each communication round $t$, the server broadcasts the current head $(W_t,b_t)$ to all clients.
Each client trains this head together with its private backbone on local data and uploads only the
resulting head $(\widetilde W_{m,t},\widetilde b_{m,t})$. The server then sets
$(W_{t+1},b_{t+1})=\sum_mp_m(\widetilde W_{m,t},\widetilde b_{m,t})$.

\paragraph{The unconstrained feature model.}
The UFM replaces the backbone features $H_m(X_m;\theta_m)\in\R^{P\times N_m}$ with free features
$H_m\in\R^{P\times N_m}$, giving the local objective
\begin{equation}
 \mathcal L_m(W,b,H_m)=\frac{1}{2N_m}\norm{WH_m+b\mathbf1^\top-Y_m}_F^2
 +\frac{\lambda_H}{2N_m}\norm{H_m}_F^2+\frac{\lambda_W}{2}\norm W_F^2,
 \label{sub:loss}
\end{equation}
with $\lambda_H,\lambda_W>0$; the bias is unregularized and the squared error sums over the $C$ outputs.
In the rounds above, each client now optimizes $H_m$ in place of $\theta_m$, and the server still averages
only $(W,b)$. The feature penalty is a common abstraction for standard penalties and normalizations on the backbone parameters, whose effects on the features can be complicated~\citep{Mixon2022,doi:10.1073/pnas.2103091118}. The centralized reference is another UFM with one head for the pooled data and minimizes
\begin{equation}
 \mathcal L_{\cen}(W,b,\{H_m\},\{Y_m\})=\sum_mp_m\mathcal L_m(W,b,H_m).
 \label{sub:pooled-loss}
\end{equation}
We compare the limit of the server's head Gram matrix $G_t=W_tW_t^\top$ with the Gram matrix of an
optimal head of this centralized objective.

\paragraph{Target moments and the fully active regime.}
Here we introduce and summarize quantities important for later analysis:
\begin{align}
&
\mu_m=\frac{Y_m\mathbf1}{N_m},
\quad
\mu_g=\sum_mp_m\mu_m,\\
&\Sigma_m=\frac{Y_mY_m^\top}{N_m}-\mu_m\mu_m^\top,
\quad
 \Sigma_{\mathrm{within}}=\sum_mp_m\Sigma_m,\quad
 \Sigma_\mu=\sum_mp_m(\mu_m-\mu_g)(\mu_m-\mu_g)^\top,\\
& \Sigma_{\cen}=\Sigma_{\mathrm{within}}+\Sigma_\mu.
 \label{sub:pooled-covariance}
\end{align}
For our theoretical analysis, we assume $P\ge C$ and
$\lambda_{\min}(\Sigma_m)>\lambda_H\lambda_W$ for every client (hence $N_m\ge C+1$). The same eigenvalue bound holds for $\Sigma_{\mathrm{within}}$ and $\Sigma_{\cen}$.
We refer to these conditions as the \emph{fully active regime}, under which every client's optimal head Gram matrix is positive definite.

\paragraph{Selecting locally optimal heads.}
In the fully active regime, the global minimizers of \eqref{sub:loss} share the same bias and head Gram matrix, while the head itself remains nonunique up to orthogonal transformations (Section~4.1)~\citep{andriopoulos2025neuralmultivariateregressionqualitative}. Optimality alone therefore does not determine the server update. To resolve this ambiguity, we add the proximal regularizer of FedProx~\citep{MLSYS2020_1f5fe839},
$\frac{\rho}{2}\|W-W_t\|_F^2 $,
to each client's objective. For a fixed broadcast \(W_t\) of full row rank, the heads of the global minimizers of the penalized objective converge, as \(\rho\downarrow0\), to the unique optimal head of the original objective that is closest to \(W_t\) in Frobenius norm (Lemma~1). In the next section, we analyze the dynamics of the shared head Gram matrix \(G_t\) induced by imposing this limiting procedure at every round, and also characterize the gap between its limit and the optimal head Gram matrix of centralized training.

\section{Theoretical analysis}
\label{sub:theory}

\subsection{The optimal parameters at each client}
\label{sub:client-optimum}
We first express the local optimization problem in terms of the head alone. Given a dataset $Y$ and fixed $W$, the minimization of the loss over $(b,H)$ gives
\[
 b^\star(W)=\mu,\qquad
 H^\star(W)=W^\top A^{-1}(Y-\mu\mathbf1^\top),
\]
where $\mu$ is the sample mean of $Y$ and $A=WW^\top+\lambda_HI_C$. Substituting these expressions into the loss yields the profiled objective
\begin{equation}
 F(W;T)=\frac{\lambda_H}{2}\Tr(TA^{-1})+\frac{\lambda_W}{2}\norm W_F^2,
 \label{sub:profile}
\end{equation}
where $T$ is the sample covariance of $Y$. This profiled objective is strictly convex with respect to the Gram matrix $G=WW^\top$, so the solution is unique and the corresponding stationary condition is $(G+\lambda_HI_C)^2=(\lambda_H/\lambda_W)T$. Hence, the solution is given by the following function:
\begin{equation}
\phi(T):=\alpha T^{1/2}-\lambda_HI_C,
 \quad \alpha=\sqrt{\frac{\lambda_H}{\lambda_W}}.
\end{equation}
This accords with the regression-UFM optimum of \citet{NEURIPS2024_e4748b6b} (Appendix~\ref{sub:profile-proof}).
Using this function $\phi$, the optimal head Gram matrices for the local and centralized objectives are written as  $G_m=\phi(\Sigma_m)$ and $G_{\cen}=\phi(\Sigma_{\cen})$, respectively. Similarly, $G_{\mathrm{within}}=\phi(\Sigma_{\mathrm{within}})$ is the optimal head Gram matrix of the averaged profiled objective $\sum_mp_mF(W;\Sigma_m)=F(W;\Sigma_{\mathrm{within}})$.

\subsection{Selecting the optimal head}
\label{sub:selection-rule}
As shown above, the local objective determines a unique optimal head Gram matrix $G_m$. However, since $F(W;T)$ depends on $W$ only through $WW^\top$, the optimal head remains nonunique up to right orthogonal transformations. We resolve this ambiguity by selecting the optimal head closest to the broadcast $W$:
\begin{equation}
 \Pi_m(W)=\operatorname*{arg\,min}_{U:UU^\top=G_m}\norm{U-W}_F^2,
 \qquad G=WW^\top\succ0.
 \label{sub:nearest-return}
\end{equation}
For any feasible $U$, the squared distance expands as
\[
 \norm{U-W}_F^2=\Tr G_m+\Tr G-2\Tr(UW^\top).
\]
Finding the closest optimal head therefore reduces to maximizing $\Tr(UW^\top)$, a rectangular Procrustes problem. Its minimum squared distance equals the squared BW distance between the two Gram matrices~\citep{BHATIA2019165}:
\[
d_{\mathrm{BW}}^2(G_{\mathrm{ref}},G_{\mathrm{tar}})
:= \Tr G_{\mathrm{ref}}+\Tr G_{\mathrm{tar}}
-2\Tr\!\left[(G_{\mathrm{ref}}^{1/2}G_{\mathrm{tar}}G_{\mathrm{ref}}^{1/2})^{1/2}\right].
\]
The following lemma gives the unique minimizing head and shows how it is selected by a vanishing head-only proximal penalty.
\begin{lemma}[Closest optimal head and proximal selection]
\label{sub:selection-main}
Let client $m$ be fully active and let the broadcast head $W$ satisfy $G=WW^\top\succ0$.
(i) The minimizer in \eqref{sub:nearest-return} is unique. It equals $\Pi_m(W)=T_m(G)W$ with
$T_m(G)=G^{-1/2}(G^{1/2}G_mG^{1/2})^{1/2}G^{-1/2}\succ0$, and
$\norm{\Pi_m(W)-W}_F^2=d_{\mathrm{BW}}^2(G,G_m)$.
(ii) For $\rho>0$, every global minimizer $(U_\rho,b_\rho,H_\rho)$ of
$\mathcal L_m(U,b,H)+\frac{\rho}{2}\norm{U-W}_F^2$ has $b_\rho=\mu_m$ and
$\mathcal L_m(U_\rho,b_\rho,H_\rho)-\min\mathcal L_m\le\frac{\rho}{2}\,d_{\mathrm{BW}}^2(G,G_m)$, and
$U_\rho\to\Pi_m(W)$ as $\rho\downarrow0$.
\end{lemma}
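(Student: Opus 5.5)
For part (i), I will reduce the problem to a rectangular Procrustes problem. By the expansion displayed before the lemma, minimizing $\norm{U-W}_F^2$ over $\{U:UU^\top=G_m\}$ is the same as maximizing $\Tr(UW^\top)$.

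Since $G_m\succ0$ and $P\ge C$, every feasible $U$ can be written as $U=G_m^{1/2}Q$, where $Q\in\R^{C\times P}$ has orthonormal rows, $QQ^\top=I_C$. Then
\[
\Tr(UW^\top)=\Tr\bigl(Q\,W^\top G_m^{1/2}\bigr).
\]
Let $B=G_m^{1/2}W$, a $C\times P$ matrix of full row rank, with thin SVD $B=V\Sigma R^\top$. The von Neumann trace inequality gives $\Tr(QB^\top)\le\Tr\Sigma=\Tr[(BB^\top)^{1/2}]$. Because $\Sigma\succ0$, equality holds if and only if $Q=VR^\top$. This gives uniqueness, with the maximizer equal to $(BB^\top)^{-1/2}B$, that is,
\[
U=G_m^{1/2}(G_m^{1/2}GG_m^{1/2})^{-1/2}G_m^{1/2}W .
\]

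I will then check that this matrix coincides with $T_m(G)W$. Both $G_m^{1/2}(G_m^{1/2}GG_m^{1/2})^{-1/2}G_m^{1/2}$ and $T_m(G)$ are the unique positive definite $X$ satisfying $XGX=G_m$. For $T_m(G)$ this is a direct computation. Uniqueness follows because $(G^{1/2}XG^{1/2})^2=G^{1/2}G_mG^{1/2}$ has a unique PD square root. The optimal value is $\Tr[(G_m^{1/2}GG_m^{1/2})^{1/2}]$. Its trace equals $\Tr[(G^{1/2}G_mG^{1/2})^{1/2}]$, since the two matrices under the square root are similar through $Z=G^{1/2}G_m^{1/2}$ via $ZZ^\top$ versus $Z^\top Z$. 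Hence the distance is $d_{\mathrm{BW}}^2(G,G_m)$.

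For part (ii), I will first profile out $(b,H)$ exactly as in Section~4.1; the proximal term does not involve $b$ or $H$. This yields $b_\rho=\mu_m$ and shows that $U_\rho$ minimizes
\[
F(U;\Sigma_m)+\tfrac\rho2\norm{U-W}_F^2 ,
\]
with $\min\mathcal L_m=\min_U F(U;\Sigma_m)=:F^\star$. Comparing against the candidate $\Pi_m(W)$, which attains $F^\star$, gives
\[
F(U_\rho)-F^\star+\tfrac\rho2\norm{U_\rho-W}^2\le\tfrac\rho2 d_{\mathrm{BW}}^2(G,G_m).
\]
This yields the stated suboptimality bound, since $\mathcal L_m(U_\rho,b_\rho,H_\rho)=F(U_\rho)$ at a global minimizer. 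It also yields $\norm{U_\rho-W}_F^2\le d_{\mathrm{BW}}^2(G,G_m)$.

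For convergence I will use a compactness argument. The $U_\rho$ are bounded, and $F(U_\rho)\to F^\star$. Take any limit point $\bar U$. By continuity, $F(\bar U)=F^\star$, and the uniqueness of the optimal Gram matrix (the strict convexity in $G$ from Section~4.1) forces $\bar U\bar U^\top=G_m$. In addition, $\norm{\bar U-W}^2\le d_{\mathrm{BW}}^2(G,G_m)$, which is the minimum over the feasible set. So $\bar U$ is a closest optimal head and equals $\Pi_m(W)$ by part (i). Since every limit point is the same, the whole family $U_\rho$ converges.

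I expect the main obstacle to be the equality case in the trace inequality that delivers uniqueness in (i), together with the rectangular parametrization $U=G_m^{1/2}Q$. I will handle both with the SVD argument above, using the full row rank of $B$. The step that $F(\bar U)=F^\star$ implies $\bar U\bar U^\top=G_m$ relies on the uniqueness statement already established in Section~4.1. One could worry that $F$ is not coercive in $U$, but the $\lambda_W$ term and the bound on $\norm{U_\rho-W}$ both ensure the family stays bounded.
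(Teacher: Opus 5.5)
Your proposal is correct and follows essentially the same route as the paper: the Procrustes reduction via $U=G_m^{1/2}Q$, the SVD trace bound and its equality case, and the $ZZ^\top$ versus $Z^\top Z$ trace identity for (i); profiling out $(b,H)$, comparing with $\Pi_m(W)$, and a compactness argument closed by uniqueness of the closest optimal head for (ii). The one difference is how you identify the maximizer $G_m^{1/2}(G_m^{1/2}GG_m^{1/2})^{-1/2}G_m^{1/2}W$ with $T_m(G)W$: you use uniqueness of the positive-definite solution of $XGX=G_m$, whereas the paper checks directly that $T_m(G)W$ is feasible and attains the bound. When you write it out, spell out that $QR=V$ together with $QQ^\top=I_C$ forces $Q=VR^\top$, and note that the penalized minimizers exist by coercivity of the $\lambda_W$ term.
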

Part~(i) follows from the classical Procrustes characterization of the BW distance~\citep{BHATIA2019165}, and guarantees that each client has a unique optimal head closest to the common broadcast. By choosing these heads, the selection rule resolves the orthogonal ambiguity and makes the server update well defined. Part~(ii) shows that the head-only proximal penalty, of the form used in FedProx~\citep{MLSYS2020_1f5fe839}, selects $\Pi_m(W)$ as $\rho\downarrow0$, provided that the penalized objective is globally minimized for each $\rho>0$. We impose this limiting procedure at every round, before the server averages the returned heads. The proof is given in Appendix~\ref{sub:selection-proof}.

\subsection{Convergence to the BW barycenter}
\label{sub:bw-endpoint}
Under the selection rule, averaging the returned heads induces a closed recursion for the shared Gram matrix $G_t=W_tW_t^\top$. The following theorem shows that this recursion converges to the BW barycenter of the clients' optimal Gram matrices.
\begin{theorem}[Convergence to the Bures--Wasserstein barycenter]
\label{sub:oracle}
In the fully active regime of Section~\ref{sub:formulation}, let
$G_0=W_0W_0^\top\succ0$ and suppose
each client returns $(\Pi_m(W_t),\mu_m)$. Then every $G_t$ is positive definite,
$b_t=\mu_g$ for $t\ge1$, and the autonomous Gram dynamics are
\begin{equation}
 G_{t+1}=\mathcal G(G_t):=
 G_{t}^{-1/2}\left[\sum_mp_m(G_{t}^{1/2}G_mG_{t}^{1/2})^{1/2}\right]^2G_{t}^{-1/2}.
 \label{sub:gram-map}
\end{equation}
For every such initialization, $G_t$ converges to the unique positive-definite BW barycenter
$G_\star$, characterized by
\begin{equation}
 G_\star=\operatorname*{arg\,min}_{X\succ0}\sum_mp_m d_{\mathrm{BW}}^2(X,G_m),
 \qquad
 G_\star=\sum_mp_m(G_{\star}^{1/2}G_mG_{\star}^{1/2})^{1/2}.
 \label{sub:barycenter}
\end{equation}
\end{theorem}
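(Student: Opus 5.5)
The plan is to reduce the head recursion to a Gram-matrix recursion using Lemma~\ref{sub:selection-main}(i), and then import the convergence theory of the fixed-point iteration of \citet{ALVAREZESTEBAN2016744}.

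\textbf{Bias and Gram recursion.} For the bias, each client returns $\mu_m$. So $b_{t+1}=\sum_m p_m\mu_m=\mu_g$ for every $t\ge0$.

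For the head, Lemma~\ref{sub:selection-main}(i) gives $\Pi_m(W_t)=T_m(G_t)W_t$. Averaging the returned heads yields $W_{t+1}=S_tW_t$ with
\[
S_t=\sum_m p_mT_m(G_t)=G_t^{-1/2}K_tG_t^{-1/2},\qquad K_t=\sum_mp_m(G_t^{1/2}G_mG_t^{1/2})^{1/2}.
\]
Then
\[
G_{t+1}=S_tG_tS_t=G_t^{-1/2}K_t\,G_t^{-1/2}G_tG_t^{-1/2}K_tG_t^{-1/2}=G_t^{-1/2}K_t^2G_t^{-1/2},
\]
which is \eqref{sub:gram-map}. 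The recursion depends only on $G_t$, so it is autonomous.

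\textbf{Positive definiteness.} Argue by induction. If $G_t\succ0$, then each $G_t^{1/2}G_mG_t^{1/2}\succ0$, since $G_m\succ0$ in the fully active regime. Hence $K_t\succ0$ and $G_{t+1}\succ0$. As a consequence, $W_{t+1}$ has full row rank, so Lemma~\ref{sub:selection-main} applies again at the next round.

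\textbf{Convergence.} The map $\mathcal G$ is exactly the fixed-point iteration of \citet{ALVAREZESTEBAN2016744} for the Wasserstein barycenter of the centered Gaussians $N(0,G_m)$ with weights $p_m$, since $d_{\mathrm{BW}}$ is the 2-Wasserstein distance between such Gaussians. I would invoke their results to obtain the following:
\begin{itemize}
\item the barycenter exists and is unique and positive definite when at least one $G_m\succ0$ (here all are);
\item it is characterized by the fixed-point equation $G_\star=\sum_mp_m(G_\star^{1/2}G_mG_\star^{1/2})^{1/2}$, which is equivalent to $\mathcal G(G_\star)=G_\star$;
\item the iteration converges to $G_\star$ from every positive-definite start.
\end{itemize}

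\textbf{Main obstacle.} The main risk is matching hypotheses exactly, so I would sketch the mechanism to check it. Let $V(X)=\sum_mp_md_{\mathrm{BW}}^2(X,G_m)$. Via the optimal transport maps $T_m$, one shows
\[
V(G_t)-V(G_{t+1})\ge d_{\mathrm{BW}}^2(G_t,G_{t+1})=\Tr(G_t)-2\Tr K_t+\Tr(G_{t+1}),
\]
using that $S_t$ is the barycentric average of transport maps.

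Next, the iterates stay in a compact set $\{aI\preceq X\preceq bI\}$. The upper bound comes from $\Tr G_{t+1}\le\sum_m p_m\Tr G_m$, which follows by convexity of the norm. The lower bound uses $\det$ bounds and the Minkowski determinant inequality, as in their Theorem~4.2; this part I would cite rather than reprove.

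Since $V(G_t)$ is monotone, $d_{\mathrm{BW}}(G_t,G_{t+1})\to0$. Continuity of $\mathcal G$ on the compact set then shows that every limit point is a fixed point. Any positive-definite fixed point is a stationary point of $V$ satisfying \eqref{sub:barycenter}, and the fixed point is unique, being the unique minimizer by the strict convexity result cited. Hence the whole sequence converges to $G_\star$.
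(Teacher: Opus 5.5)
Your proposal is correct and follows the paper's proof. You reduce the head update to $W_{t+1}=S(G_t)W_t$ via Lemma~\ref{sub:selection-main}(i), derive the closed Gram recursion and positive definiteness by induction, average the biases to get $\mu_g$, and invoke the barycenter fixed-point convergence theorem of \citet{ALVAREZESTEBAN2016744}. Your extra sketch of that theorem's mechanism is sound, whereas the paper only cites the result: the descent inequality $V(G_t)-V(G_{t+1})\ge d_{\mathrm{BW}}^2(G_t,G_{t+1})$, the trace and Minkowski-determinant bounds, and the identification of positive-definite fixed points with the unique minimizer of $V$ (which is strictly convex on the positive-definite cone with Euclidean gradient $I_C-S(X)$).
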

\begin{proof}[Proof sketch]
By Lemma~\ref{sub:selection-main}, $\Pi_m(W_t)=T_m(G_t)W_t$, where each $T_m(G)$ is positive definite and
satisfies $T_m(G)\,G\,T_m(G)=G_m$.
All clients act on the same broadcast, so their average is
$W_{t+1}=S(G_t)W_t$, where $S(G)=\sum_mp_mT_m(G)\succ0$.
Consequently $G_{t+1}=S(G_t)G_tS(G_t)\succ0$.
Substituting the expression for $T_m$ gives \eqref{sub:gram-map}, while
averaging the local biases gives $\mu_g$.

The resulting Gram map is the barycenter iteration of
\citet{ALVAREZESTEBAN2016744}. Their convergence theorem applies because the clients'
optimal Gram matrices are positive definite, the weights are positive and sum to one, and
$G_0\succ0$. It yields the limit and characterization in
\eqref{sub:barycenter}. Appendices~\ref{sub:selection-proof}--\ref{sub:outer-proof}
give the projection proof and the full correspondence with that theorem.
\end{proof}

The limiting Gram matrix is therefore the BW barycenter of the local optimal Gram matrices $G_m=\phi(\Sigma_m)$. It depends only on the client covariances, aggregation weights,
and regularization parameters, and is independent of the choice of $G_0\succ0$ and the feature dimension $P\ge C$. Centralized training instead yields $G_{\cen}=\phi(\Sigma_{\cen})$, which generally differs from
this barycenter.

\subsection{The gap to centralized training and its correction}
\label{sub:consequences}
The barycenter fixed-point equation yields the following
decomposition of the gap $G_{\cen}-G_\star$ into three
positive-semidefinite terms.
\begin{proposition}[Three positive-semidefinite terms of the gap]
\label{sub:channels}
Under the assumptions of Theorem~\ref{sub:oracle}, choose any $W_\star$ satisfying $W_\star W_\star^\top=G_\star$ and
set $U_m^\star=\Pi_m(W_\star)$. Then $\sum_mp_mU_m^\star=W_\star$ and
\begin{equation}
 G_{\cen}-G_\star=\mathcal M_\mu+\mathcal M_\Sigma+\mathcal M_A.
\label{sub:three-channel}
\end{equation}
Here $\mathcal M_\mu=G_{\cen}-G_{\mathrm{within}}$,
$\mathcal M_\Sigma=G_{\mathrm{within}}-\sum_mp_mG_m$,
and $\mathcal M_A=\sum_mp_mG_m-G_\star$.
All three matrices are positive semidefinite (PSD). Hence
$G_\star\preceq\sum_mp_mG_m\preceq
G_{\mathrm{within}}\preceq G_{\cen}$.
The gap vanishes if and only if all client means agree and all client
covariances agree. Moreover,
\begin{equation}
 \Tr\mathcal M_A=\sum_mp_m d_{\mathrm{BW}}^2(G_\star,G_m).
 \label{sub:intrinsic-variance}
\end{equation}
\end{proposition}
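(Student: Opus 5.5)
The proof splits into five pieces: the fixed-point identity, the telescoping decomposition, positive semidefiniteness of each term, the equality case, and the trace formula. Throughout, write $\alpha=\sqrt{\lambda_H/\lambda_W}$ and use $\phi(T)=\alpha T^{1/2}-\lambda_H I_C$.

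\textbf{Fixed-point identity.} By Lemma~\ref{sub:selection-main}(i), $U_m^\star=T_m(G_\star)W_\star$. Summing gives $\sum_mp_mU_m^\star=S(G_\star)W_\star$, where
\[
S(G_\star)=G_\star^{-1/2}\Big[\sum_mp_m(G_\star^{1/2}G_mG_\star^{1/2})^{1/2}\Big]G_\star^{-1/2}.
\]
By the fixed-point characterization \eqref{sub:barycenter} from Theorem~\ref{sub:oracle}, the bracket equals $G_\star$, so $S(G_\star)=I_C$ and $\sum_mp_mU_m^\star=W_\star$.

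\textbf{Decomposition.} The identity \eqref{sub:three-channel} is a telescoping sum of the three defined terms, so it needs no argument.

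\textbf{Positive semidefiniteness.}
\begin{itemize}
\item[] $\mathcal M_\mu=\alpha\big(\Sigma_{\cen}^{1/2}-\Sigma_{\mathrm{within}}^{1/2}\big)$. Since $\Sigma_{\cen}=\Sigma_{\mathrm{within}}+\Sigma_\mu\succeq\Sigma_{\mathrm{within}}$ and $X\mapsto X^{1/2}$ is operator monotone, $\mathcal M_\mu\succeq0$.
\item[] $\mathcal M_\Sigma=\alpha\big[(\sum_mp_m\Sigma_m)^{1/2}-\sum_mp_m\Sigma_m^{1/2}\big]$, because the $-\lambda_H I_C$ terms cancel when the weights sum to one. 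This is PSD because the square root is operator concave. I would either cite this directly or derive it from $(\sum p_m\Sigma_m^{1/2})^2\preceq\sum p_m\Sigma_m$ (operator convexity of $t^2$) followed by operator monotonicity of the square root.
\item[] For $\mathcal M_A$, use $G_m=U_m^\star U_m^{\star\top}$ and $\sum_mp_mU_m^\star=W_\star$. Then
\[
\sum_mp_mG_m-G_\star=\sum_mp_mU_m^\star U_m^{\star\top}-\Big(\sum_mp_mU_m^\star\Big)\Big(\sum_mp_mU_m^\star\Big)^{\top}=\sum_mp_m(U_m^\star-W_\star)(U_m^\star-W_\star)^\top\succeq0.
\]
This is a variance identity. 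Taking traces gives $\Tr\mathcal M_A=\sum_mp_m\|U_m^\star-W_\star\|_F^2$, which equals $\sum_mp_md_{\mathrm{BW}}^2(G_\star,G_m)$ by Lemma~\ref{sub:selection-main}(i). This proves \eqref{sub:intrinsic-variance}. The ordering chain follows immediately from the three PSD statements.
\end{itemize}

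\textbf{Equality case.} This is the step that needs the most care. Since each term is PSD, the gap vanishes if and only if all three terms vanish.
\begin{itemize}
\item[] $\mathcal M_\mu=0$ gives $\Sigma_{\cen}^{1/2}=\Sigma_{\mathrm{within}}^{1/2}$, so $\Sigma_\mu=0$. Since every $p_m>0$, this forces $\mu_m=\mu_g$ for all $m$.
\item[] $\mathcal M_A=0$ forces $U_m^\star=W_\star$ for all $m$, hence $G_m=G_\star$ for all $m$. Because $\phi$ is injective on positive-definite matrices, all $\Sigma_m$ agree.
\end{itemize}
The $\mathcal M_\Sigma$ equality case is therefore not needed. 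Conversely, suppose all means and all covariances agree. Then $\Sigma_\mu=0$, so $G_{\cen}=G_{\mathrm{within}}$. Also $G_m\equiv\phi(\Sigma)=G_{\mathrm{within}}$, and the barycenter of identical matrices is that matrix by uniqueness, so every term vanishes.

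The only potentially delicate point is the operator concavity of the square root. It is standard (Löwner–Heinz), and I would cite it rather than reprove it.
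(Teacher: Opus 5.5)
Your proposal is correct and follows the paper's proof essentially step for step: $S(G_\star)=I_C$ from the fixed-point equation, the weighted scatter identity for $\mathcal M_A$, square-root monotonicity for $\mathcal M_\mu$, concavity of the square root for $\mathcal M_\Sigma$, the zero set via $\mathcal M_\mu=0$, $\mathcal M_A=0$ and injectivity of $\phi$, and the trace formula from Lemma~\ref{sub:selection-main}(i). Your optional derivation of $\mathcal M_\Sigma\succeq0$ is a valid and slightly more elementary substitute for the paper's citation of operator concavity; it uses the identity $\sum_mp_m\Sigma_m-\bigl(\sum_mp_m\Sigma_m^{1/2}\bigr)^2=\sum_mp_m\bigl(\Sigma_m^{1/2}-\sum_np_n\Sigma_n^{1/2}\bigr)^2\succeq0$ followed by monotonicity of the square root.
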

\begin{proof}[Proof sketch]
The fixed-point equation implies $\sum_m p_m T_m(G_\star)=I_C$, and hence $\sum_m p_m U_m^\star=W_\star$. Therefore, $\sum_mp_mG_m-G_\star
=\sum_mp_m(U_m^\star-W_\star)(U_m^\star-W_\star)^\top$.
Add and subtract $G_{\mathrm{within}}$ and $\sum_mp_mG_m$
in $G_{\cen}-G_\star$ to obtain \eqref{sub:three-channel}.
Square-root monotonicity and concavity make $\mathcal M_\mu$ and $\mathcal M_\Sigma$ PSD, respectively; $\mathcal M_A$ is a covariance matrix.

A zero total gap forces every PSD term to vanish. The mean term then forces
the means to agree, and zero averaging covariance forces the returned heads,
hence the client covariances, to agree. The converse follows by substitution.
Taking traces and using the distance identity of Lemma~\ref{sub:selection-main} gives the BW
variance in \eqref{sub:intrinsic-variance}. Appendix~\ref{sub:channels-proof} provides these steps in
detail and bounds this variance by the pairwise BW distances between the $G_m$.
\end{proof}

The term $\mathcal M_\mu$ captures the scatter of client means
removed by clientwise centering, while $\mathcal M_\Sigma$
is the Jensen gap associated with the operator concavity of $\phi$.
The term $\mathcal M_A$ is the weighted scatter matrix of the
selected heads, and its trace equals the BW variance of the
clients' optimal Gram matrices.
Thus, resolving the orthogonal ambiguity does not by itself
eliminate the gap to centralized training.

To recover the centralized Gram matrix, we make every client's profiled objective equal to $F(W;\Sigma_{\cen})$.
Since $F(W;T)$ is affine in $T$, this can be achieved by adding the head-only moment correction:
\begin{equation}
 C_m(W)=\frac{\lambda_H}{2}\Tr\{(\Sigma_{\cen}-\Sigma_m)
                   (WW^\top+\lambda_HI_C)^{-1}\}.
 \label{sub:correction-term}
\end{equation}
It satisfies $F(W;\Sigma_m)+C_m(W)=F(W;\Sigma_{\cen})$ for every $W$.
Thus all corrected clients have the same optimal Gram matrix $G_{\cen}$.

\begin{proposition}[One-round recovery of the centralized Gram matrix]
\label{sub:correction}
In the fully active regime of Section~\ref{sub:formulation}, suppose
$W_tW_t^\top\succ0$. If every
client globally minimizes $\mathcal L_m+C_m$ and returns the optimal head closest to
$W_t$, the returned head weight matrices $\widetilde W_{m,t}$ coincide, so $G_{t+1}=G_{\cen}$, and the returned biases
$\mu_m$ average to $b_{t+1}=\mu_g$.
\end{proposition}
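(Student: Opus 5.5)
The plan is to redo the profiling argument of Section~\ref{sub:client-optimum} for the corrected objective $\mathcal L_m+C_m$. Since $C_m$ depends only on $W$, minimizing over $(b,H_m)$ for fixed $W$ is unchanged. It still gives $b^\star=\mu_m$ and $H_m^\star(W)=W^\top A^{-1}(Y_m-\mu_m\mathbf1^\top)$, and the profiled objective becomes $F(W;\Sigma_m)+C_m(W)$. Because $F(W;T)$ is affine in $T$ through $\frac{\lambda_H}{2}\Tr(TA^{-1})$, this profiled objective equals $F(W;\Sigma_{\cen})$ for every $W$, as noted after \eqref{sub:correction-term}.

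Hence the global minimizers of $\mathcal L_m+C_m$ are exactly the triples $(U,\mu_m,H_m^\star(U))$ with $U$ a global minimizer of $F(\cdot;\Sigma_{\cen})$. I will check that the infimum over $(b,H)$ is attained, which holds because the problem is a strictly convex quadratic in $H$ once $b$ is eliminated. Applying the strict convexity in $G$ and the stationary condition from Section~\ref{sub:client-optimum} with $T=\Sigma_{\cen}$ shows these minimizers are the $U$ with $UU^\top=\phi(\Sigma_{\cen})=G_{\cen}$. Here I use that the fully active bound $\lambda_{\min}(\Sigma_{\cen})>\lambda_H\lambda_W$ makes $G_{\cen}\succ0$, so $G_{\cen}$ has rank $C\le P$ and is realizable.

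For the selection step I will apply Lemma~\ref{sub:selection-main}(i) with $G_m$ replaced by $G_{\cen}$. The proof of that part uses only that the target Gram matrix is positive definite and $G=W_tW_t^\top\succ0$, so it carries over verbatim. It gives the unique closest optimal head $\widetilde W_{m,t}=G_t^{-1/2}(G_t^{1/2}G_{\cen}G_t^{1/2})^{1/2}G_t^{-1/2}W_t$. This expression does not depend on $m$, so all returned heads coincide.

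The server average is then this common matrix, and its Gram matrix is $T\,G_t\,T$ with $T=G_t^{-1/2}(G_t^{1/2}G_{\cen}G_t^{1/2})^{1/2}G_t^{-1/2}$. Expanding gives $G_t^{-1/2}(G_t^{1/2}G_{\cen}G_t^{1/2})G_t^{-1/2}=G_{\cen}$. The biases average to $\sum_mp_m\mu_m=\mu_g$.

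The main obstacle is to justify carefully that the profiled reduction is valid for the corrected objective, and that Lemma~\ref{sub:selection-main}(i) applies with $G_{\cen}$ in place of $G_m$. I will do this by restating the Procrustes argument, which maximizes $\Tr(UW^\top)$ over $UU^\top=G_{\cen}$ with uniqueness coming from the invertibility of $G$, in a form that refers only to the target Gram matrix. The rest is direct substitution.
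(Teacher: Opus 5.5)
Your proposal is correct and follows essentially the same route as the paper's proof (Appendix~\ref{sub:correction-recovery-proof}): you profile the corrected objective to $F(W;\Sigma_{\cen})$, identify its optimal heads as the factors of $G_{\cen}\succ0$, and use uniqueness of the closest factor to conclude that every client returns the same head $\Pi_{G_{\cen}}(W_t)$, whose Gram matrix is $G_{\cen}$, while the biases average to $\mu_g$. You need not restate the Procrustes argument, because Lemma~\ref{sub:selection} in the appendix is already stated for an arbitrary positive-definite target $G_{\mathrm{tar}}$.
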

\begin{proof}[Proof sketch]
The corrected clients have the same optimal Gram matrix $G_{\cen}$ and receive the same broadcast.
Uniqueness of the closest optimal head makes their returned weight matrices $\widetilde W_{m,t}$ identical, while
their biases remain $\mu_m$. Averaging proves the claim.
Appendix~\ref{sub:correction-proof} also verifies existence of the corrected
global minimizers.
\end{proof}

The correction requires a one-time exchange of client means
and covariances, with $O(C^2)$ entries per client.
The local feature solutions still use clientwise-centered targets,
so matching the head Gram matrix and aggregate bias need not
reproduce centralized predictions
(Appendix~\ref{sub:correction-proof}).

\label{sub:body-end}
\section{Experiments}
\label{sec:experiments}
We first check the UFM predictions numerically, then test whether they describe head geometry and dynamics in trained networks.

\subsection{UFM numerical experiments}
\label{sec:ufm-numerical}
\begin{figure}[!t]
 \centering
 \includegraphics[width=\linewidth]{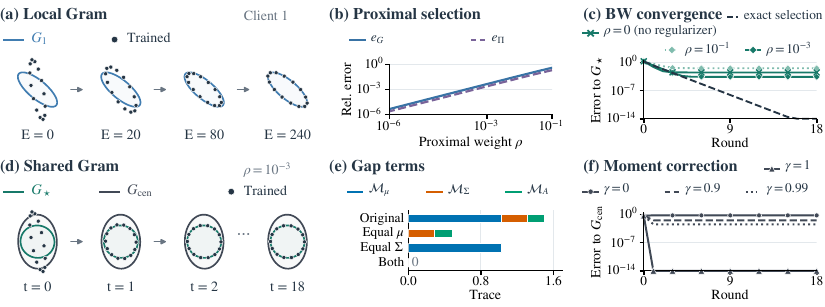}
 \caption{\textbf{UFM numerical checks on one fixed instance} ($C=P=2$, $M=3$, $\lambda_H=\lambda_W=0.1$; Appendix~\ref{sub:num-common}).
 Ellipses $\{G^{1/2}u:\norm u_2=1\}$ represent Gram matrices $G$, and dots trace trained ones.
 $E$ counts full-batch local steps, and $\gamma$ scales the moment correction.
 Errors below $10^{-14}$ are drawn at the axis floor.}
 \label{fig:ufm-common-instance}
\end{figure}

Figure~\ref{fig:ufm-common-instance} illustrates the theoretical results on one instance with two targets and three clients.
Panel (a) shows joint optimization of $(W,b,H)$ bringing the local Gram matrix toward $G_m$.
Panel (b) shows the relative errors to the optimal Gram ($e_G$) and the selected head ($e_\Pi$) decreasing as $\rho$ decreases, as predicted by Lemma~\ref{sub:selection-main}.

Panel (c) compares the shared Gram matrices obtained by exact selection
and by numerical optimization with and without the proximal penalty.
Exact selection converges to $G_\star$, as predicted by
Theorem~\ref{sub:oracle}.
With numerically optimized heads, the final shared Gram is closer to
$G_\star$ at $\rho=10^{-3}$ than without the penalty.
Panel (d) shows this $\rho=10^{-3}$ trajectory as ellipses:
the shared Gram approaches $G_\star$, while a gap to $G_{\cen}$ remains.
Appendix~\ref{sub:num-common} specifies the numerical objectives,
solvers and initializations.

Panel (e) checks the gap decomposition of Proposition~\ref{sub:channels}.
Equalizing client means removes $\mathcal M_\mu$, equalizing covariances removes $\mathcal M_\Sigma$ and $\mathcal M_A$, and equalizing both closes the gap.
Panel (f) applies $\gamma C_m$ under exact selection: full correction ($\gamma=1$) recovers $G_{\cen}$ in one round, as Proposition~\ref{sub:correction} predicts, whereas partial correction leaves a gap.
Appendix~\ref{sub:numerical} repeats these checks on random instances of several sizes.

\subsection{DNN experiments}
\label{sec:dnn}
\begin{figure}[!t]
 \centering
 \includegraphics[width=\linewidth]{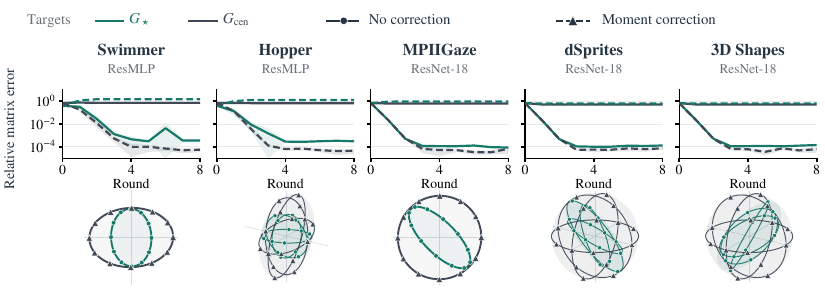}
 \caption{\textbf{Proximal training with and without moment correction} ($\rho=10^{-3}$). Colors indicate the reference Gram; solid and dashed curves denote training without and with correction. Curves show seed means $\pm1$ standard deviation. Below, outlines show $G_\star$ and $G_{\cen}$, and markers show final Grams for seed 0. Figure~\ref{fig:dnn-selection-appendix} gives the other five settings.}
 \label{fig:dnn-selection}
\end{figure}
We test whether trained networks approach $G_\star$ without correction and $G_{\cen}$ with correction, and whether the proximal penalty improves agreement with these predictions.

\subsubsection{Experimental setting}
\paragraph{Datasets and models.}
We use five tabular and five image regression datasets.
The tabular datasets are Beijing Multi-Site Air Quality~\citep{beijing_multi-site_air_quality_501,10.1098/rspa.2017.0457} and four MuJoCo tasks (Swimmer, Hopper, HalfCheetah and Walker; \citealp{6386109}) from JAT~\citep{gallouedec2024jacktradesmastersome}.
The image datasets are MPIIGaze~\citep{Zhang_2015_CVPR}, 300W-LP~\citep{Zhu_2016_CVPR}, dSprites~\citep{dsprites17}, 3D Shapes~\citep{3dshapes18} and CIFAR-geo, constructed from CIFAR\nobreakdash-10~\citep{Krizhevsky09learningmultiple}.
We partition the data to give clients different target means and covariances (Appendix~\ref{app:dnn-datasets}).
The backbones are ResMLP~\citep{NEURIPS2025_93da65de} for tabular data and ResNet-18~\citep{He_2016_CVPR} for images.

\paragraph{Training conditions.}
Each client trains its head and private backbone using minibatch AdamW on \eqref{sub:loss}.
Here $H_m$ contains the backbone outputs, whose squared norms are penalized as in the UFM.
We use $E=4000$ local epochs per round to approach local optimality and average only the heads over eight rounds.
Ordinary training uses $\rho=0$; proximal training adds $\rho\|W-W_t\|_F^2/2$ with $\rho=10^{-3}$ to approximate the selection rule.
Each is run with and without $C_m$, giving four procedures, with five seeds $\{0,1,2,3,4\}$ per dataset (Appendix~\ref{app:exp-overview}).

\paragraph{Evaluation.}
We compute $G_\star$ and $G_{\cen}$ from training-target moments, client weights and regularization, independently of the trained networks.
Relative matrix error $e_F(G,G')=\|G-G'\|_F/\|G'\|_F$ measures agreement in scale and shape.
Endpoint errors compare $G_8$ with $G_\star$ without correction and with $G_{\cen}$ under correction.
For uncorrected runs, trajectory errors instead compare with \eqref{sub:gram-map} from the same initial Gram (Appendix~\ref{app:anchor-dnn}).

\subsubsection{Experimental results}
\label{sec:dnn-results}
\paragraph{Convergence toward the BW prediction.}
\label{sec:dnn-selection}
Without correction, proximal training approaches $G_\star$ while remaining far from $G_{\cen}$ (Figure~\ref{fig:dnn-selection}).
Across the 50 dataset--seed combinations, the median endpoint error to $G_\star$ is $2.90\times10^{-4}$, compared with $1.63\times10^{-2}$ for ordinary training.
The proximal penalty improves endpoint agreement in every pair, and the proximal trajectories closely follow the predicted Gram iteration (Appendix~\ref{app:anchor-dnn}).

\paragraph{Moment correction.}
\label{sec:dnn-correction}
With correction, the shared Gram instead approaches $G_{\cen}$ (Figure~\ref{fig:dnn-selection}).
Across the same 50 combinations, the median endpoint error is $7.08\times10^{-5}$ with the proximal penalty and $5.06\times10^{-3}$ without it (Appendix~\ref{app:correction-rounds}).
Figure~\ref{fig:dnn-mpiigaze} illustrates these different endpoints and the improvement from the proximal penalty on MPIIGaze.

\begin{figure}[!t]
 \centering
 \includegraphics[width=\linewidth]{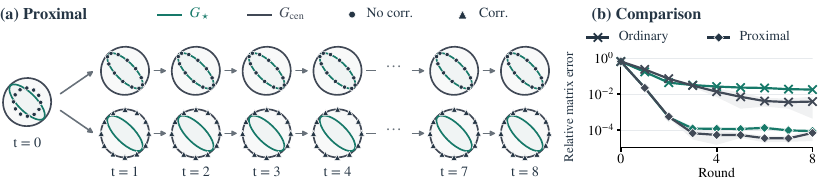}
 \caption{\textbf{Gram evolution on MPIIGaze.} (a) Seed-0 Gram matrices under proximal training. (b) Relative errors to $G_\star$ without correction (green) and to $G_{\cen}$ with correction (graphite gray), with five-seed means $\pm1$ standard deviation.}
 \label{fig:dnn-mpiigaze}
\end{figure}

\paragraph{Additional results.}
Clientwise target centering removes the mean term but leaves a gap, consistent with Proposition~\ref{sub:channels} (Appendix~\ref{app:moment-dnn}).
Proximal training agrees best with $G_\star$ at an intermediate penalty weight (Appendix~\ref{app:proximal-weight}).
Aligning trained heads to the broadcast also improves agreement with $G_\star$ and lowers the objective after averaging compared with the same unaligned heads (Appendix~\ref{app:alignment}).
The correspondence persists across other architectures, larger training sets and, with sufficient local training, full-model averaging (Appendix~\ref{app:dnn-scope}).

\section{Discussion}
\label{sec:discussion}

\paragraph{Limitations.}
Our main analysis assumes that clients reach their local optima
in every round, unlike typical federated training.
It also assumes the fully active regime and does not cover
rank-deficient optimal head Gram matrices.
The UFM does not model features of unseen inputs, so our analysis
provides no test-error guarantees.

\paragraph{Perspectives.}
To partially resolve the above limitations, we performed additional experiments on test
error and an analysis of finite local training.
Compared with ordinary training, proximal training yields comparable
test error on the image datasets, while proximal training with
moment correction yields a modest median reduction
(Appendix~\ref{app:prediction}).
For finite local training, we studied profiled updates, which minimize
features and bias exactly before each head step.
For small total step length $\eta E$, we established a locally
attracting Gram fixed point near $G_{\mathrm{within}}$, the fixed
point for one local step per round, and quantify its first-order
displacement (Appendix~\ref{sub:supplementary-analysis}).
Numerical checks support the small-step predictions; a separate
sweep over local step counts shows a gradual shift toward the BW
barycenter (Appendix~\ref{sub:num-finite}).

\paragraph{Conclusion.}
Under the proposed selection rule, we characterized the UFM Gram dynamics
through the BW barycenter, decomposed the gap to centralized
training, and recovered the centralized optimal Gram in one round
by moment correction.
UFM and neural-network experiments support the predicted Gram
behavior.

\newpage
\subsection*{AI use statement}
In this work, we used generative AI tools to assist with translation and to suggest ideas for the proofs of our mathematical claims.
Additionally, we used them to suggest improvements to the English writing, to check and adjust the \LaTeX{} formatting, to search for related work beyond the studies already known to us so that the paper is positioned accurately, to search for related theoretical results, and to review and organize code.
The authors reviewed and approved all revised text and verified all core theoretical results presented in this paper.
We take responsibility for the final content of this work, including text, claims or artifacts produced with the aid of generative AI.

\subsection*{Reproducibility statement}
\paragraph{Theory.}
Section~\ref{sub:formulation} defines the federated UFM, including the local and centralized objectives and the fully active conditions assumed by every result, and Section~\ref{sub:theory} states each result with its assumptions.
The appendices give complete proofs: Appendix~\ref{sub:bw-proof} derives the optimal head Gram matrix and proves Lemma~\ref{sub:selection-main} and Theorem~\ref{sub:oracle}, Appendix~\ref{sub:channels-proof} proves Proposition~\ref{sub:channels}, and Appendix~\ref{sub:correction-proof} proves Proposition~\ref{sub:correction}.
Appendix~\ref{sub:supplementary-analysis} states and proves Theorem~\ref{sub:finite} on finite local steps.

\paragraph{Experiments.}
Appendix~\ref{sub:numerical} describes how the UFM instances are generated and solved, gives the fixed instance of Figure~\ref{fig:ufm-common-instance} with its random seed, and repeats each numerical check on many random instances of several sizes.
For the DNN experiments, Appendix~\ref{app:exp-details} describes the datasets with their preprocessing, client partitions and licenses (Appendix~\ref{app:dnn-datasets}), the architectures, the federated training procedure with all hyperparameters (Appendix~\ref{app:return-protocol}), the evaluation (Appendix~\ref{app:local-screen}), and all runs together with the compute used (Appendix~\ref{app:exp-overview}).
The fully active condition holds in every reported DNN experiment (Appendix~\ref{app:return-protocol}), and the predictions $G_\star$ and $G_{\cen}$ are computed from the training-target moments, client weights and regularization parameters alone, independently of the trained networks.
Each DNN experiment is repeated over random seeds: five seeds $\{0,1,2,3,4\}$ for the main experiments and at least the three seeds $\{0,1,2\}$ for each supplementary experiment (Appendix~\ref{app:exp-overview}).
We report seed means with standard-deviation bands, medians over all dataset--seed combinations, and paired comparisons within each combination; Appendix~\ref{app:supp-dnn} reports the results for every setting.
\bibliography{refs}
\bibliographystyle{iclr2027_conference}
\appendix
\clearpage
\section*{Appendix contents}
Appendices~\ref{sub:bw-proof}--\ref{sub:correction-proof} give the proofs of
the main results. Appendices~\ref{sub:numerical}--\ref{app:supp-dnn}
report the UFM and DNN experiments. Appendix~\ref{sub:supplementary-analysis}
collects supplementary analyses and their numerical checks.

\makeatletter
\let\appendix@orig@addcontentsline\addcontentsline
\begingroup
\setcounter{tocdepth}{2}
\setlength{\parskip}{0pt}
\@starttoc{apc}
\endgroup
\def\addcontentsline#1#2#3{%
  \def\appendix@destination{#1}%
  \def\appendix@toc{toc}%
  \ifx\appendix@destination\appendix@toc
    \addtocontents{apc}{\protect\contentsline{#2}{#3}{\thepage}{\@currentHref}%
      \protected@file@percent}%
  \fi
  \appendix@orig@addcontentsline{#1}{#2}{#3}%
}
\makeatother
\clearpage

\section{From local optima to the BW limit}
\label{sub:bw-proof}
This appendix gives the proofs for
Sections~\ref{sub:client-optimum}--\ref{sub:bw-endpoint}.
We use the fully active assumptions of Section~\ref{sub:formulation}, together with each result's additional hypotheses.

\subsection{Profiling and the optimal head Gram matrix}
\label{sub:profile-proof}
We derive the profile and the local and centralized optimal Gram matrices
used in Section~\ref{sub:client-optimum}.
For a target matrix $Y\in\R^{C\times N}$, write
\[
 \mu=Y\mathbf1/N,\qquad
 \widetilde Y=Y-\mu\mathbf1^\top,\qquad
 T=\widetilde Y\widetilde Y^\top/N,\qquad
 A=WW^\top+\lambda_HI_C,
\]
where $\mathbf1\in\R^N$ is the all-ones vector. At fixed $(W,b)$, the minimizer of \eqref{sub:loss} with respect to $H$ is unique thanks to the strict
convexity and takes
\[
 H(W,b)=(W^\top W+\lambda_HI_P)^{-1}W^\top(Y-b\mathbf1^\top)
       =W^\top A^{-1}(Y-b\mathbf1^\top).
\]
Substitution into the loss gives
\[
 \frac{\lambda_H}{2}\Tr(TA^{-1})
 +\frac{\lambda_H}{2}(b-\mu)^\top A^{-1}(b-\mu)
 +\frac{\lambda_W}{2}\norm W_F^2.
\]
Since $A\succ0$, the unique minimizing bias is $b=\mu$, giving
$H^\star(W)=W^\top A^{-1}\widetilde Y$ and the profile \eqref{sub:profile}.

Writing $G=WW^\top$, we minimize
\[
 f_T(G)=\frac{\lambda_H}{2}\Tr[T(G+\lambda_HI_C)^{-1}]
       +\frac{\lambda_W}{2}\Tr G
\]
over $G\succeq0$, all representable since $P\ge C$. For $X,T\succ0$ and
nonzero symmetric $D$,
\[
 \begin{gathered}
 \left.\frac{d^2}{dt^2}\Tr[T(X+tD)^{-1}]\right|_{t=0}=2\Tr(KMK)>0,\\
 K=X^{-1/2}DX^{-1/2},\qquad M=X^{-1/2}TX^{-1/2}\succ0.
 \end{gathered}
\]
Hence $f_T$ is strictly convex. Its interior stationary equation is
\[
 -\frac{\lambda_H}{2}(G+\lambda_HI_C)^{-1}T(G+\lambda_HI_C)^{-1}
 +\frac{\lambda_W}{2}I_C=0,
 \qquad (G+\lambda_HI_C)^2=\frac{\lambda_H}{\lambda_W}T.
\]
Under the fully active condition
$\lambda_{\min}(T)>\lambda_H\lambda_W$, its solution is
$G=\phi(T)=\sqrt{\lambda_H/\lambda_W}\,T^{1/2}-\lambda_HI_C\succ0$. For any $W$ satisfying $WW^\top=\phi(T)$, the triple $(W,\mu,H^\star(W))$ is a global minimizer of the full loss \citep{NEURIPS2024_e4748b6b}.

With $p_m=N_m/N$, \eqref{sub:pooled-loss} is the same loss on concatenated
client data, whose moments are $(\mu_g,\Sigma_{\cen})$. Thus the local
and centralized optimal head Gram matrices are $G_m=\phi(\Sigma_m)$ and
$G_{\cen}=\phi(\Sigma_{\cen})$.

\subsection{Unique selected head weights and the proximal limit}
\label{sub:selection-proof}
The following lemma establishes the selection rule of
Section~\ref{sub:selection-rule}. Taking $G_{\mathrm{tar}}=G_m$ and $T=\Sigma_m$
proves Lemma~\ref{sub:selection-main}; its loss bound follows from
\eqref{sub:selection-inequality}.
\begin{lemma}[Positive-definite rectangular projection and selection]
\label{sub:selection}
Let $G_{\mathrm{tar}}\succ0$ and let $W\in\R^{C\times P}$ have full row rank, with
$G=WW^\top\succ0$ and $P\ge C$.
The unique minimizer of $\norm{U-W}_F^2$ subject to $UU^\top=G_{\mathrm{tar}}$ is
\begin{equation}
 \Pi_{G_{\mathrm{tar}}}(W)=T_{G_{\mathrm{tar}}}(G)W,\qquad
 T_{G_{\mathrm{tar}}}(G)=G^{-1/2}(G^{1/2}G_{\mathrm{tar}}G^{1/2})^{1/2}G^{-1/2},
 \label{sub:projection}
\end{equation}
and its attained squared distance is $d_{\mathrm{BW}}^2(G,G_{\mathrm{tar}})$.

For the proximal-selection statement, let
$Y\in\mathbb{R}^{C\times N}$ have mean $\mu$ and covariance $T$
as defined in Appendix~\ref{sub:profile-proof}, and let $F(\cdot;T)$ be given by~\eqref{sub:profile},
with $\lambda_H,\lambda_W>0$.
Assume that $\lambda_{\min}(T)>\lambda_H\lambda_W$ and
$G_{\mathrm{tar}}=\phi(T)$. For the fixed reference $W$, global minimizers of
\[
F(U;T)+\frac{\rho}{2}\|U-W\|_F^2
\]
exist for every $\rho>0$. Any choice of these minimizers
$U_\rho$ satisfies
\[
U_\rho\to\Pi_{G_{\mathrm{tar}}}(W)
\qquad\text{as }\rho\downarrow0.
\]
For the full UFM loss (1) formed from $Y$ with the same
head-only penalty, every global minimizer
$(U_\rho,b_\rho,H_\rho)$ satisfies
$b_\rho=\mu$ and $H_\rho=H^\star(U_\rho)$,
where $H^\star$ is defined in Appendix~\ref{sub:profile-proof}.
\end{lemma}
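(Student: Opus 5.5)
The argument has three parts: the Procrustes projection, the existence and limit of proximal minimizers, and the reduction of the full UFM loss to the profile.

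\textbf{Projection.} Since $\norm{U-W}_F^2=\Tr G_{\mathrm{tar}}+\Tr G-2\Tr(UW^\top)$ on the feasible set, it suffices to maximize $\Tr(UW^\top)$ subject to $UU^\top=G_{\mathrm{tar}}$.
\begin{itemize}
\item \emph{Parametrization.} Every feasible $U$ can be written $U=G_{\mathrm{tar}}^{1/2}Q$ with $QQ^\top=I_C$, where $Q\in\R^{C\times P}$ has orthonormal rows. Then $\Tr(UW^\top)=\Tr(Q\,W^\top G_{\mathrm{tar}}^{1/2})$.
\item \emph{Upper bound.} Set $B=G_{\mathrm{tar}}^{1/2}W\in\R^{C\times P}$. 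By von Neumann's trace inequality, $\Tr(QB^\top)\le\Tr[(BB^\top)^{1/2}]$.
\item \emph{Identifying the bound.} We have $BB^\top=G_{\mathrm{tar}}^{1/2}GG_{\mathrm{tar}}^{1/2}$. Its square-root trace equals $\Tr[(G^{1/2}G_{\mathrm{tar}}G^{1/2})^{1/2}]$, because the two matrices are similar up to the factorization swap $XX^\top$ versus $X^\top X$ with $X=G_{\mathrm{tar}}^{1/2}G^{1/2}$. This already gives the attained distance $d_{\mathrm{BW}}^2$.
\item \emph{Uniqueness.} $B$ has full row rank $C$, since $G\succ0$ and $G_{\mathrm{tar}}\succ0$. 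Equality in von Neumann then forces $Q$ to be the polar factor $Q=(BB^\top)^{-1/2}B$, which is unique.
\item \emph{Closed form.} This gives $U=G_{\mathrm{tar}}^{1/2}(G_{\mathrm{tar}}^{1/2}GG_{\mathrm{tar}}^{1/2})^{-1/2}G_{\mathrm{tar}}^{1/2}W$. To match \eqref{sub:projection}, I would check that $T:=G^{-1/2}(G^{1/2}G_{\mathrm{tar}}G^{1/2})^{1/2}G^{-1/2}$ is the unique PD solution of $TGT=G_{\mathrm{tar}}$. The matrix above is also PD and also satisfies $TGT=G_{\mathrm{tar}}$, so the two coincide. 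The feasibility check $UU^\top=TGT=G_{\mathrm{tar}}$ follows directly.
\end{itemize}

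\textbf{Proximal existence and limit.} Existence of minimizers of $F(U;T)+\frac\rho2\norm{U-W}_F^2$ is immediate: the function is continuous and coercive, since $F\ge\frac{\lambda_W}{2}\norm U_F^2$.

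For the limit, let $U^\star=\Pi_{G_{\mathrm{tar}}}(W)$ and $F^\star=\min F$, which is attained exactly on $\{UU^\top=G_{\mathrm{tar}}\}$ by the strict convexity in $G$ from Appendix~\ref{sub:profile-proof}. Comparing $U_\rho$ against $U^\star$ gives
\begin{equation}
F(U_\rho)-F^\star+\tfrac\rho2\norm{U_\rho-W}_F^2\le\tfrac\rho2 d_{\mathrm{BW}}^2(G,G_{\mathrm{tar}}).
\label{sub:selection-inequality}
\end{equation}
This inequality yields two facts:
\begin{itemize}
\item $F(U_\rho)\to F^\star$, and the $U_\rho$ stay bounded.
\item $\norm{U_\rho-W}_F\le\norm{U^\star-W}_F$.
\end{itemize}
Take any subsequence limit $\bar U$. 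By continuity it is optimal, so $\bar U\bar U^\top=G_{\mathrm{tar}}$. It also satisfies $\norm{\bar U-W}_F\le\norm{U^\star-W}_F$. Uniqueness of the projection then gives $\bar U=U^\star$, and since every subsequence has this limit, the whole family converges.

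\textbf{Full UFM loss.} For fixed $(U,b)$, minimizing over $H$ and then over $b$ is exactly the profiling computation of Appendix~\ref{sub:profile-proof}, and the penalty depends only on $U$. A global minimizer of the penalized full loss must therefore have $H_\rho=H^\star(U_\rho)$ and $b_\rho=\mu$; otherwise, replacing them strictly decreases the objective, by strict convexity in $H$ and $A\succ0$ in $b$. It follows that $U_\rho$ minimizes the penalized profile.

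The loss bound stated in Lemma~\ref{sub:selection-main} is then \eqref{sub:selection-inequality}, since $\mathcal L_m$ at the profiled point equals $F$. The main obstacle is the equality case and uniqueness in the rectangular Procrustes step, together with matching the closed form; everything else is routine.
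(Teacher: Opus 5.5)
Your proposal is correct and follows essentially the paper's proof: the same reduction to maximizing $\Tr(UW^\top)$ over $U=G_{\mathrm{tar}}^{1/2}Q$ with uniqueness coming from the full-rank singular value decomposition, the same comparison inequality with compactness and uniqueness of the projection for $\rho\downarrow0$, and the same profiling argument for $(b,H)$. The differences are minor. You cite the equality case of von Neumann's inequality, which the paper proves directly: Cauchy--Schwarz on the diagonal of the $C\times C$ contraction, then row-orthonormality to rule out any component on the orthogonal complement. You also match the closed form through uniqueness of the positive-definite solution of $TGT=G_{\mathrm{tar}}$, whereas the paper simply checks that $T_{G_{\mathrm{tar}}}(G)W$ is feasible and attains the trace bound.
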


\begin{proof}
Write $U=G_{\mathrm{tar}}^{1/2}R$, where $RR^\top=I_C$. The squared distance differs by a
constant from $-2\Tr(RZ)$, where $Z=W^\top G_{\mathrm{tar}}^{1/2}$ has full column rank.
Let $Z=\Phi D\Psi^\top$ be its thin singular value decomposition; all diagonal
entries of $D$ are positive. Then $\Tr(RZ)\le\Tr D$.
Equality forces every diagonal entry of the contraction
$\Psi^\top R\Phi$ to equal one. Writing $\phi_j$ and $\psi_j$ for the columns of $\Phi$ and $\Psi$, this means $\psi_j^\top R\phi_j=1$.
Since $\norm{R\phi_j}\le1$ and $\norm{\psi_j}=1$, equality in
Cauchy--Schwarz gives $R\phi_j=\psi_j$ for every $j$, hence $R\Phi=\Psi$.
Complete $\Phi$ to an orthogonal matrix $[\Phi,\Phi_\perp]$ and write
$R=\Psi\Phi^\top+Q\Phi_\perp^\top$.
The row-orthonormality constraint gives $I_C+QQ^\top=I_C$, so $Q=0$. Thus the optimizer is unique.

Since $(T_{G_{\mathrm{tar}}}(G) W)(T_{G_{\mathrm{tar}}}(G) W)^{\top }=T_{G_{\mathrm{tar}}}(G)\,G\,T_{G_{\mathrm{tar}}}(G)=G_{\mathrm{tar}}$ holds,
$U=T_{G_{\mathrm{tar}}}(G)W$ is feasible, and attains the maximum:
\[
 \Tr[UW^\top]= \Tr[T_{G_{\mathrm{tar}}}(G)G]=\Tr[ (G^{1/2}G_{\mathrm{tar}}G^{1/2})^{1/2}]=\Tr D.
\]

To justify the last equality, set $X=G^{1/2}G_{\mathrm{tar}}^{1/2}$. Then
\[
X^\top X
=G_{\mathrm{tar}}^{1/2}GG_{\mathrm{tar}}^{1/2}
=Z^\top Z
=\Psi D^2\Psi^\top,
\qquad
XX^\top=G^{1/2}G_{\mathrm{tar}}G^{1/2}.
\]
Since $XX^\top$ and $X^\top X$ have the same eigenvalues,
the eigenvalues of $G^{1/2}G_{\mathrm{tar}}G^{1/2}$ are the squared singular
values of $Z$. Taking the positive square root therefore gives
\[
\Tr\!\left[(G^{1/2}G_{\mathrm{tar}}G^{1/2})^{1/2}\right]
=\sum_{i=1}^C D_{ii}
=\Tr D.
\]
Hence $T_{G_{\mathrm{tar}}}(G)W$ attains the trace upper bound and thus is the unique solution discussed above. Substituting this maximum into $\norm{U-W}_F^2=\Tr G_{\mathrm{tar}}+\Tr G-2\Tr(UW^\top)$ gives the BW distance formula.

For the proximal limit, put $F_\star=\min_U F(U;T)$ and
$U^\dagger=\Pi_{G_{\mathrm{tar}}}(W)$. Coercivity from $\lambda_W\norm U_F^2/2$ ensures
a global minimizer $U_\rho$ for each $\rho>0$. Comparison with $U^\dagger$ gives
\begin{equation}
 0\le F(U_\rho;T)-F_\star,
 \qquad
 F(U_\rho;T)-F_\star+\frac\rho2\norm{U_\rho-W}_F^2
 \le\frac\rho2\norm{U^\dagger-W}_F^2.
 \label{sub:selection-inequality}
\end{equation}
Thus $\norm{U_\rho-W}_F\le\norm{U^\dagger-W}_F$ and
$F(U_\rho;T)\to F_\star$. The heads $U_\rho$ thus lie in a fixed compact ball centered at $W$, so every sequence of $U_\rho$ with $\rho\downarrow 0$ has a convergent subsequence by the Bolzano--Weierstrass theorem. By continuity, every such subsequential limit minimizes $F(\cdot;T)$ and is no farther from $W$ than $U^\dagger$. Since $U^\dagger$ is the unique head closest to $W$ among the global minimizers of $F(\cdot;T)$, every such limit equals $U^\dagger$.

Since the proximal penalty depends only on the head, the conditional bias and feature solutions remain unchanged. Thus every global minimizer of the penalized full objective satisfies
$b_\rho=\mu$ and $H_\rho=H^\star(U_\rho)$, as derived in Appendix~\ref{sub:profile-proof}, and its original loss equals $F(U_\rho;T)$. Consequently, \eqref{sub:selection-inequality} gives
\[
0 \le
\mathcal L(U_\rho,b_\rho,H_\rho)-\min\mathcal L
= F(U_\rho;T)-F_\star
\le \frac{\rho}{2}\|U^\dagger-W\|_F^2.
\]
The excess loss therefore tends to zero as $\rho\downarrow0$.
\end{proof}

\subsection{The closed Gram recursion and its BW limit}
\label{sub:outer-proof}
We prove Theorem~\ref{sub:oracle} by deriving the Gram recursion and checking the hypotheses of the BW barycenter convergence theorem. Set $T_m(G):=T_{G_m}(G)$ and $S(G):=\sum_mp_mT_m(G)$. By
Lemma~\ref{sub:selection}, we have the head update $W_{t+1}=\sum_mp_m\Pi_m(W_t)=S(G_t)W_t$, implying the following closed Gram recursion:
\[
G_{t+1}=S(G_t)G_tS(G_t).
\]
Substituting $T_m$ gives \eqref{sub:gram-map}. Starting from $G_0\succ0$, induction gives $T_m(G_t)\succ0$,
$S(G_t)\succ0$ and $G_{t+1}\succ0$. The returned biases $\mu_m$ average to $b_{t+1}=\mu_g$.

\paragraph{Convergence of the Gram recursion.}
For positive weights summing to one and positive-definite $G_m$, the iteration \eqref{sub:gram-map} converges from every $G_0\succ0$ to the unique positive-definite BW barycenter satisfying \eqref{sub:barycenter} \citep[Theorem~4.2 and Remark~4.3]{ALVAREZESTEBAN2016744}.
Its variational characterization is the Gaussian barycenter characterization
\citep{doi:10.1137/100805741,BHATIA2019165}.
Full activity supplies $G_m\succ0$, so this proves Theorem~\ref{sub:oracle}.

If $G_1,\ldots,G_M$ commute with each other, $(\sum_mp_mG_m^{1/2})^2$ satisfies the fixed-point
equation in their common eigenbasis, so uniqueness identifies it with $G_\star$.

\section{The gap to centralized training}
\label{sub:channels-proof}
This appendix proves the gap characterization in
Proposition~\ref{sub:channels} of Section~\ref{sub:consequences}.
We establish the PSD decomposition, characterize equality, and relate the
averaging term to BW distances.

\subsection{The three positive-semidefinite terms}
\label{sub:gap-decomposition-proof}
The weighted scatter of the selected head weights supplies the averaging
term in \eqref{sub:three-channel}.
The fixed-point equation in \eqref{sub:barycenter}, conjugated by
$G_{\star}^{-1/2}$, gives $S(G_\star)=I_C$. Hence any factor $W_\star$ of
$G_\star$ satisfies $\sum_mp_m\Pi_m(W_\star)=W_\star$.
More generally, the weighted scatter identity at any round gives
\begin{equation}
 \sum_mp_mG_m-G_{t+1}
 =\sum_mp_m(\Pi_m(W_t)-W_{t+1})(\Pi_m(W_t)-W_{t+1})^\top\succeq0.
 \label{sub:variance-identity}
\end{equation}
Expand the weighted scatter using $\sum_mp_m\Pi_m(W_t)=W_{t+1}$
and $\Pi_m(W_t)\Pi_m(W_t)^\top=G_m$ to obtain this identity.
At any factor $W_\star$ of $G_\star$, this scatter is
$\mathcal M_A=\sum_mp_mG_m-G_\star$, independently of that factor;
convergence of $W_t$ is not required.

Adding and subtracting $G_{\mathrm{within}}$ and
$\sum_mp_mG_m$ gives \eqref{sub:three-channel}. For
$\alpha=\sqrt{\lambda_H/\lambda_W}$,
\[
 \mathcal M_\mu=\alpha(\Sigma_{\cen}^{1/2}-\Sigma_{\mathrm{within}}^{1/2})\succeq0,
 \qquad
 \mathcal M_\Sigma=\alpha\left(\Sigma_{\mathrm{within}}^{1/2}
                                  -\sum_mp_m\Sigma_m^{1/2}\right)\succeq0.
\]
Since $\Sigma_{\cen}=\Sigma_{\mathrm{within}}+\Sigma_\mu\succeq
\Sigma_{\mathrm{within}}$, square-root monotonicity gives
$\mathcal M_\mu\succeq0$. Operator concavity of $\phi$ gives its Jensen
gap $\mathcal M_\Sigma\succeq0$
\citep[Theorems~V.1.9 and~V.2.5]{Bhatia1997}.
Equation~\eqref{sub:variance-identity} makes $\mathcal M_A$ the PSD weighted
scatter of $U_m^\star=\Pi_m(W_\star)$ around $W_\star$. This proves the
Loewner chain of Proposition~\ref{sub:channels}.

\subsection{The exact zero set}
\label{sub:gap-zero-proof}
We next prove the equality condition in Proposition~\ref{sub:channels}:
the gap vanishes exactly when client means and covariances agree.
A zero sum forces each PSD term to vanish. Square-root injectivity gives
$\mathcal M_\mu=0\iff\Sigma_\mu=0$. Since
$\Tr\Sigma_\mu=\sum_mp_m\norm{\mu_m-\mu_g}^2$ with $p_m>0$, all means
then agree. Likewise $\mathcal M_A=0$ gives
$\sum_mp_m\norm{U_m^\star-W_\star}_F^2=0$, so all $U_m^\star$, hence all
$G_m$, agree. Injectivity of $\phi$ implies equal covariances. Conversely,
equal means and covariances give $\Sigma_\mu=0$ and
$G_m=G_{\cen}=G_\star$, making all three terms zero.

\subsection{BW variance and pairwise bounds}
\label{sub:gap-variance-proof}
Choose any factor $W_\star W_\star^\top=G_\star$ and set
$U_m^\star=\Pi_m(W_\star)$. Taking the trace of
\[
\mathcal M_A
=\sum_m p_m(U_m^\star-W_\star)(U_m^\star-W_\star)^\top
\]
and using $\Tr(AA^\top)=\norm{A}_F^2$, we obtain
\[
\Tr\mathcal M_A
=\sum_m p_m\norm{U_m^\star-W_\star}_F^2.
\]
Each $U_m^\star$ is the factor of $G_m$ closest to $W_\star$.
Hence Lemma~\ref{sub:selection} gives
\[
\norm{U_m^\star-W_\star}_F^2
=d_{\mathrm{BW}}^2(G_\star,G_m).
\]
Substitution proves \eqref{sub:intrinsic-variance}.

In addition to this, we prove the following pairwise bounds mentioned in Section~\ref{sub:consequences}:
\begin{equation}
 D\le\Tr\mathcal M_A\le2D,\qquad
 D:=\sum_{m<n}p_mp_n d_{\mathrm{BW}}^2(G_m,G_n).
 \label{sub:pairwise-bounds}
\end{equation}
For the lower bound, we first express the scatter around
$W_\star$ in terms of pairwise distances.
Expanding the squared distances and using
$\sum_m p_mU_m^\star=W_\star$ gives
\[
\begin{aligned}
\sum_{m<n}p_mp_n\norm{U_m^\star-U_n^\star}_F^2
&=\frac12\sum_{m,n}p_mp_n
  \norm{U_m^\star-U_n^\star}_F^2\\
&=\sum_m p_m\norm{U_m^\star}_F^2
  -\norm{W_\star}_F^2\\
&=\Tr\mathcal M_A.
\end{aligned}
\]
For each pair $(m,n)$, Lemma~\ref{sub:selection} gives
\[
d_{\mathrm{BW}}^2(G_m,G_n)
=\min_{UU^\top=G_m}\norm{U-U_n^\star}_F^2.
\]
Since $U_m^\star U_m^{\star\top}=G_m$, the head $U_m^\star$
is a feasible choice in this minimization. Therefore,
\[
d_{\mathrm{BW}}^2(G_m,G_n)
\le \norm{U_m^\star-U_n^\star}_F^2.
\]
Multiplying by $p_mp_n$ and summing over $m<n$ yields
$D\le\Tr\mathcal M_A$.

For the upper bound, define
\[
V(X):=\sum_m p_m d_{\mathrm{BW}}^2(X,G_m).
\]
By the barycenter characterization in \eqref{sub:barycenter},
$G_\star$ is a global minimizer of $V$.
Since each $G_n$ is positive definite and therefore feasible,
we have $V(G_\star)\le V(G_n)$ for every $n$. Averaging these inequalities with weights $p_n$ gives
\[
\Tr\mathcal M_A
=V(G_\star)
\le\sum_n p_nV(G_n)
=\sum_{n,m}p_np_m d_{\mathrm{BW}}^2(G_n,G_m)
=2D.
\]
The last equality follows because the terms with $m=n$
are zero, while each pair with $m<n$ is counted twice.

\section{Moment correction and the scope of recovery}
\label{sub:correction-proof}
This appendix proves the recovery result in Proposition~\ref{sub:correction}
of Section~\ref{sub:consequences} and explains its scope. We identify the
moments needed for the correction and compare the resulting predictions
with centralized training.

\subsection{A common corrected profile and recovery}
\label{sub:correction-recovery-proof}
Since $C_m(W)$ depends only on $W$, minimizing over the bias
and features for a fixed $W$ gives the same conditional
solutions as in Appendix~\ref{sub:profile-proof}:
\[
b_m^\star(W)=\mu_m,
\qquad
H_m^\star(W)
=W^\top A^{-1}(Y_m-\mu_m1^\top),
\qquad
A=WW^\top+\lambda_HI_C.
\]
Substituting these solutions into the corrected objective gives
\[
\begin{aligned}
\min_{b,H}\{\mathcal L_m(W,b,H)+C_m(W)\}
&=F(W;\Sigma_m)+C_m(W)\\
&=\frac{\lambda_H}{2}
  \Tr\!\left[(\Sigma_m+\Sigma_{\cen}-\Sigma_m)A^{-1}\right]
  +\frac{\lambda_W}{2}\norm{W}_F^2\\
&=F(W;\Sigma_{\cen}).
\end{aligned}
\]
Thus every client has the same profiled objective for the head.

By Appendix~\ref{sub:profile-proof}, this common objective is minimized exactly
by the heads satisfying
\[
WW^\top=G_{\cen}
=\phi(\Sigma_{\cen})\succ0.
\]
Such heads exist because $P\ge C$.
Pairing any such head with the conditional bias and feature
solutions above therefore gives a global minimizer of each
client's corrected full objective.

Now fix a common broadcast $W_t$ of full row rank.
Under the selection rule, each client chooses the head
closest to $W_t$ among the factors of $G_{\cen}$.
By Lemma~\ref{sub:selection}, this head is unique, so
\[
\widetilde W_{m,t}=\Pi_{G_{\cen}}(W_t),
\qquad
\widetilde b_{m,t}=\mu_m
\quad\text{for every }m.
\]
Since $\sum_m p_m=1$, server averaging gives
\[
\begin{aligned}
W_{t+1}
&=\sum_m p_m\widetilde W_{m,t}
=\Pi_{G_{\cen}}(W_t),\\
G_{t+1}
&=W_{t+1}W_{t+1}^\top=G_{\cen},\\
b_{t+1}
&=\sum_m p_m\mu_m=\mu_g.
\end{aligned}
\]
This proves Proposition~\ref{sub:correction}.

Finally, Appendix~\ref{sub:selection-proof} applies to
the common profiled objective: any choice of global minimizers
\[
U_\rho\in\operatorname*{arg\,min}_{U}
\left\{
F(U;\Sigma_{\cen})
+\frac{\rho}{2}\norm{U-W_t}_F^2
\right\}
\]
satisfies $U_\rho\to\Pi_{G_{\cen}}(W_t)$ as $\rho\downarrow0$.
Thus the same selection can be imposed by globally minimizing
the penalized objective for each $\rho>0$ and taking this
limit before server averaging.

\subsection{The two correction terms and required moments}
\label{sub:correction-moments-proof}
Using $\Sigma_{\cen}=\Sigma_{\mathrm{within}}+\Sigma_\mu$,
we write
\[
C_m(W)
=\frac{\lambda_H}{2}\Tr(\Sigma_\mu A^{-1})
+\frac{\lambda_H}{2}
 \Tr\!\left[(\Sigma_{\mathrm{within}}-\Sigma_m)A^{-1}\right],
\qquad A=WW^\top+\lambda_HI_C.
\]
The first term restores the mean-scatter contribution removed
by local centering. The second replaces $\Sigma_m$ by
$\Sigma_{\mathrm{within}}$ in each profiled objective;
its weighted average is zero when evaluated at a common $W$.

Each client uploads $N_m$, $\mu_m$, and $\Sigma_m$ once.
With $N=\sum_m N_m$, the server computes
\[
\mu_g=\frac1N\sum_m N_m\mu_m,
\qquad
\Sigma_{\cen}
=\frac1N\sum_m N_m(\Sigma_m+\mu_m\mu_m^\top)
-\mu_g\mu_g^\top,
\]
and broadcasts $\Sigma_{\cen}$ so that each client can
evaluate $C_m(W)$.
This requires $O(C^2)$ additional entries per client,
or $O(MC^2)$ in total.

\subsection{Predictions with local and shared biases}
\label{sub:correction-predictions-proof}
Fix a common head $W$ with $G=WW^\top=G_{\cen}$, and let
$A=G+\lambda_HI_C$ and $d_m=\mu_m-\mu_g$.
The corrected client centers its targets using $\mu_m$,
whereas centralized training uses $\mu_g$.
Their optimal features on client $m$'s samples are therefore
\[
\begin{aligned}
H_m^{\mathrm{corr}}
&=W^\top A^{-1}(Y_m-\mu_m\mathbf1^\top),\\
H_m^{\cen}
&=W^\top A^{-1}(Y_m-\mu_g\mathbf1^\top).
\end{aligned}
\]
Hence
\begin{equation}
H_m^{\mathrm{corr}}-H_m^{\cen}
=-W^\top A^{-1}d_m\mathbf1^\top.
\label{sub:feature-offset}
\end{equation}

After aggregation, each client uses the shared bias $\mu_g$
while retaining its locally optimized features
$H_m^{\mathrm{corr}}$.
By \eqref{sub:feature-offset}, its prediction difference
from centralized training is
\[
\begin{aligned}
(WH_m^{\mathrm{corr}}+\mu_g\mathbf1^\top)
-(WH_m^{\cen}+\mu_g\mathbf1^\top)
&=-GA^{-1}d_m\mathbf1^\top.
\end{aligned}
\]
If the client instead retains its local bias $\mu_m$, the
prediction difference becomes
\[
\begin{aligned}
(WH_m^{\mathrm{corr}}+\mu_m\mathbf1^\top)
-(WH_m^{\cen}+\mu_g\mathbf1^\top)
&=\lambda_HA^{-1}d_m\mathbf1^\top.
\end{aligned}
\]
The latter equality uses $I_C-GA^{-1}=\lambda_HA^{-1}$.
Both prediction differences are nonzero whenever $\mu_m\ne\mu_g$.
Thus the correction recovers the centralized head Gram matrix
and aggregate bias, but need not reproduce centralized
training predictions.

\section{UFM numerical experiments}
\label{sub:numerical}
This appendix checks the results of Section~\ref{sub:theory} numerically
under the fully active conditions of Section~\ref{sub:formulation}.
Appendix~\ref{sub:num-common} specifies the fixed instance and the solvers
behind Figure~\ref{fig:ufm-common-instance}.
Appendices~\ref{sub:num-selection}--\ref{sub:num-correction} repeat each
check on random instances of several sizes, in the order of
Section~\ref{sub:theory}: selection by the proximal regularizer
(Lemma~\ref{sub:selection-main}), convergence to the BW barycenter
(Theorem~\ref{sub:oracle}), the three terms of the gap
(Proposition~\ref{sub:channels}) and the moment correction
(Proposition~\ref{sub:correction}).
All results of Section~\ref{sub:theory} hold for arbitrary positive weights $p_m$ summing to one,
with $\mu_g$, $\Sigma_{\mathrm{within}}$ and $\Sigma_{\cen}$ formed with these weights.
The instances in Appendices~\ref{sub:num-bw}--\ref{sub:num-correction} therefore draw the weights
at random instead of setting $p_m=N_m/N$.
Curves show medians over instances; in Figures~\ref{sub:fig-selection} and~\ref{sub:fig-gap-terms}, shaded bands show interquartile ranges.
Appendix~\ref{sub:supplementary-analysis} reports the numerical checks of the
finite-step analysis.

\subsection{The fixed instance of Figure~\ref{fig:ufm-common-instance}}
\label{sub:num-common}
Figure~\ref{fig:ufm-common-instance} illustrates the results of
Section~\ref{sub:theory} on one fixed instance with $C=P=2$, $M=3$,
$N_m=256$, $p_m=1/3$ and $\lambda_H=\lambda_W=0.1$.
Writing $R_\theta$ for planar rotation through $\theta$ degrees, the
client moments and the initial Gram matrix are
\begin{align*}
 \Sigma_1&=R_{-40}\diag(3,0.11)R_{-40}^\top,& \mu_1&=(0,-2.2)^\top,\\
 \Sigma_2&=R_{10}\diag(1.1,0.45)R_{10}^\top,& \mu_2&=(0,0)^\top,\\
 \Sigma_3&=R_{55}\diag(2.4,0.13)R_{55}^\top,& \mu_3&=(0,2.2)^\top,\\
 G_0&=R_{100}\diag(2.5,0.09)R_{100}^\top,& W_0&=G_{0}^{1/2}.
\end{align*}
Each target matrix is generated by centering and whitening a Gaussian draw
with random seed 0, then applying the prescribed mean and covariance.
The empirical moments equal these values up to rounding, and concatenating
the three equal-size matrices gives the same $\Sigma_{\cen}$ as
\eqref{sub:pooled-covariance}.
A Gram matrix $G$ is drawn as the ellipse
$\{G^{1/2}u:\norm u_2=1\}$, whose semiaxes are the square roots of its
eigenvalues.

\paragraph{Local optimization and selection.}
Panel (a) follows client 1 during joint training of $(W,b,H)$ with Adam,
a learning rate of $0.02$ and no proximal regularizer, starting from $W_0$,
$b=\mu_1$ and the ridge-optimal $H$ at $W_0$. Its snapshots show the head
Gram matrix after $E=0,20,80$ and $240$ full-batch steps, together with
client~1's optimal Gram matrix $G_1$, on the coordinate scale of panel (d).
The other panels use the profiled objective \eqref{sub:profile}, in which
$(b,H)$ are minimized out in closed form. For $\rho>0$, the clients minimize
the penalized profiled objective with SciPy's \texttt{trust-exact} method,
using the analytic gradient and Hessian. For a returned head $W_\rho$, we
measure the relative Gram error and the relative selection error
\[
 e_G(\rho)=\frac{\norm{W_\rho W_\rho^\top-G_m}_F}{\norm{G_m}_F},
 \qquad
 e_\Pi(\rho)=\frac{\norm{W_\rho-\Pi_m(W)}_F}{\norm{\Pi_m(W)}_F}.
\]
For panel (b), the broadcast is fixed at $W_0$, and $\rho$ decreases over
11 logarithmically spaced values from $10^{-1}$ to $10^{-6}$; the first
solve starts from $W_0$, and each later one from the previous solution.
Panel (b) shows the largest of each error over the three clients; at
$\rho=10^{-6}$, they are $4.66\times10^{-6}$ and $2.48\times10^{-6}$.

\paragraph{Convergence to the barycenter.}
For panels (c) and (d), each client starts its local solve at the selected
head $\Pi_m(W_t)$ and then minimizes the penalized objective numerically,
for 18 rounds at each $\rho\in\{10^{-1},10^{-2},10^{-3},10^{-4}\}$.
Panel (c) shows $\rho=10^{-1}$ and $10^{-3}$, together with exact selection
and with unregularized \mbox{L-BFGS} started from the broadcast head in each round.
After 18 rounds, the relative error of $G_t$ to $G_\star$ is
$3.49\times10^{-2}$ at $\rho=10^{-1}$, $2.55\times10^{-4}$ at
$\rho=10^{-3}$ and $3.11\times10^{-3}$ without the regularizer; with exact
selection, it falls below $10^{-14}$ by round 16.
Over all 303 profiled solves, the largest gradient norm is
$4.14\times10^{-9}$. Panel (d) shows the shared Gram matrix at
$\rho=10^{-3}$ at rounds 0, 1, 2 and 18, on one coordinate scale.
The reference $G_\star$ agrees to a relative error of $1.45\times10^{-15}$
with an independent minimization of the variational objective in
\eqref{sub:barycenter} over Cholesky factors.

\paragraph{Gap and correction.}
Panel (e) evaluates the three terms of Proposition~\ref{sub:channels} for
the original moments and for three modifications: equal means
$\mu_m=\mu_g$, equal covariances $\Sigma_m=\Sigma_{\mathrm{within}}$, and
both. All other quantities remain fixed. With the original moments, the
traces of $\mathcal M_\mu$, $\mathcal M_\Sigma$ and $\mathcal M_A$ are
$1.02$, $2.90\times10^{-1}$ and $1.93\times10^{-1}$.
Equalizing the means removes only $\mathcal M_\mu$, equalizing the
covariances removes $\mathcal M_\Sigma$ and $\mathcal M_A$, and equalizing
both removes the gap, up to rounding. The computed terms also satisfy
\eqref{sub:three-channel} and \eqref{sub:intrinsic-variance} and are
positive semidefinite, up to rounding.
Panel (f) applies the scaled correction $\gamma C_m$ under exact selection,
for $\gamma\in\{0,0.9,0.99,1\}$; client $m$ then has the optimal Gram matrix
$\phi((1-\gamma)\Sigma_m+\gamma\Sigma_{\cen})$. After one round, the
relative error of $G_t$ to $G_{\cen}$ is $8.08\times10^{-16}$ with the full
correction, and $3.53\times10^{-2}$ and $3.46\times10^{-3}$ for $\gamma=0.9$
and $0.99$. Without correction, it settles at the relative gap of this
instance, $5.78\times10^{-1}$.
\FloatBarrier

\subsection{Selection by a vanishing proximal regularizer}
\label{sub:num-selection}
We test Lemma~\ref{sub:selection-main}(ii) on 128 random fully active
instances for each $C\in\{2,4,8\}$ and $P\in\{C,2C\}$.
We set $\lambda_H=\lambda_W=1$ and $N_m=64$.
The client means are standard normal, and the covariances have random
orthogonal eigenbases with eigenvalues drawn uniformly from
$[2,5]$; the samples match these moments exactly.
At a fixed broadcast $W$ with singular values drawn uniformly from
$[0.7,1.5]$, each client minimizes
$\mathcal L_m+\frac{\rho}{2}\norm{W_m-W}_F^2$ jointly over $(W_m,b,H)$,
using SciPy's \texttt{trust-exact} solver with analytic derivatives.
Each weight is warm-started from the solution at the next larger weight.
Figure~\ref{sub:fig-selection} shows the medians and interquartile ranges of
the errors $e_G(\rho)$ and $e_\Pi(\rho)$ of Appendix~\ref{sub:num-common}.
For $\rho\le10^{-1}$, both errors decrease in proportion to $\rho$. At the
smallest weight shown, $\rho=1.78\times10^{-6}$, the medians of $e_G$ lie
between $5.15\times10^{-7}$ and $6.47\times10^{-7}$, and those of $e_\Pi$
between $2.70\times10^{-7}$ and $3.67\times10^{-7}$.
At $\rho=0$, the objective determines the Gram matrix but not the head.
The left slot of each panel shows this difference: exact optimal heads,
sampled on the optimal orbit, have Gram errors of at most
$3.30\times10^{-15}$, whereas unregularized \mbox{L-BFGS} returns from six
independent standard-normal head initializations per instance
all reach the optimal Gram matrix, with Gram errors below $10^{-3}$, but their
selection errors have a median of $1.42$.

\begin{figure}[!htbp]
\centering
\includegraphics[width=\linewidth]{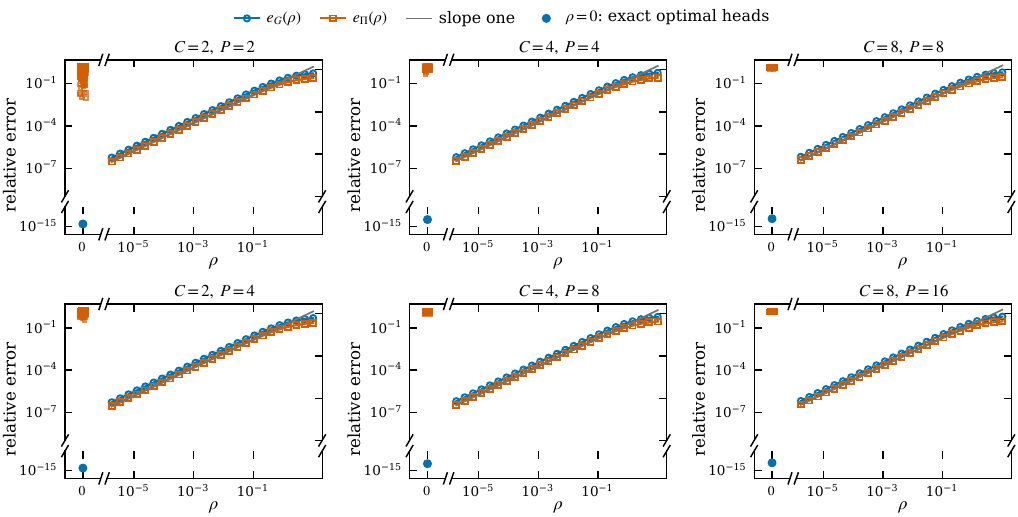}
\caption{\textbf{Selection by a vanishing proximal regularizer
(Lemma~\ref{sub:selection-main}).} Relative Gram error $e_G(\rho)$ (circles)
and selection error $e_\Pi(\rho)$ (squares) in six $(C,P)$ settings, with
medians and interquartile ranges over 128 instances. The gray line has slope
one. At $\rho=0$, the circle marks exact optimal heads and the squares mark
unregularized \mbox{L-BFGS} returns from random starts.}
\label{sub:fig-selection}
\end{figure}

In networks trained with a fixed local budget, agreement with $G_\star$ is
best at an intermediate weight (Appendix~\ref{app:proximal-weight}). At
$\rho=10^{-5}$, finite local training leaves the heads farther from the
selected heads, and at large $\rho$ the clients move away from their optima.
\FloatBarrier

\subsection{Convergence to the BW barycenter}
\label{sub:num-bw}
We test Theorem~\ref{sub:oracle} on 16 random fully active instances for each
$C\in\{2,4,8\}$ and $M\in\{3,8\}$, whose client covariances do not commute.
We set $P=2C$ and $\lambda_H=\lambda_W=1$.
Each $G_m$ has a random orthogonal eigenbasis and eigenvalues drawn uniformly
from $[0.4,4]$, and we set $\Sigma_m=(G_m+I_C)^2$.
The entries of the client means $\mu_m$ and the initial head $W_0$ are
independent standard normal.
Each client has $N_m=2C$ samples whose empirical mean and covariance equal the
prescribed $\mu_m$ and $\Sigma_m$, and the aggregation weights are drawn from a
symmetric Dirichlet distribution with concentration $2$ per client.
Figure~\ref{sub:fig-bw} shows the median relative error
$\norm{G_t-G_\star}_F/\norm{G_\star}_F$ of the shared Gram matrix over 18
rounds, with $G_\star$ obtained from a separate run of \eqref{sub:gram-map}
to convergence.

\begin{figure}[!htbp]
\centering
\includegraphics[width=\linewidth]{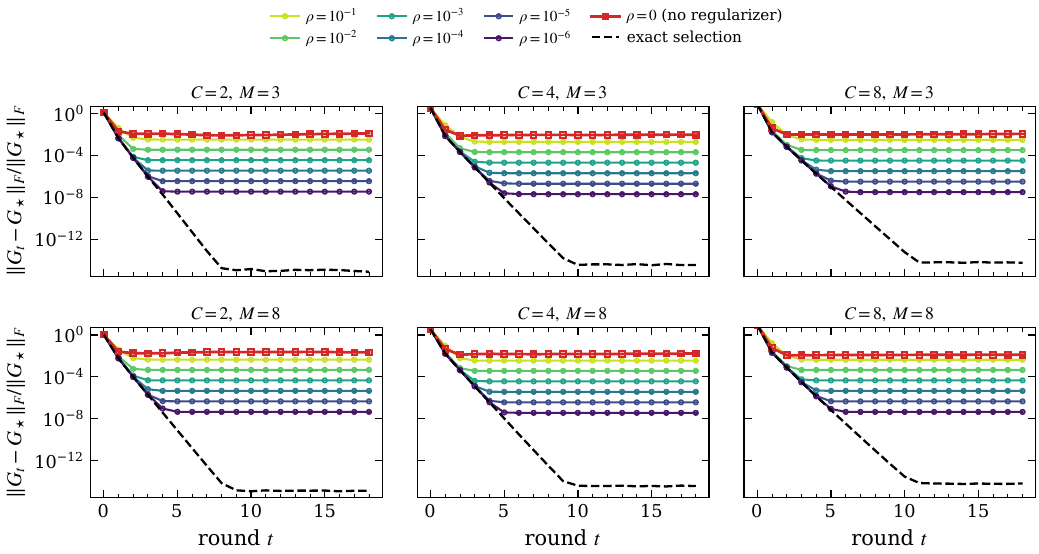}
\caption{\textbf{Convergence to the BW barycenter (Theorem~\ref{sub:oracle}).}
Median relative error of the shared Gram matrix to $G_\star$ over 16 instances
in each of six problem sizes. Dashed: exact selection. Colored circles:
proximal returns at $\rho=10^{-1},\ldots,10^{-6}$. Red squares: unregularized
joint training from the broadcast head.}
\label{sub:fig-bw}
\end{figure}

With exact selection, the error decreases geometrically to below $10^{-14}$.
For six weights $\rho\in\{10^{-1},\ldots,10^{-6}\}$, each client minimizes the
penalized objective jointly over $(W_m,b,H)$ with SciPy's
\texttt{trust-exact} solver and analytic derivatives, starting from
$\Pi_m(W_t)$ and its ridge-optimal features. The error then levels off at a
value roughly proportional to $\rho$. Unregularized joint training with Adam
from the broadcast head levels off between $9.84\times10^{-3}$ and
$2.13\times10^{-2}$.
Trained networks show the same ordering
(Table~\ref{tab:dnn-training-summary}): proximal training closely follows the
exact Gram iteration, while ordinary training ends near $G_\star$ with a
larger error.

\FloatBarrier

\subsection{The three terms of the gap to centralized training}
\label{sub:num-channels}
We test the three predictions of Proposition~\ref{sub:channels}: each term is
positive semidefinite, the gap vanishes exactly when the client means and
covariances agree, and $\Tr\mathcal M_A$ equals the BW variance of the
clients' optimal Gram matrices. By \eqref{sub:pairwise-bounds}, this variance
lies between $D$ and $2D$, where
$D=\sum_{m<n}p_mp_nd_{\mathrm{BW}}^2(G_m,G_n)$ is the pairwise dispersion.
Figure~\ref{sub:fig-gap-terms} reports these checks.
All panels use $\lambda_H=\lambda_W=0.1$ and standard-normal
base client means.

\begin{figure}[!htbp]
\centering
\includegraphics[width=\linewidth]{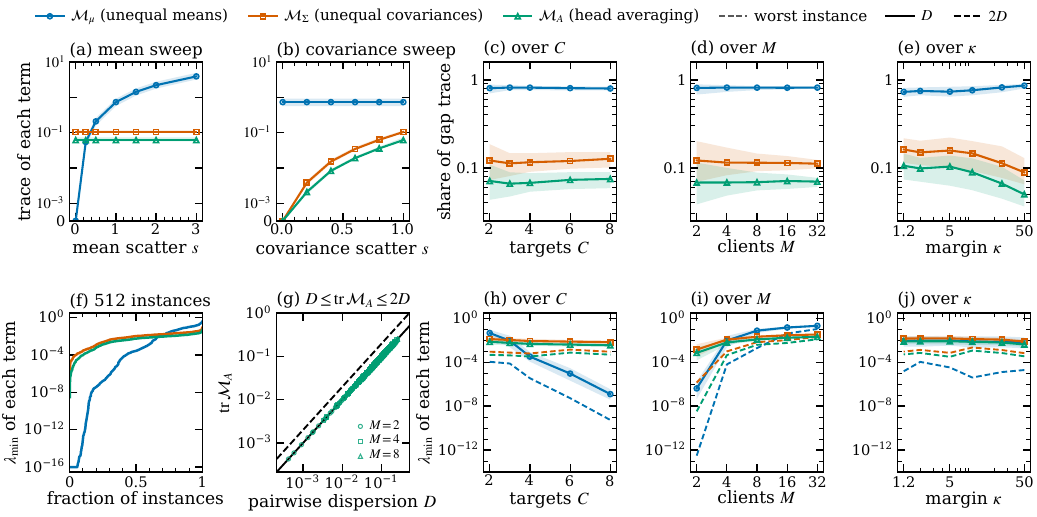}
\caption{\textbf{The three terms of the gap $G_{\cen}-G_\star$
(Proposition~\ref{sub:channels}).} Blue, orange and green identify
$\mathcal M_\mu$, $\mathcal M_\Sigma$ and $\mathcal M_A$ throughout; circles,
squares and triangles mark the terms in (a--e) and (h--j), while in (g) they
distinguish $M=2,4,8$. Curves are medians and bands are interquartile ranges.
(a, b) Trace of each term when the client means or the client covariances
are varied with strength $s$ while the other is held fixed. The vertical axis is linear
below $10^{-3}$ so that exact zeros are visible.
(c--e) Share of the gap trace carried by each term against the number of
targets $C$, the number of clients $M$ and a lower bound $\kappa$ on the fully active margin.
(f) Smallest eigenvalue of each term over 512 instances, sorted; numerical
zeros are drawn at $10^{-16}$.
(g) $\Tr\mathcal M_A$ against the pairwise dispersion $D$, with the bounds $D$
and $2D$ of \eqref{sub:pairwise-bounds}.
(h--j) Smallest eigenvalue of each term over the sweeps of (c--e); dashed
curves show the worst of the 256 instances in each setting.}
\label{sub:fig-gap-terms}
\end{figure}

\paragraph{Which heterogeneity produces which term.}
Panels (a) and (b) use two controlled sweeps with $C=3$ and $M=4$, each with
the eight seeds $\{55,56,57,58,59,60,61,62\}$ at every strength $s$.
For each seed, the base covariances have random orthogonal eigenbases and
eigenvalues uniform in $[0.3,3]$.
The weights are normalized independent uniform draws from
$[0.5,1.5]$.
The mean
sweep scales the separation of the client means by
$s\in\{0,0.25,0.5,1,1.5,2,3\}$ and keeps the client covariances fixed. The term $\mathcal M_\mu$ then grows
from exactly zero at $s=0$ to a median trace of $3.90$ at $s=3$, while
$\mathcal M_\Sigma$ and $\mathcal M_A$ keep their median traces of
$1.04\times10^{-1}$ and $6.12\times10^{-2}$. The covariance sweep moves each client
covariance along
$\Sigma_m(s)=\Sigma_{\mathrm{within}}+s(\Sigma_m-\Sigma_{\mathrm{within}})$
for $s\in\{0,0.2,0.4,0.6,0.8,1\}$, which keeps the client means and the weighted within-client covariance fixed.
Now $\mathcal M_\Sigma$ and $\mathcal M_A$ grow from zero, with traces below
$2\times10^{-14}$ at $s=0$, to the same two values, and $\mathcal M_\mu$
keeps its median trace of $7.30\times10^{-1}$. Equalizing the moments of the 512 instances
described below gives the same pattern: equal means remove $\mathcal M_\mu$,
equal covariances remove $\mathcal M_\Sigma$ and $\mathcal M_A$, and
equalizing both removes the gap. Every removed term has trace below
$2\times10^{-14}$.

\paragraph{Positive semidefiniteness.}
Panel (f) sorts the smallest eigenvalue of each term over 512 random
fully active instances, with $M\in\{2,4,8\}$ clients, $C=P\in\{2,3,5,8\}$,
randomly rotated covariances with eigenvalues drawn uniformly from $[0.25,3]$, and symmetric
Dirichlet weights with concentration $2$ per client.
No eigenvalue is negative beyond rounding: the minimum over all 1536
values is $-2.42\times10^{-15}$. Sixty values of $\mathcal M_\mu$ lie below
$10^{-12}$, and 29 of them are drawn at the floor of the panel. They belong
to two-client instances with $C\ge5$, where the mean scatter $\Sigma_\mu$
has rank one, so the trailing eigenvalues of
$\phi(\Sigma_{\mathrm{within}}+\Sigma_\mu)-\phi(\Sigma_{\mathrm{within}})$
are small but, for generic client means, not exactly zero.
Panels (h)--(j) repeat the test on 256 new instances per setting while one
parameter varies: the number of targets $C$ with $M=4$, the number of clients
$M$ with $C=3$, and the lower bound $\kappa$ on the fully active margin with $C=3$ and $M=4$.
These sweeps use the weight distribution of panel (f); the $C$ and $M$
sweeps also use its covariance distribution.
In the last sweep, the covariance eigenvalues are drawn uniformly from
$[\kappa\lambda_H\lambda_W,3]$, and $\kappa$ decreases from $50$ to $1.2$,
toward the boundary of the fully active regime. The smallest eigenvalue of the
worst instance stays positive in every setting; its minimum,
$3.07\times10^{-13}$, occurs for $\mathcal M_\mu$ with two clients.

\paragraph{The BW variance.}
Panel (g) plots $\Tr\mathcal M_A$ against $D$ for the 512 instances. Every
point lies between the lines $D$ and $2D$. The ratio $\Tr\mathcal M_A/D$
ranges over $[1.000,1.006]$ and equals one for two clients. In that case the
barycenter lies on the BW geodesic between $G_1$ and $G_2$, at distance
$p_2d_{\mathrm{BW}}(G_1,G_2)$ from $G_1$~\citep{doi:10.1137/100805741}, so
$\Tr\mathcal M_A=p_1p_2d_{\mathrm{BW}}^2(G_1,G_2)=D$ exactly. The identity
\eqref{sub:intrinsic-variance} holds to a relative residual of at most
$2.38\times10^{-12}$, and the weighted scatter of the selected heads,
$\sum_mp_m(U_m^\star-W_\star)(U_m^\star-W_\star)^\top$, matches
$\mathcal M_A=\sum_mp_mG_m-G_\star$ to a relative residual of at most
$3.24\times10^{-12}$.

\paragraph{Composition of the gap.}
Panels (c)--(e) show the share of the gap trace carried by each term on the
instances of panels (h)--(j), whose client means are standard normal. The term
$\mathcal M_\mu$ carries 73--86\% of the gap trace, $\mathcal M_\Sigma$
carries 9--16\% and $\mathcal M_A$ carries 5--11\% (medians). The relative gap
$\norm{G_{\cen}-G_\star}_F/\norm{G_{\cen}}_F$ has medians between $1.68\times10^{-1}$
and $3.04\times10^{-1}$. In networks, removing the mean term by clientwise centering also
leaves a positive gap (Appendix~\ref{app:moment-dnn}), consistent with the
remaining terms $\mathcal M_\Sigma+\mathcal M_A$.
\FloatBarrier

\subsection{One-round recovery by the moment correction}
\label{sub:num-correction}
We test Proposition~\ref{sub:correction} under exact selection on the six
problem sizes of Figure~\ref{sub:fig-bw}, with 32 random fully active
instances per size whose client covariances do not commute.
Proposition~\ref{sub:correction} predicts that the corrected clients return
identical heads, so that $G_t$ reaches $G_{\cen}$ after one round, while the
local biases $\mu_m$ average to $\mu_g$. The eigenvalues of each $G_m$ are
uniform in $[0.4,4]$, the client means are standard normal, the weights are
Dirichlet, $P=2C$ and $\lambda_H=\lambda_W=1$. Every client returns the
selected head of its corrected objective, so each round is a closed-form map
and involves no solver.

\begin{figure}[!htbp]
\centering
\includegraphics[width=\linewidth]{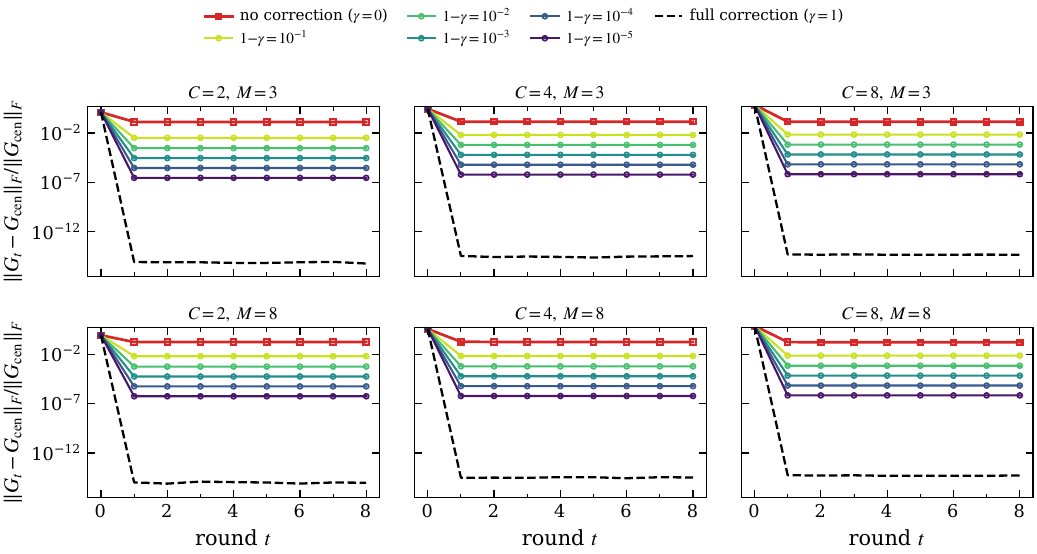}
\caption{\textbf{One-round recovery of the centralized Gram matrix
(Proposition~\ref{sub:correction}).} Median relative error of the shared Gram
matrix to $G_{\cen}$ over 32 instances in each of six problem sizes, under
exact selection. Red squares: no correction; the plateau is the gap of
Proposition~\ref{sub:channels}. Dashed: the correction $C_m$ of
\eqref{sub:correction-term}. Colored: the scaled correction $\gamma C_m$, with
deficit $1-\gamma$ from $10^{-1}$ to $10^{-5}$.}
\label{sub:fig-correction}
\end{figure}

Without the correction, $G_t$ converges to $G_\star$, and its relative
distance to $G_{\cen}$ settles at the relative gap of the instance, with
medians between $1.29\times10^{-1}$ and $1.75\times10^{-1}$. With the correction $C_m$, the relative
distance after one round is at most $7.92\times10^{-15}$ over all 192
instances. The colored curves scale the correction to $\gamma C_m$. Because
$F(W;T)$ is affine in $T$, client $m$ then minimizes
$F(W;(1-\gamma)\Sigma_m+\gamma\Sigma_{\cen})$, and the remaining distance is
proportional to the deficit $1-\gamma$ from $10^{-1}$ to $10^{-5}$.

A second check solves the corrected profiled objective numerically, with
\mbox{L-BFGS} and a positive proximal weight, in a fixed instance with $M=4$ and
$C=P=3$. At the final weight $\rho=10^{-5}$, one corrected round brings $G_t$
within a relative error of $9.13\times10^{-6}$ of $G_{\cen}$, and the four
returned heads lie within a Frobenius distance of $3.37\times10^{-7}$ of the
first. With half of the correction, the returned heads differ by $1.66\times10^{-1}$,
and $G_t$ misses $G_{\cen}$ by a relative error of $1.03\times10^{-1}$. Across 512 random
instances, the identity $F(W;\Sigma_m)+C_m(W)=F(W;\Sigma_{\cen})$ holds with
residuals of at most $3.30\times10^{-16}$ in value and $9.51\times10^{-15}$ in
gradient.
In trained networks, corrected proximal training approaches $G_{\cen}$ within
the first few rounds (Appendix~\ref{app:correction-rounds}).
\FloatBarrier

\section{Details of the DNN experimental setup}
\label{app:exp-details}
This appendix describes the datasets, models, federated training and evaluation used in Section~\ref{sec:experiments} and Appendix~\ref{app:supp-dnn}. Appendix~\ref{app:exp-overview} defines the experimental settings and summarizes the main and supplementary experiments.

\subsection{Datasets}
\label{app:dnn-datasets}
The DNN experiments use five tabular and five image datasets; Table~\ref{tab:datasets} lists their input and target dimensions, numbers of clients and sample counts. For Beijing Multi-Site Air Quality~\citep{beijing_multi-site_air_quality_501,10.1098/rspa.2017.0457}, we predict PM$_{2.5}$ and NO$_2$ concentrations from PM$_{10}$ concentration, temperature, pressure, dew-point temperature, precipitation, wind speed, month and hour. The other tabular datasets are four MuJoCo environments (Swimmer, Hopper, HalfCheetah and Walker; \citealp{6386109}), in which, following \citet{NEURIPS2024_e4748b6b}, actions are predicted from observations. For each environment we use the first 4,000 training transitions from JAT~\citep{gallouedec2024jacktradesmastersome}.

The image datasets differ in how directly they pose multivariate regression. MPIIGaze~\citep{Zhang_2015_CVPR} and 300W-LP~\citep{Zhu_2016_CVPR} are multivariate regression datasets by design, for gaze and head-pose estimation: we predict gaze pitch and yaw from eye images and head pitch, yaw and roll from face images. dSprites~\citep{dsprites17} and 3D Shapes~\citep{3dshapes18} are public datasets labeled with their generative factors. Regressing these factors from images has been studied before~\citep{schott2022visual}, but without a standard protocol, so we choose the regressed factors and the data split: horizontal position, vertical position and scale for dSprites, and object hue, scale and orientation for 3D Shapes. We construct CIFAR-geo as a regression task: we randomly shift and zoom each CIFAR-10~\citep{Krizhevsky09learningmultiple} image and predict the horizontal translation, vertical translation and log scale. MuJoCo observations and action targets retain their stored scale in both training and evaluation. The targets of the other datasets are centered and scaled with pooled training statistics, in training and at evaluation.

For the main experiments, we divide each dataset among clients using two partition schemes. For CIFAR-geo, we design the client groups to have different distributions of shift and zoom. For the other nine datasets, we project each training target onto the leading eigenvector of the pooled training-target second-moment matrix, sort the training samples by this projection and divide them into equally sized groups, one per client. Both schemes give the clients different target means and covariances.

The sample counts in Table~\ref{tab:datasets} are totals over all clients. In CIFAR-geo, each client holds about 3,000 training samples, and the test samples are split evenly at random, 750 per client. In the other nine datasets, every client holds the same number of training samples. In the other four image datasets, test samples are assigned by their target projections to the nearest client training interval, giving test sets of different sizes. This target-based assignment provides an oracle evaluation protocol.

All datasets are used under their published terms. Beijing Multi-Site Air Quality is released under CC BY 4.0. Apache 2.0 covers dSprites, 3D Shapes and the JAT dataset, from which we take the MuJoCo transitions. MPIIGaze is released under CC BY-NC-SA 4.0 for non-commercial scientific use. 300W-LP is synthesized from 300-W, whose data are provided for research purposes only. The official CIFAR-10 page states no license and asks users to cite \citet{Krizhevsky09learningmultiple}.

{\small
\setlength{\tabcolsep}{4pt}
\renewcommand{\arraystretch}{1.10}
\begin{longtable}{@{}lccccc@{}}
\caption{\textbf{Datasets and sample counts} for the main DNN experiments. Test error is reported only for the image datasets (Appendix~\ref{app:prediction}).}\label{tab:datasets}\\
\toprule
Dataset & Input dimension & Target dimension & Clients & Training samples & Test samples \\
\midrule
\endfirsthead
\multicolumn{6}{l}{\textit{Table \thetable\ continued}}\\
\toprule
Dataset & Input dimension & Target dimension & Clients & Training samples & Test samples \\
\midrule
\endhead
\midrule
\multicolumn{6}{r}{\textit{Continued on next page}}\\
\endfoot
\bottomrule
\endlastfoot
\multicolumn{6}{@{}l}{\textit{Tabular datasets}} \\*
Beijing & 8 & 2 & 4 & 9,000 & \textemdash \\
Swimmer & 8 & 2 & 4 & 4,000 & \textemdash \\
Hopper & 11 & 3 & 4 & 4,000 & \textemdash \\
HalfCheetah & 17 & 6 & 4 & 4,000 & \textemdash \\
Walker & 17 & 6 & 4 & 4,000 & \textemdash \\
\midrule
\multicolumn{6}{@{}l}{\textit{Image datasets}} \\*
MPIIGaze & $36\times60\times1$ & 2 & 8 & 12,000 & 3,000 \\
300W-LP & $64\times64\times3$ & 3 & 8 & 12,000 & 3,000 \\
dSprites & $64\times64\times1$ & 3 & 8 & 12,000 & 3,000 \\
3D Shapes & $64\times64\times3$ & 3 & 8 & 12,000 & 3,000 \\
CIFAR-geo & $40\times40\times3$ & 3 & 4 & 12,000 & 3,000 \\
\end{longtable}
}

\subsection{Models}
The main experiments use a residual MLP (ResMLP) for the tabular datasets and ResNet-18~\citep{He_2016_CVPR} for the image datasets. The ResMLP follows the architecture of \citet{NEURIPS2025_93da65de}, with three residual blocks of width 1024 and PReLU activations~\citep{He_2015_ICCV}. All networks are trained from scratch and output a 512-dimensional feature through a linear layer with no activation.

Architecture comparisons also use ResNet-34~\citep{He_2016_CVPR} and a residual CNN implemented as a ResNet-18 variant. The residual CNN has four stages of two basic blocks with 64, 128, 256 and 512 channels. It replaces ResNet-18's BatchNorm with eight-group GroupNorm~\citep{Wu_2018_ECCV} and its $7\times7$ stride-2 stem and max pooling with a $3\times3$ stride-1 stem. The first block of each of the last three stages downsamples with stride 2. For ResNet-18 and ResNet-34, the classification layer is replaced by a 512-to-512 linear feature projection after global average pooling. The residual CNN uses the same pooling and projection.

\subsection{Federated training}
\label{app:return-protocol}
Unless stated otherwise, all DNN experiments share the round protocol, local objective and optimizer below; the four training procedures differ only in the terms added to the objective. In each round, all clients receive the shared head $(W_t,b_t)$, train it together with their private backbone, and upload the resulting head. The server averages these heads with weights $p_m=N_m/N$. Each backbone and any BatchNorm running statistics remain private throughout training. The main experiments start clients from the same randomly initialized backbone and use eight communication rounds and $E=4000$ local epochs per round. Appendix~\ref{app:esweep-dnn} varies $E$, and Appendix~\ref{app:dnn-scope} also studies full-model averaging.

Training uses a minibatch version of \eqref{sub:loss}, with $H_m$ given by the network features, $\lambda_H=\lambda_W=0.01$ and zero optimizer weight decay. The head bias is unregularized. The fully active condition $\lambda_{\min}(\Sigma_m)>\lambda_H\lambda_W$ holds for every client in the reported experiments: the smallest client covariance eigenvalue is $9.11\times10^{-3}$, compared with $\lambda_H\lambda_W=10^{-4}$.

The main experiments compare four training procedures. Ordinary training optimizes this objective. Proximal training adds $\rho\|W-W_t\|_F^2/2$ with $\rho=10^{-3}$ to approximate the selection rule of Lemma~\ref{sub:selection-main}. Corrected ordinary training adds the head-only moment correction $C_m(W)$ in \eqref{sub:correction-term}, computed from client and pooled training-target moments. Corrected proximal training adds both terms.

The optimizer is AdamW~\citep{loshchilov2018decoupled}, reinitialized for each client's local training in every round. Its learning rate follows a cosine schedule from $10^{-3}$ to $10^{-5}$ within each round. We use shuffled minibatches of size 256 and clip the global gradient norm at 5.

\subsection{Evaluation}
\label{app:local-screen}
We evaluate each run by the geometry of the shared head, the local fit and prediction error of the trained models, and display the results as curves over rounds and as ellipses.

\paragraph{Head geometry and dynamics.}
The theoretical targets $G_\star$ and $G_{\cen}$ are computed from training-target moments, client weights and regularization. The procedures without correction are compared with $G_\star$, and the corrected procedures with $G_{\cen}$. For Gram matrices $G$ and $G'$, we report relative matrix error and direction error:
\[
 e_F(G,G')=\frac{\|G-G'\|_F}{\|G'\|_F},\qquad
 d(G,G')=\left\|\frac{G}{\|G\|_F}-\frac{G'}{\|G'\|_F}\right\|_F.
\]
Endpoint errors compare the final shared Gram $G_8$ with these fixed targets. For training without correction, trajectory error compares $G_t$ with the exact recursion \eqref{sub:gram-map} started from the same $G_0$, taking the median $e_F$ over rounds 1--8 for each run. Table~\ref{tab:dnn-training-summary} reports medians of these run-level quantities. For these runs, the weighted distance from the uploaded heads to the selected heads is
\[
 \delta_t=\sum_m p_m\|\widetilde W_{m,t}-\Pi_m(W_t)\|_F,
 \qquad \delta_{\mathrm{last}}=\delta_7.
\]

\paragraph{Local fitting and prediction.}
For training without correction, we evaluate the original objective \eqref{sub:loss} on each trained local model in evaluation mode before server averaging and report the relative gap $(\mathcal L_m-\mathcal L_m^\star)/\mathcal L_m^\star$. For the eigenvalues $s_i$ of $\Sigma_m$, the regression-UFM optimum~\citep{NEURIPS2024_e4748b6b} gives
\[
 \mathcal L_m^\star
 =\sum_{i:s_i>\lambda_H\lambda_W}
       \left(\sqrt{\lambda_H\lambda_W s_i}-\frac{\lambda_H\lambda_W}{2}\right)
  +\frac12\sum_{i:s_i\leq\lambda_H\lambda_W}s_i.
\]
Prediction uses the final shared head with each client's own backbone. MSE averages squared errors over samples and output coordinates, weighting each client by the number of samples in its evaluated subset; it excludes all penalties and correction terms.

\paragraph{Curves and geometric displays.}
The round curves show means of per-seed errors with one sample standard deviation, using the same seeds across compared procedures. Geometric displays use seed 0 and draw the boundary of $\mathcal E(G)=\{G^{1/2}u:\|u\|_2\leq1\}$, with common coordinates and equal axis units. HalfCheetah and Walker use the first three target coordinates of their six-dimensional Grams; all reported errors use the full matrices.

\subsection{Experiment overview}
\label{app:exp-overview}
We call a dataset paired with a network a setting. The main experiments use ten settings, pairing each tabular dataset with ResMLP and each image dataset with ResNet-18. Every setting is trained with the four procedures of Appendix~\ref{app:return-protocol} and five seeds $\{0,1,2,3,4\}$; Table~\ref{tab:dnn-cohorts} shows this coverage.

The supplementary comparisons in Table~\ref{tab:dnn-supp-overview} use three seeds $\{0,1,2\}$. Mean centering and the local-epoch and proximal-weight sweeps use the same ten settings. The architecture comparisons add the residual CNN and ResNet-34 on the five image datasets; the data-size comparison enlarges the training sets of CIFAR-geo and 3D Shapes; and the full-model comparison averages the backbones as well as the heads on the five tabular datasets. The prediction comparison uses the five image settings. Comparisons between training procedures pair runs by setting, seed, feature and head penalties, and local-epoch budget.

The alignment experiments in Appendix~\ref{app:alignment}, which Table~\ref{tab:dnn-supp-overview} does not list, impose the selection rule by aligning each trained head to the broadcast head instead of adding the proximal regularizer. They compare aligned, proximal and ordinary training on six settings (Beijing, Swimmer, Hopper, HalfCheetah, 3D Shapes and CIFAR-geo), with five seeds per setting and three seeds for the moment-corrected comparison, and measure the effect of alignment on a single aggregation in 60 runs.

\begin{table}[!htbp]
\centering\small
\setlength{\tabcolsep}{4pt}
\renewcommand{\arraystretch}{1.03}
\caption{\textbf{Main DNN experiments.} Coverage by setting and procedure in Appendix~\ref{app:add-results}.}
\label{tab:dnn-cohorts}
\label{tab:dnn-main-overview}
\begin{tabular*}{\linewidth}{@{\extracolsep{\fill}}lcccc@{}}
\toprule
\multicolumn{5}{c}{$\checkmark$: experiment repeated with five random seeds $\{0,1,2,3,4\}$} \\
\midrule
 & \multicolumn{2}{c}{Without correction} & \multicolumn{2}{c}{With moment correction} \\
\cmidrule(lr){2-3}\cmidrule(l){4-5}
Setting & Ordinary & Proximal & Ordinary & Proximal \\
\midrule
\multicolumn{5}{@{}l}{\textit{Tabular datasets}} \\
Beijing & $\checkmark$ & $\checkmark$ & $\checkmark$ & $\checkmark$ \\
Swimmer & $\checkmark$ & $\checkmark$ & $\checkmark$ & $\checkmark$ \\
Hopper & $\checkmark$ & $\checkmark$ & $\checkmark$ & $\checkmark$ \\
HalfCheetah & $\checkmark$ & $\checkmark$ & $\checkmark$ & $\checkmark$ \\
Walker & $\checkmark$ & $\checkmark$ & $\checkmark$ & $\checkmark$ \\
\midrule
\multicolumn{5}{@{}l}{\textit{Image datasets}} \\
MPIIGaze & $\checkmark$ & $\checkmark$ & $\checkmark$ & $\checkmark$ \\
300W-LP & $\checkmark$ & $\checkmark$ & $\checkmark$ & $\checkmark$ \\
dSprites & $\checkmark$ & $\checkmark$ & $\checkmark$ & $\checkmark$ \\
3D Shapes & $\checkmark$ & $\checkmark$ & $\checkmark$ & $\checkmark$ \\
CIFAR-geo & $\checkmark$ & $\checkmark$ & $\checkmark$ & $\checkmark$ \\
\bottomrule
\end{tabular*}
\par\smallskip
\begin{minipage}{\linewidth}\footnotesize
The first five settings use ResMLP; the five image settings use ResNet-18. Each procedure uses eight rounds and $E=4000$. Proximal training adds $\rho\|W-W_t\|_F^2/2$ with $\rho=10^{-3}$. Moment correction adds $C_m$ from Proposition~\ref{sub:correction}.
\end{minipage}
\end{table}

\begin{table}[!htbp]
\centering\footnotesize
\setlength{\tabcolsep}{2pt}
\renewcommand{\arraystretch}{1.04}
\caption{\textbf{Supplementary DNN experiments.} Coverage by setting and condition; sweep columns give coverage at every parameter value.}
\label{tab:dnn-supp-overview}
\begin{tabular*}{\linewidth}{@{\extracolsep{\fill}}l*{11}{c}@{}}
\toprule
\multicolumn{12}{c}{$\checkmark$: experiment repeated with three random seeds $\{0,1,2\}$} \\
\midrule
 & \ref{app:moment-dnn} & \ref{app:esweep-dnn} & \ref{app:proximal-weight} & \multicolumn{7}{c}{\ref{app:dnn-scope}} & \ref{app:prediction} \\
\cmidrule(lr){5-11}
Setting & Means & $E$ & $\rho$ & CNN & \shortstack{CNN\\$+C_m$} & RN34 & Data & \shortstack{Full\\$E=4000$} & \shortstack{Full\\$E=400$} & \shortstack{Full\\$+C_m$} & Predict \\
\midrule
\multicolumn{12}{@{}l}{\textit{Tabular datasets}} \\
Beijing & $\checkmark$ & $\checkmark$ & $\checkmark$ & \textemdash & \textemdash & \textemdash & \textemdash & $\checkmark$ & $\checkmark$ & $\checkmark$ & \textemdash \\
Swimmer & $\checkmark$ & $\checkmark$ & $\checkmark$ & \textemdash & \textemdash & \textemdash & \textemdash & $\checkmark$ & $\checkmark$ & $\checkmark$ & \textemdash \\
Hopper & $\checkmark$ & $\checkmark$ & $\checkmark$ & \textemdash & \textemdash & \textemdash & \textemdash & $\checkmark$ & $\checkmark$ & $\checkmark$ & \textemdash \\
HalfCheetah & $\checkmark$ & $\checkmark$ & $\checkmark$ & \textemdash & \textemdash & \textemdash & \textemdash & $\checkmark$ & $\checkmark$ & $\checkmark$ & \textemdash \\
Walker & $\checkmark$ & $\checkmark$ & $\checkmark$ & \textemdash & \textemdash & \textemdash & \textemdash & $\checkmark$ & $\checkmark$ & $\checkmark$ & \textemdash \\
\midrule
\multicolumn{12}{@{}l}{\textit{Image datasets}} \\
MPIIGaze & $\checkmark$ & $\checkmark$ & $\checkmark$ & $\checkmark$ & $\checkmark$ & $\checkmark$ & \textemdash & \textemdash & \textemdash & \textemdash & $\checkmark$ \\
300W-LP & $\checkmark$ & $\checkmark$ & $\checkmark$ & $\checkmark$ & $\checkmark$ & $\checkmark$ & \textemdash & \textemdash & \textemdash & \textemdash & $\checkmark$ \\
dSprites & $\checkmark$ & $\checkmark$ & $\checkmark$ & $\checkmark$ & $\checkmark$ & $\checkmark$ & \textemdash & \textemdash & \textemdash & \textemdash & $\checkmark$ \\
3D Shapes & $\checkmark$ & $\checkmark$ & $\checkmark$ & $\checkmark$ & $\checkmark$ & $\checkmark$ & $\checkmark$ & \textemdash & \textemdash & \textemdash & $\checkmark$ \\
CIFAR-geo & $\checkmark$ & $\checkmark$ & $\checkmark$ & $\checkmark$ & $\checkmark$ & $\checkmark$ & $\checkmark$ & \textemdash & \textemdash & \textemdash & $\checkmark$ \\
\bottomrule
\end{tabular*}
\par\smallskip
\begin{minipage}{\linewidth}\footnotesize
An em dash marks a setting outside that comparison. Means: clientwise target centering. $E$: seven local-epoch budgets. $\rho$: seven proximal weights. CNN and RN34: residual CNN and ResNet-34. CNN + $C_m$: residual CNN with moment correction at $E=4000$. Data: larger training sets with $E=1000$. Full: backbone and head averaging at the stated budget. Full + $C_m$: full-model averaging with moment correction at $E=4000$. Predict: training and test MSE for ordinary, proximal and corrected proximal training.
\end{minipage}
\end{table}

\paragraph{Compute.}
Computation used NVIDIA GH200 GPUs (GH200 120GB model; 97,871 MiB of visible device memory). Each run used one GPU, with up to eight independent runs sharing a GPU. We estimate 1,415 occupied GPU-hours for the 764 successful runs underlying the reported DNN results by merging their execution intervals on each GPU, including intervals shared with other runs. Experiment configurations, manifests, results and figure-source hashes are retained in the experimental records.

\section{Supplementary DNN experimental results}
\label{app:supp-dnn}
This appendix reports the DNN comparisons behind Section~\ref{sec:experiments} and their extensions. Appendix~\ref{app:add-results} compares ordinary and proximal training with and without moment correction. Appendices~\ref{app:moment-dnn}--\ref{app:proximal-weight} test the gap decomposition and the dependence on the local budget and the proximal weight, and Appendix~\ref{app:alignment} imposes the selection rule by alignment instead of the proximal regularizer. Appendix~\ref{app:dnn-scope} extends the comparison to other architectures, larger training sets and full-model averaging, and Appendix~\ref{app:prediction} examines test error, which lies outside the UFM analysis. Appendix~\ref{app:exp-overview} lists the settings and seeds of each comparison.

\subsection{Selection and moment correction}
\label{app:add-results}
\label{app:proximal-results}
\label{app:anchor-dnn}
\label{app:correction-rounds}
\label{app:dnn-endpoints}
We first compare ordinary and proximal training across all main experiments, measuring agreement with $G_\star$ without correction and with $G_{\cen}$ under moment correction. Table~\ref{tab:dnn-training-summary} summarizes these comparisons. Without correction, the proximal regularizer improves endpoint agreement with $G_\star$ in every pair. It also reduces the median trajectory error relative to the exact iteration and the median distance to the selected heads.

\begin{table}[!htbp]
 \centering\small
 \setlength{\tabcolsep}{3pt}
 \caption{\textbf{Selection and moment correction.} Medians over the same 50 dataset--seed combinations. Without correction, endpoint errors compare with $G_\star$ and trajectory errors with the exact iteration from each run's $G_0$. All corrected errors compare with $G_{\cen}$. Bold marks the smaller value in each column.}
 \label{tab:dnn-training-summary}
 \label{tab:dnn-correction-first}
 \begin{tabular}{@{}lcccccc@{}}
 \toprule
 & \multicolumn{3}{c}{Without correction} & \multicolumn{3}{c}{With moment correction} \\
 \cmidrule(lr){2-4}\cmidrule(l){5-7}
 Training & Endpoint $e_F$ & Trajectory $e_F$ & $\delta_{\rm last}$ & Round 1 $d$ & Round 1 $e_F$ & Round 8 $e_F$ \\
 \midrule
 Ordinary & $1.63\times10^{-2}$ & $3.44\times10^{-2}$ & $1.85\times10^{-1}$ & $8.35\times10^{-2}$ & $3.08\times10^{-1}$ & $5.06\times10^{-3}$ \\
 Proximal & {\boldmath$2.90\times10^{-4}$} & {\boldmath$3.39\times10^{-4}$} & {\boldmath$4.47\times10^{-3}$} & {\boldmath$4.21\times10^{-3}$} & {\boldmath$2.29\times10^{-2}$} & {\boldmath$7.08\times10^{-5}$} \\
 \bottomrule
 \end{tabular}
\end{table}

With moment correction, proximal training reaches a smaller residual to $G_{\cen}$ in the first round and improves further over subsequent rounds. Figures~\ref{fig:dnn-selection} and~\ref{fig:dnn-selection-appendix} show these dynamics for all ten settings under proximal training. Appendix~\ref{sub:num-correction} reports the corresponding UFM check under exact selection.

Figure~\ref{fig:dnn-ordinary} gives the ordinary-training counterpart to Figures~\ref{fig:dnn-selection} and~\ref{fig:dnn-selection-appendix}, using the same seeds, axis limits and geometric scales. Figure~\ref{fig:dnn-protocol-comparison} directly compares ordinary and proximal training against $G_\star$ without correction and against $G_{\cen}$ with correction.

\begin{figure}[!htbp]
 \centering
 \includegraphics[width=\linewidth]{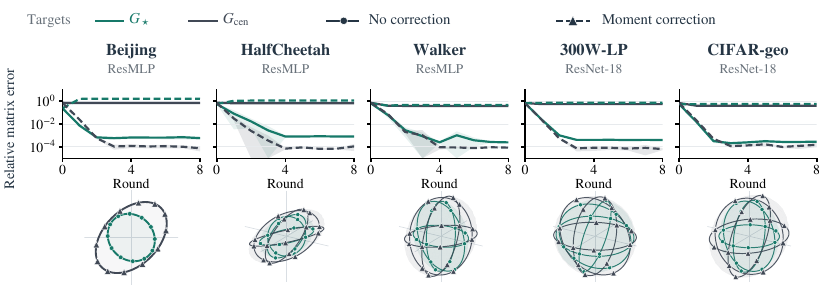}
 \caption{\textbf{Proximal training in the other five settings.} The design matches Figure~\ref{fig:dnn-selection}; six-dimensional settings use coordinates 1--3. All runs use $\rho=10^{-3}$.}
 \label{fig:dnn-selection-appendix}
\end{figure}

\begin{figure}[!htbp]
 \centering
 \includegraphics[width=\linewidth]{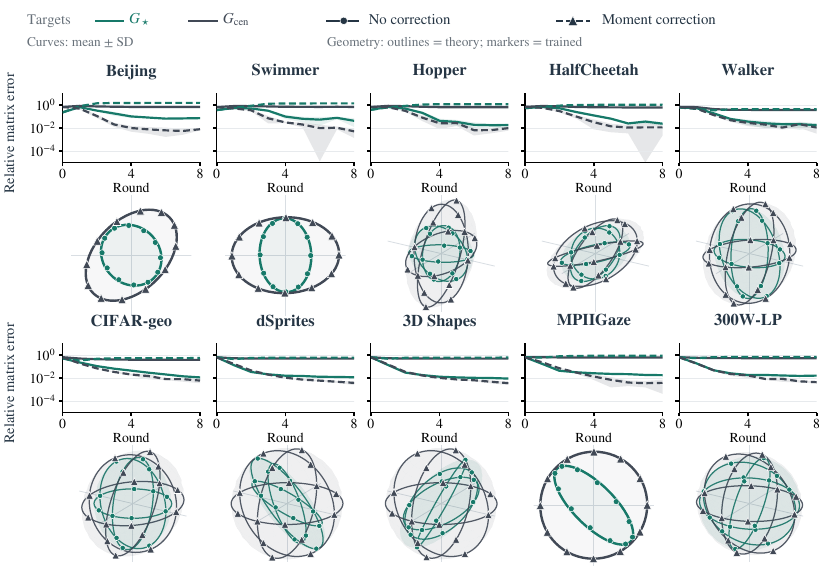}
 \caption{\textbf{Ordinary training with and without moment correction.} The design matches Figures~\ref{fig:dnn-selection} and~\ref{fig:dnn-selection-appendix}, with $\rho=0$. Curves show seed means $\pm1$ standard deviation; geometry uses seed 0. Both variants retain the feature and head penalties.}
 \label{fig:dnn-ordinary}
\end{figure}

\begin{figure}[!t]
 \centering
 \includegraphics[width=\linewidth]{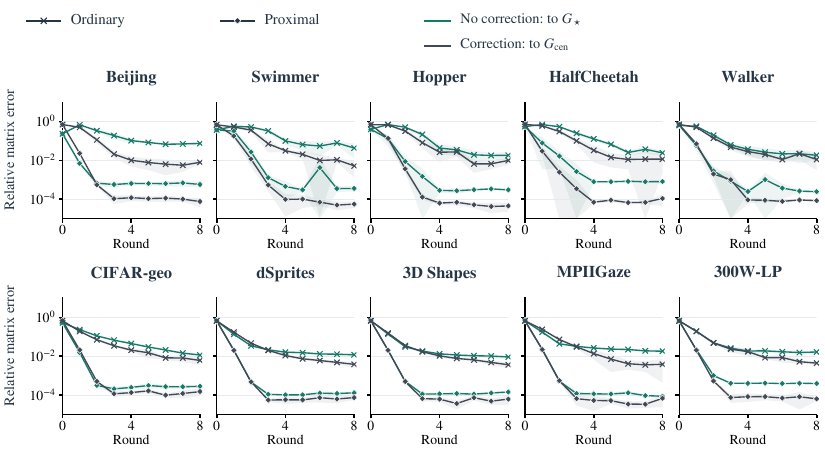}
 \caption{\textbf{Direct comparison of ordinary and proximal training.} Green curves measure training without correction against $G_\star$; graphite gray curves measure training with moment correction against $G_{\cen}$. Within each color, crosses and filled diamonds denote ordinary and proximal training; both use solid lines. Curves use the same five seeds per setting as Figures~\ref{fig:dnn-selection} and~\ref{fig:dnn-ordinary}, with means and one-standard-deviation bands. All panels share their axis limits.}
 \label{fig:dnn-protocol-comparison}
\end{figure}

\subsection{Removing the mean contribution}
\label{app:moment-dnn}
To test the gap decomposition of Proposition~\ref{sub:channels}, we remove the between-client mean term by centering each client's targets and measure the remaining gap under ordinary training. Centering retains each client's covariance. We evaluate this change on all ten settings, using ResMLP for the tabular datasets and ResNet-18 for the image datasets, with three seeds per setting. The pooled reference then becomes $G_{\mathrm{within}}$, while $G_\star$ remains unchanged. In all 30 dataset--seed combinations, $G_{\mathrm{within}}-G_8$ is positive definite, consistent with a remaining covariance and averaging gap. The median direction error to $G_\star$ is $9.48\times10^{-3}$, and the median ratio of the observed gap trace to $\Tr(\mathcal M_\Sigma+\mathcal M_A)$ is $1.19$ (Figure~\ref{fig:dnn-extensions}).

Centering removes only the mean term, so the covariance and averaging terms remain together in this comparison. Two of the 30 runs have a direction error to $G_\star$ above $0.05$, and in four runs this error differs from that of the uncentered control by more than $0.02$.

\begin{figure}[!htbp]
 \centering
 \includegraphics[width=\linewidth]{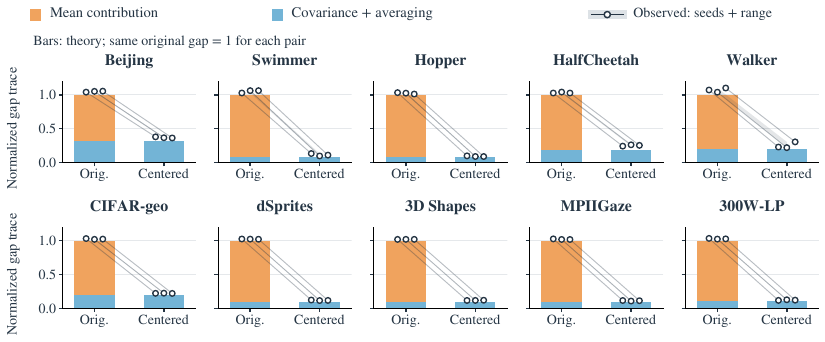}
 \caption{\textbf{Removing the mean contribution leaves a residual gap.} Stacked bars separate the predicted mean contribution $\Tr(\mathcal M_\mu)$ from $\Tr(\mathcal M_\Sigma+\mathcal M_A)$ before and after centering each client's targets. Both conditions use the original theoretical gap trace as denominator, computed per seed before averaging the components. Open markers show all three paired seeds, joined by thin lines; pale shading spans their range. Their observed gaps are $\Tr(G_{\cen}-G_8)$ before centering and $\Tr(G_{\mathrm{within}}-G_8)$ afterward, on the same scale.}
 \label{fig:dnn-extensions}
\end{figure}

\subsection{Local epochs and the endpoint}
\label{app:esweep-dnn}
To see how close ordinary training comes to $G_\star$ as local training grows, we vary the number of local epochs $E$ per round. The local-budget sweep covers all ten settings with $E\in\{50,100,200,400,1000,2000,4000\}$ and seeds $\{0,1,2\}$, keeping the other training parameters fixed.
We compare the direction error of the server Gram $G_8$ to $G_\star$ across local training budgets (Figure~\ref{fig:dnn-esweep}). The median over setting-level medians is $4.57\times10^{-1}$ at $E=50$, $1.76\times10^{-1}$ at $E=400$, $4.33\times10^{-2}$ at $E=1000$ and $1.39\times10^{-2}$ at $E=4000$. The first tested budget with median direction error below $0.1$ is $E=400$ for Beijing and CIFAR-geo, $E=1000$ for Walker and the other four image settings, and $E=2000$ for Swimmer, Hopper and HalfCheetah.
All budgets use eight rounds, so total local training grows with $E$, by a factor of 80 from $E=50$ to $E=4000$. A similar qualitative trend toward the BW prediction is observed in the profiled UFM experiments in Appendix~\ref{sub:num-finite}, although the update rules and evaluation protocols differ.

\begin{figure}[!htbp]
 \centering
 \includegraphics[width=\linewidth]{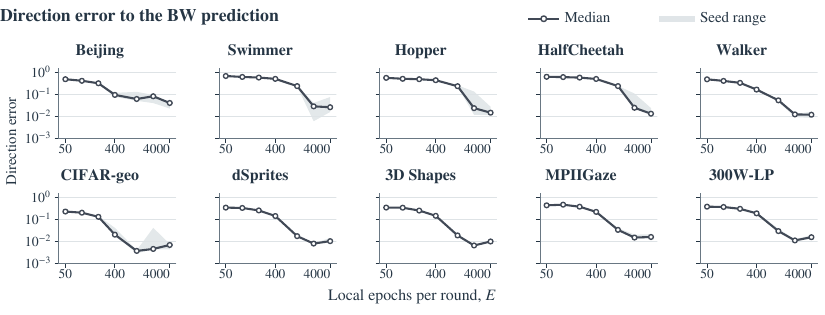}
 \caption{\textbf{Local epochs and the endpoint.} Direction error $d(G_8,G_\star)$ versus local epochs $E$, with shared logarithmic axes. Curves are medians over seeds $\{0,1,2\}$ and bands span these seeds. Top row: ResMLP; bottom row: ResNet-18.}
 \label{fig:dnn-esweep}
\end{figure}

\subsection{Dependence on the proximal weight}
\label{app:proximal-weight}
To see how the proximal weight affects agreement with the selection rule and with the local optima, we vary $\rho$ and measure the endpoint error, the distance to the selected heads and the relative original-objective gap. The weight sweep uses all ten settings with seeds $\{0,1,2\}$, giving the same 30 dataset--seed combinations at every weight. It compares $\rho=0$ with $\rho\in\{10^{-5},10^{-4},10^{-3},10^{-2},10^{-1},1\}$, and every run uses eight rounds and $E=4000$ local epochs.

At $\rho=10^{-3}$, the distance to the selected heads and the endpoint relative matrix error $e_F(G_8,G_\star)$ both decrease relative to $\rho=0$ in all 30 dataset--seed combinations. The median final distance $\delta_{\rm last}$ falls from $1.91\times10^{-1}$ to $5.33\times10^{-3}$, while the median relative original-objective gap rises from $1.43\times10^{-5}$ to $4.05\times10^{-5}$. Increasing $\rho$ further moves the clients away from their unpenalized optima and worsens endpoint agreement (Table~\ref{tab:dnn-anchor}, Figure~\ref{fig:dnn-rho-sweep}). At $\rho=10^{-5}$, the median $e_F$ is $7.38\times10^{-3}$, compared with $1.71\times10^{-2}$ at $\rho=0$ and $3.08\times10^{-4}$ at $\rho=10^{-3}$. In Lemma~\ref{sub:selection-main}, the selected head arises from minimizing each penalized objective globally and then letting $\rho\downarrow0$, whereas with finite local training a very small weight has a weaker effect. The medians of $\delta_{\rm last}$ and $e_F$ are therefore U-shaped in $\rho$, with minima at $\rho=10^{-3}$ (Table~\ref{tab:dnn-anchor}).

\begin{table}[!htbp]
 \centering\small
 \caption{\textbf{Proximal weight and local fitting.} Medians at the final round over the same 30 dataset--seed combinations at each weight. Relative gaps use the original loss without the proximal regularizer and first take the median across clients. Bold marks the smallest value in each column.}
 \label{tab:dnn-anchor}
 \begin{tabular}{@{}lccc@{}}
\toprule
Proximal weight $\rho$ & \shortstack{Distance to selected heads\\$\delta_{\rm last}$} & Relative original-objective gap & $e_F(G_8,G_\star)$ \\
\midrule
$0$ & $1.91\times10^{-1}$ & {\boldmath$1.43\times10^{-5}$} & $1.71\times10^{-2}$ \\
$10^{-5}$ & $1.19\times10^{-1}$ & $1.49\times10^{-5}$ & $7.38\times10^{-3}$ \\
$10^{-4}$ & $3.73\times10^{-2}$ & $1.53\times10^{-5}$ & $9.14\times10^{-4}$ \\
$10^{-3}$ & {\boldmath$5.33\times10^{-3}$} & $4.05\times10^{-5}$ & {\boldmath$3.08\times10^{-4}$} \\
$10^{-2}$ & $3.31\times10^{-2}$ & $1.40\times10^{-3}$ & $4.17\times10^{-3}$ \\
$10^{-1}$ & $1.32\times10^{-1}$ & $1.81\times10^{-2}$ & $4.26\times10^{-2}$ \\
$1$ & $3.56\times10^{-1}$ & $1.57\times10^{-1}$ & $3.80\times10^{-1}$ \\
\bottomrule
\end{tabular}

\end{table}

\begin{figure}[!htbp]
 \centering
 \includegraphics[width=\linewidth]{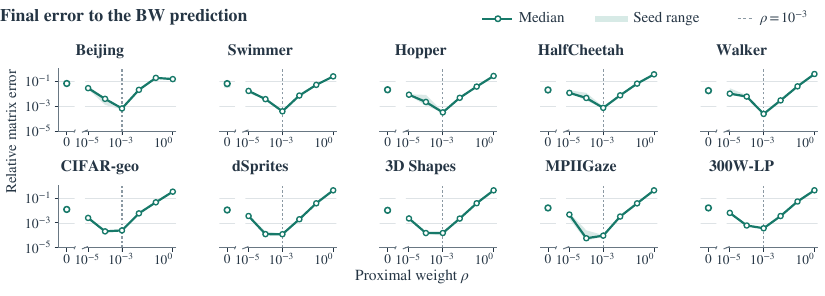}
 \caption{\textbf{Weight of the proximal regularizer.} Endpoint relative matrix error $e_F(G_8,G_\star)$ with $E=4000$. In each setting, curves are medians over seeds $\{0,1,2\}$ and pale bands span these seeds. The zero-weight baseline is separate from the positive logarithmic axis; dashed guides mark $\rho=10^{-3}$.}
 \label{fig:dnn-rho-sweep}
\end{figure}

\subsection{Selection by alignment}
\label{app:alignment}
Besides the proximal regularizer, the selection rule can be imposed by aligning each trained head to the broadcast head. We use this second implementation to examine (1) whether the endpoint again reaches $G_\star$, (2) whether moment correction still recovers $G_{\cen}$, and (3) how much averaging heads in mismatched coordinates raises the training objective. Each round has two stages: local training followed by an orthogonal alignment to the broadcast head. Local training uses the objective in Appendix~\ref{app:return-protocol} with $\rho=0$, either with or without moment correction. Let $\widehat W_{m,t}$ and $\widehat h_{m,t}(x)$ be the resulting head and feature vector. Client $m$ computes
\[
 \begin{gathered}
 Q_{m,t}\in\underset{Q\in\mathbb R^{P\times P}:\,Q^\top Q=I_P}{\arg\min}
       \|\widehat W_{m,t}Q-W_t\|_F^2,\\
 \widetilde W_{m,t}=\widehat W_{m,t}Q_{m,t},\qquad
 \widetilde h_{m,t}(x)=Q_{m,t}^\top\widehat h_{m,t}(x).
 \end{gathered}
\]
The client applies the feature transformation to its final linear feature layer, including its bias, and retains that layer for the next round. The head bias is unchanged by alignment. This joint transformation preserves the local predictions, head Gram matrix and feature and head penalties. The server then averages the aligned heads and their biases with weights $p_m=N_m/N$.

The implementation computes the aligned head from $C\times C$ Gram matrices and obtains $Q_{m,t}$ by SVD-based orthogonal completion. When both the returned and broadcast heads have full row rank, the aligned head is the unique head with the observed Gram matrix that is closest to $W_t$ in Frobenius norm, by the projection formula in Appendix~\ref{sub:selection-proof}. When local training reaches the UFM-optimal Gram $G_m$, this gives the selected head $\Pi_m(W_t)$ in Lemma~\ref{sub:selection-main}(i).

\paragraph{Geometry over communication rounds.}
We compare ordinary, proximal ($\rho=10^{-3}$) and aligned training on six settings: Beijing, Swimmer, Hopper and HalfCheetah with ResMLP, and 3D Shapes and CIFAR-geo with ResNet-18. Each setting uses all five seeds $\{0,1,2,3,4\}$, giving 30 matched comparisons. Runs use eight rounds and $E=4000$ local epochs, with the other training parameters from Appendix~\ref{app:return-protocol}. Alignment reduces the distance to the selected heads, the trajectory error and the endpoint error to $G_\star$ in all 30 comparisons with ordinary training, and brings the median endpoint error to $1.10\times10^{-4}$, compared with $3.24\times10^{-4}$ under proximal training and $1.86\times10^{-2}$ under ordinary training (Table~\ref{tab:dnn-alignment}). Relative to proximal training, alignment reduces trajectory error in all 30 comparisons and endpoint error to $G_\star$ in 29.

The moment-corrected comparison uses the same six settings with seeds $\{0,1,2\}$, giving 18 matched comparisons. Under exact UFM optimization, the corrected local optima share $G_{\cen}$, and alignment to the common broadcast selects the same head at every client, which yields the one-round recovery in Proposition~\ref{sub:correction}. \textbf{\boldmath In the trained networks, corrected alignment reaches a median relative error of $8.38\times10^{-5}$ to $G_{\cen}$ after the first round, whereas corrected proximal training needs three rounds to reach a comparable error} ($2.38\times10^{-2}$, $6.54\times10^{-4}$ and $9.84\times10^{-5}$ after rounds 1, 2 and 3).

\begin{table}[!htbp]
 \centering\small
 \setlength{\tabcolsep}{4pt}
 \caption{\textbf{Alignment, proximal training and ordinary training.} Medians over 30 matched dataset--seed combinations without correction and 18 with moment correction. The distance $\delta_{\rm last}$ and trajectory error are defined in Appendix~\ref{app:local-screen}. Without correction, endpoint error compares $G_8$ with $G_\star$; corrected errors compare $G_t$ with $G_{\cen}$. Bold marks the smallest value in each column.}
 \label{tab:dnn-alignment}
 \begin{tabular}{lccccc}
\toprule
 & \multicolumn{3}{c}{Without correction ($n=30$)} & \multicolumn{2}{c}{Corrected ($n=18$)} \\
\cmidrule(lr){2-4}\cmidrule(lr){5-6}
Training & $\delta_{\rm last}$ & Trajectory $e_F$ & Endpoint $e_F$ & Round 1 $e_F$ & Round 8 $e_F$ \\
\midrule
Ordinary & $1.87\times10^{-1}$ & $6.75\times10^{-2}$ & $1.86\times10^{-2}$ & $5.20\times10^{-1}$ & $7.07\times10^{-3}$ \\
Proximal & $6.35\times10^{-3}$ & $3.63\times10^{-4}$ & $3.24\times10^{-4}$ & $2.38\times10^{-2}$ & $7.06\times10^{-5}$ \\
Aligned & {\boldmath$1.54\times10^{-4}$} & {\boldmath$1.12\times10^{-4}$} & {\boldmath$1.10\times10^{-4}$} & {\boldmath$8.38\times10^{-5}$} & {\boldmath$6.27\times10^{-5}$} \\
\bottomrule
\end{tabular}

\end{table}

\paragraph{Cost of mismatched coordinates.}
To isolate the effect of solution multiplicity on one server update, we aggregate the same first-round local models twice, once as trained and once after alignment, and evaluate the weighted training objective~\eqref{sub:pooled-loss} with the corresponding private features and the same head bias. The same six settings, with five seeds each and common or independently initialized backbones, give 60 runs; all clients start from the same head and train for $E=4000$ epochs without correction or a proximal regularizer. Alignment lowers the objective in all 60 runs and removes a median of $4.6\%$ to $23.7\%$ of it across settings, similarly for the two initializations (Table~\ref{tab:dnn-alignment-aggregation}). Averaging heads in mismatched coordinates thus raises the training objective within a single round.

\begin{table}[!htbp]
 \centering\small
 \caption{\textbf{Cost of mismatched coordinates in one aggregation.} The same first-round local models are averaged twice, with the heads as trained and after alignment to the broadcast head, giving training objectives~\eqref{sub:pooled-loss} $L_{\rm raw}$ and $L_{\rm al}$. Each entry is the median over five seeds of $(L_{\rm raw}-L_{\rm al})/L_{\rm raw}$, the fraction of the objective that alignment removes. Columns give the initialization of the private backbones; all clients start from the same head. Tabular settings use ResMLP and image settings use ResNet-18.}
 \label{tab:dnn-alignment-aggregation}
 \begin{tabular}{lcc}
\toprule
 & \multicolumn{2}{c}{Fraction of the objective removed by alignment} \\
\cmidrule(lr){2-3}
Setting & Common backbone init. & Independent backbone init. \\
\midrule
\multicolumn{3}{l}{\textit{Tabular datasets}} \\
Beijing & 15.7\% & 15.4\% \\
Swimmer & 10.1\% & 10.5\% \\
Hopper & 17.7\% & 18.1\% \\
HalfCheetah & 23.6\% & 23.7\% \\
\midrule
\multicolumn{3}{l}{\textit{Image datasets}} \\
3D Shapes & 4.6\% & 6.0\% \\
CIFAR-geo & 7.0\% & 8.3\% \\
\bottomrule
\end{tabular}

\end{table}

\FloatBarrier

\subsection{Architecture, data size and full-model averaging}
\label{app:dnn-scope}
To test whether the agreement with $G_\star$ depends on the backbone, the amount of training data or the averaging scheme, we repeat the experiments with other backbones, larger training sets and full-model averaging. Table~\ref{tab:robustness} first compares network architectures and data sizes, then full-model averaging.

\paragraph{Architecture and data size.}
The residual CNN and ResNet-34 comparisons use ordinary training on all five image datasets with seeds $\{0,1,2\}$. The larger-data setting, also with ordinary training and ResNet-18, increases 3D Shapes from 12,000 to 48,000 training samples, keeping its eight clients, and CIFAR-geo from 12,000 to 40,000 training samples partitioned into eight clients by target projection. The corresponding main CIFAR-geo experiment uses four designed client groups. Both larger-data experiments use $E=1000$ local epochs per round, compared with $E=4000$ in the main experiments. In all three comparisons, the median direction error to $G_\star$ lies between $9.55\times10^{-3}$ and $1.45\times10^{-2}$, and every run is closer in direction to $G_\star$ than to $G_{\cen}$ (Table~\ref{tab:robustness}).

The comparison between ordinary and corrected training in Appendix~\ref{app:proximal-results} also covers the five residual-CNN settings, with three seeds each. All 65 ordinary-training runs, 50 from the main experiments and 15 with the residual CNN, are closer in direction to $G_\star$ than to $G_{\cen}$. For the residual CNN, moment correction reduces the median direction error to $G_{\cen}$ from $5.24\times10^{-1}$ to $1.93\times10^{-3}$.

\begin{table}[!htbp]
\centering
\small
\setlength{\tabcolsep}{4pt}
\caption{\textbf{Architecture, data size and full-model averaging.} Median direction errors over all runs in each comparison. Closer to $G_\star$ is the percentage of runs closer in direction to $G_\star$ than to $G_{\cen}$; corrected runs target $G_{\cen}$. Architecture comparisons use $E=4000$ unless specified; other training conditions are described in the text.}\label{tab:robustness}
\begin{tabular}{@{}p{.43\linewidth}ccc@{}}
\toprule
Training / architecture & To $G_\star$ & To $G_{\cen}$ & Closer to $G_\star$ \\
\midrule
\multicolumn{4}{@{}l}{\textit{Architecture and data}} \\
Residual CNN & $1.45\times10^{-2}$ & $5.24\times10^{-1}$ & 100\% \\
Residual CNN + correction & $5.05\times10^{-1}$ & $1.93\times10^{-3}$ & 0\% \\
ResNet-34 & $1.29\times10^{-2}$ & $5.19\times10^{-1}$ & 100\% \\
Larger data, $E=1000$ & $9.55\times10^{-3}$ & $5.63\times10^{-1}$ & 100\% \\
\midrule
\multicolumn{4}{@{}l}{\textit{Full-model averaging}} \\
$E=400$ & $4.41\times10^{-1}$ & $1.82\times10^{-1}$ & 27\% \\
$E=4000$ & $2.14\times10^{-2}$ & $6.20\times10^{-1}$ & 100\% \\
$E=4000$ + correction & $6.22\times10^{-1}$ & $8.08\times10^{-3}$ & 0\% \\
\bottomrule
\end{tabular}
\end{table}

\paragraph{Full-model averaging.}
Under full-model averaging, the server averages the backbone as well as the head, so the clients also share their feature extractors. These experiments use the five tabular datasets with ResMLP and three seeds each. With $E=400$, the endpoint is far from both $G_\star$ and $G_{\cen}$, with median direction errors of $4.41\times10^{-1}$ and $1.82\times10^{-1}$. With $E=4000$, the median direction error to $G_\star$ falls to $2.14\times10^{-2}$, and every run is closer in direction to $G_\star$ than to $G_{\cen}$. Adding moment correction at $E=4000$ reduces the median direction error to $G_{\cen}$ from $6.20\times10^{-1}$ to $8.08\times10^{-3}$ (Figure~\ref{fig:dnn-fullmodel-paired}). With sufficient local training, the endpoint under full-model averaging therefore stays close to $G_\star$, and moment correction still moves it to $G_{\cen}$.

\begin{figure}[!htbp]
 \centering
 \includegraphics[width=\linewidth]{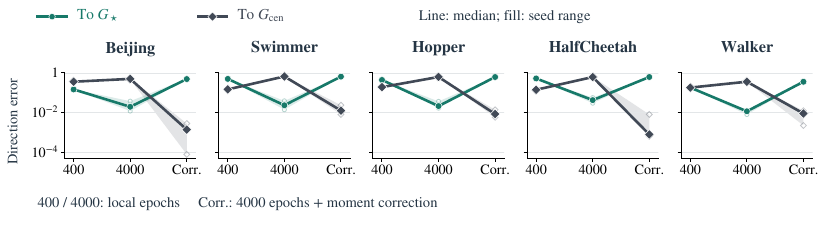}
 \caption{\textbf{Local training and moment correction under full-model averaging.} Each panel shows one tabular dataset with ResMLP and compares the same three seeds across ordinary training with $E=400$, ordinary training with $E=4000$, and corrected training with $E=4000$. Colors distinguish direction error to $G_\star$ and $G_{\cen}$. Curves show medians; pale bands span the three seeds. The first transition changes the local budget; the second adds correction at a fixed budget.}
 \label{fig:dnn-fullmodel-paired}
\end{figure}
\FloatBarrier

\subsection{Test error}
\label{app:prediction}
We next examine whether the proximal regularizer and the moment correction change the prediction error on test samples. This question lies outside the UFM analysis. The UFM treats the features of the training samples as free variables and describes how the training objective is optimized; it does not model the features that a backbone produces for unseen inputs, and hence does not predict test error. The preceding experiments show that this analysis predicts the endpoint of training across datasets and architectures. We add a comparison of test error for two reasons: (1) test error ultimately matters for any training procedure, and (2) the two interventions may affect test error differently:

\noindent\begin{minipage}{\linewidth}
\begin{itemize}
\item The moment correction moves the head Gram to the optimum of centralized training, using target moments that the clients share with the server, and may therefore lower test error.
\item The proximal regularizer adds an extra constraint to each local update, and it is unclear whether this constraint impairs generalization.
\end{itemize}
\end{minipage}

The comparison uses the main-experiment runs on the five image datasets with ResNet-18 and seeds $\{0,1,2\}$, 15 dataset--seed combinations in total. Ordinary training, proximal training and proximal training with moment correction share the recipe $\lambda_H=\lambda_W=0.01$, zero weight decay, eight rounds and $E=4000$. All three keep the feature and head penalties; proximal training adds $\rho=10^{-3}$, and the corrected procedure also adds $C_m$. Prediction error is evaluated as in Appendix~\ref{app:local-screen}, with test samples assigned to clients as in Appendix~\ref{app:dnn-datasets}, and MSE changes are paired within each dataset--seed combination.

The proximal regularizer tightens agreement with $G_\star$, and adding correction moves the endpoint to $G_{\cen}$ (Table~\ref{tab:dnn-prediction}). Both interventions lower training MSE in all 15 comparisons, with median changes of $-0.2\%$ and $-2.0\%$. Proximal training changes test MSE by less than $1.5\%$ in every comparison and lowers it in 9 of 15. Adding correction lowers test MSE in 13 of 15 comparisons, with a median change of $-1.5\%$; the mean change ranges from $-2.3\%$ on dSprites to $+0.3\%$ on MPIIGaze (Figure~\ref{fig:dnn-prediction-settings}). On these datasets, the proximal regularizer thus constrains training while keeping test error within $1.5\%$ of ordinary training, and the moment correction lowers test error slightly on average.

\begin{table}[!htbp]
 \centering\small
 \setlength{\tabcolsep}{4pt}
 \caption{\textbf{Geometry and prediction under one training recipe.} Medians over the 15 image dataset--seed combinations. Geometry columns are relative matrix errors. The loss is the MSE, and loss changes are paired against ordinary training before taking medians. \textbf{Negative changes indicate lower loss, which is better.}}
 \label{tab:dnn-prediction}
 \begin{tabular}{@{}lcccc@{}}
\toprule
Training & $e_F(G_8,G_\star)$ & $e_F(G_8,G_{\cen})$ & Train loss change & Test loss change \\
\midrule
Ordinary & $1.37\times10^{-2}$ & $5.04\times10^{-1}$ & -- & -- \\
Proximal & $1.54\times10^{-4}$ & $4.96\times10^{-1}$ & $-0.2\%$ & $-0.1\%$ \\
Proximal + moment correction & $6.23\times10^{-1}$ & $7.03\times10^{-5}$ & $-2.0\%$ & $-1.5\%$ \\
\bottomrule
\end{tabular}

\end{table}

\begin{figure}[H]
 \centering
 \includegraphics[width=\linewidth]{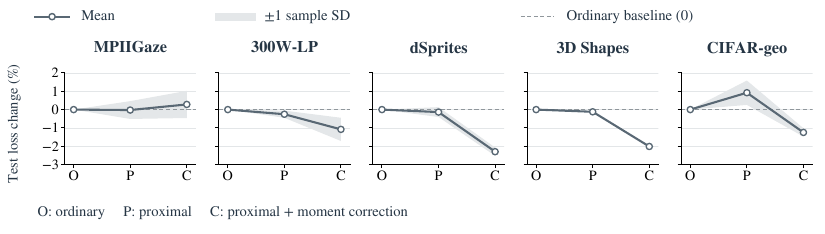}
 \caption{\textbf{Paired test-MSE changes on the five image datasets.} Values are $100(\mathrm{MSE}/\mathrm{MSE}_{\mathrm O}-1)$, with the same seed's ordinary-training MSE as denominator for all procedures; O, P and C denote ordinary training, proximal training, and proximal training with moment correction. Curves show means over seeds $\{0,1,2\}$, and bands span one sample standard deviation; the dashed line is zero. \textbf{Negative values indicate lower loss, which is better.}}
 \label{fig:dnn-prediction-settings}
\end{figure}

\FloatBarrier

\section{Supplementary analysis}
\label{sub:supplementary-analysis}

\subsection{Finite-step dynamics of the profiled UFM}
\label{sub:finite-proof}

We study a federated update in which each client performs gradient
descent on its profiled objective $F_m(W)=F(W;\Sigma_m)$.
Each gradient step corresponds to first minimizing the local loss
over the features and bias at the current head, then taking a
full-batch gradient step on the head.
We call this a \emph{profiled step}.

At each round, all clients start from the broadcast $W_t$,
take $E$ profiled steps with learning rate $\eta$, and return
their heads for weighted averaging.
Since each profiled step uses the full local batch, $E$ also counts
the local epochs per round, as in the DNN experiments.
We write $h=\eta E$ and denote the server Gram matrix by
$G_t=W_tW_t^\top$.

\paragraph{One local step per round.}
For $E=1$, the server update is
\[
W_{t+1}
=W_t-\eta\sum_m p_m\nabla F(W_t;\Sigma_m)
=W_t-\eta\nabla F(W_t;\Sigma_{\mathrm{within}}),
\]
where the second equality follows from the affine dependence
of $F(W;T)$ on $T$.
Thus the server performs gradient descent on the averaged
profiled objective
\[
\bar F(W):=\sum_m p_mF_m(W)
=F(W;\Sigma_{\mathrm{within}}).
\]
In the fully active regime, its optimal Gram matrix is
\[
G_{\mathrm{within}}=\phi(\Sigma_{\mathrm{within}}).
\]
Every head with this Gram matrix is a fixed point of the
one-step update. We use $G_{\mathrm{within}}$ as the reference
for studying multiple local steps.

\paragraph{Comparison with joint updates.}
For comparison, suppose each client takes one simultaneous
full-batch gradient step on $\mathcal L_m$ in $(W,b,H_m)$,
starting from the common head $(W,b)$.
The server averages the updated heads, while the features remain local.

The averaged updates of $W$ and $b$ equal those of gradient
descent on the pooled objective \eqref{sub:pooled-loss}.
Since $\nabla_{H_m}\mathcal L_{\cen}
=p_m\nabla_{H_m}\mathcal L_m$, the feature update is
\[
H_m^+
=H_m-\eta\nabla_{H_m}\mathcal L_m
=H_m-\frac{\eta}{p_m}\nabla_{H_m}\mathcal L_{\cen}.
\]
Thus the entire round corresponds to gradient descent on
$\mathcal L_{\cen}$, with learning rate $\eta$ for $(W,b)$
and $\eta/p_m$ for $H_m$.
If these joint updates converge to a pooled global minimizer,
their head Gram converges to $G_{\cen}$.
The profiled update instead follows $\bar F$, whose optimal
Gram is $G_{\mathrm{within}}$.
The objectives underlying the two procedures therefore have
generally different optimal Gram matrices, even with one
local step per round.

\paragraph{Displacement under multiple local steps.}
We now study how the Gram fixed point changes when $E>1$.
To state the result, write the gradient of the profiled objective as
\[
\nabla F(W;T)=2\Lambda_T(G)W,
\qquad G=WW^\top,
\]
where
\[
\Lambda_T(G)
=\frac{\lambda_W}{2}I_C
-\frac{\lambda_H}{2}
(G+\lambda_H I_C)^{-1}T(G+\lambda_H I_C)^{-1}.
\]
We use the abbreviations
\[
\Lambda_m=\Lambda_{\Sigma_m},
\qquad
\bar\Lambda=\Lambda_{\Sigma_{\mathrm{within}}},
\qquad
R=\Sigma_{\mathrm{within}}^{1/2},
\qquad
\alpha=\sqrt{\lambda_H/\lambda_W},
\]
so that $G_{\mathrm{within}}=\alpha R-\lambda_H I_C$.

Let $f(G)=\bar F(W)$ whenever $WW^\top=G$, and define
\begin{equation}
\begin{aligned}
\psi(G)&=2\sum_m p_m\Tr\!\bigl(\Lambda_m(G)G\Lambda_m(G)\bigr),\\
\mathcal H[X]&=\frac{\lambda_W}{2\alpha}(R^{-1}X+XR^{-1}).
\end{aligned}
\label{sub:drift-objects}
\end{equation}
These quantities satisfy
\[
\psi(WW^\top)=\frac12\sum_m p_m\norm{\nabla F_m(W)}_F^2,
\qquad
\mathcal H=\operatorname{Hess}f(G_{\mathrm{within}}).
\]
Thus $\psi$ measures the weighted squared norms of the local
head gradients, while $\mathcal H$ is the Hessian of the
averaged objective with respect to the Gram matrix.
The operator $\mathcal H$ is positive definite on
$\operatorname{Sym}(C)$, the space of symmetric $C\times C$ matrices.

For sufficiently small $h$, the following theorem establishes
a locally attracting Gram fixed point near $G_{\mathrm{within}}$
and gives its first-order displacement from $G_{\mathrm{within}}$.

\begin{theorem}[Fixed point and local convergence of profiled updates]
\label{sub:finite}
Assume the fully active conditions of
Section~\ref{sub:formulation}.
For an integer $E\ge1$ and learning rate $\eta>0$, define
the server update
\[
\Phi_{\eta,E}(W)
=\sum_m p_m\mathrm{GD}_{\eta}^{E}[F_m](W),
\]
where $\mathrm{GD}_{\eta}^{E}[F_m](W)$ denotes $E$ gradient
steps on $F_m$, starting from $W$.

There exist constants $r,\varepsilon_0,K>0$ such that,
for every $P\ge C$ and every $(\eta,E)$ satisfying
$h=\eta E\le\varepsilon_0$, the following hold.

\begin{enumerate}
\item \emph{Gram dynamics and a unique nearby fixed point.}
The next Gram matrix depends only on the current Gram matrix,
so the update defines a map
\[
G_{t+1}=\Phi_{\eta,E}^{G}(G_t).
\]
This map has exactly one fixed point in
$\norm{G-G_{\mathrm{within}}}_F\le r$.
We denote this positive-definite fixed point by $G_{\eta,E}$.

\item \emph{Displacement from the one-step fixed point.}
Define
\[
\Delta
=\frac12\mathcal H^{-1}
\!\left[\nabla\psi(G_{\mathrm{within}})\right],
\]
where $\psi$ and $\mathcal H$ are given in
\eqref{sub:drift-objects}.
Then
\[
\norm{G_{\eta,E}-G_{\mathrm{within}}-\eta(E-1)\Delta}_F
\le K(\eta E)^2.
\]
For $E=1$, the fixed point is exactly
$G_{\eta,1}=G_{\mathrm{within}}$.

\item \emph{Convergence from nearby initializations.}
If $G_0$ is sufficiently close to $G_{\eta,E}$, then for all
$t\ge0$,
\[
\norm{G_t-G_{\eta,E}}_F
\le
\left(1-\frac12h\theta_{\min}\right)^t
\norm{G_0-G_{\eta,E}}_F,
\]
where
\[
\theta_{\min}
=2\lambda_W
\left(
1-\sqrt{\frac{\lambda_H\lambda_W}
{\lambda_{\min}(\Sigma_{\mathrm{within}})}}
\right)>0.
\]
\end{enumerate}

The constants $r,\varepsilon_0,K$ depend only on the client
covariances, weights, regularization parameters and $C$;
they are independent of $\eta$, $E$ and $P$.
The convergence statement concerns the Gram matrices $G_t$.
No convergence of the heads $W_t$ is asserted.
\end{theorem}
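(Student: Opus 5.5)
The starting point is the observation already used in the proof of Theorem~\ref{sub:oracle}: a gradient step acts on the head by left multiplication with a $C\times C$ matrix that depends only on the current Gram matrix. Since $\nabla F(W;T)=2\Lambda_T(G)W$, one profiled step on client $m$ maps $W\mapsto(I_C-2\eta\Lambda_m(G))W$. The Gram matrix then updates to $(I-2\eta\Lambda_m)G(I-2\eta\Lambda_m)$, which is again a function of $G$ alone. By induction, $E$ steps give $\mathrm{GD}^E_\eta[F_m](W)=M_m(G)W$ for a product $M_m(G)$ of such factors evaluated along the client's own Gram trajectory $G_{m,0}=G,\,G_{m,1},\dots$. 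Averaging gives $W_{t+1}=S_{\eta,E}(G_t)W_t$ with $S_{\eta,E}=\sum_mp_mM_m$, hence
\[
G_{t+1}=\Phi^G_{\eta,E}(G_t):=S_{\eta,E}(G_t)\,G_t\,S_{\eta,E}(G_t)^\top .
\]
This is independent of $P$, which proves the first half of part~1 and explains the $P$-independence of all constants. All subsequent work is on this map on the open set of positive-definite matrices, restricted to a fixed compact ball around $G_{\mathrm{within}}$ on which every $\Lambda_m$ and its derivatives are uniformly bounded.

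Next I would expand $\Phi^G_{\eta,E}$ in $\eta$ with $E$ explicit. Writing $D_m(G)=2(\Lambda_m(G)G+G\Lambda_m(G))$, the single-client Gram step is $G\mapsto G-\eta D_m(G)+O(\eta^2)$. Summing $E$ steps and expanding $G_{m,k}=G-k\eta D_m(G)+O(k^2\eta^2)$ inside each factor gives
\[
\Phi^G_{\eta,E}(G)=G-h\,\bar D(G)+\eta^2\tfrac{E(E-1)}{2}\sum_mp_m\,\mathrm{d}D_m(G)[D_m(G)]+\eta^2E\,B(G)+O(h^3).
\]
Here $\bar D=\sum_mp_mD_m=2(\bar\Lambda G+G\bar\Lambda)$ by the affinity of $\Lambda_T$ in $T$, and $B$ collects the $O(\eta^2)$ terms of a single step, including the cross term $4\eta^2\Lambda_mG\Lambda_m$ and the averaging of the $M_m$ before forming $SGS^\top$. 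I expect the $\eta^2E\,B$ terms to cancel against the $E=1$ structure: for $E=1$ the update is exact gradient descent on $\bar F$, so $G_{\mathrm{within}}$ is an exact fixed point, giving $G_{\eta,1}=G_{\mathrm{within}}$. The genuine drift is therefore carried by the factor $\eta^2E(E-1)$. The key identification is that, at $G_{\mathrm{within}}$, this drift equals a gradient-of-gradient-norm term: $\tfrac12\sum_mp_m\mathrm{d}D_m[D_m]$, minus the part already in $\mathrm{d}\bar D$, reduces to a multiple of the Gram-space gradient of $\psi$. This mirrors the standard local-SGD implicit regularizer $\tfrac{\eta(E-1)}{4}\nabla\sum_mp_m\norm{\nabla F_m}^2$. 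I would verify it by differentiating $\psi(WW^\top)=\tfrac12\sum_mp_m\norm{\nabla F_m(W)}_F^2$ along $W\mapsto(I+\epsilon X)W$.

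For parts~1 and~2, I would define the rescaled map $\Psi(G;\eta,E)=(\Phi^G_{\eta,E}(G)-G)/h$, so that
\[
\Psi(G;\eta,E)=-\bar D(G)+\eta(E-1)\,Q(G)+R(G;\eta,E),\qquad \norm{R}_{C^1}\le Kh^2
\]
on the ball. Since $\bar D(G_{\mathrm{within}})=0$ and $J:=\mathrm{d}\bar D(G_{\mathrm{within}})$ is invertible, a quantitative implicit function or Newton–Kantorovich argument, with the $C^1$ bound uniform in $(\eta,E)$, yields a unique zero $G_{\eta,E}$ in a radius-$r$ ball. It also yields $G_{\eta,E}-G_{\mathrm{within}}=\eta(E-1)J^{-1}Q(G_{\mathrm{within}})+O(h^2)$, and matching $J^{-1}Q$ with $\tfrac12\mathcal H^{-1}[\nabla\psi]$ gives part~2. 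Here $J[X]=2(\mathcal H[X]G+G\mathcal H[X])$, using $\bar\Lambda(G_{\mathrm{within}})=0$ and $\mathrm{d}\bar\Lambda=\mathcal H$. For part~3, the Jacobian of $\Phi^G$ at $G_{\eta,E}$ is $I-hJ+O(h^2)$. In the eigenbasis of $R$, with eigenvalues $r_i$, both $\mathcal H$ and $X\mapsto \mathcal H[X]G+G\mathcal H[X]$ act diagonally on the matrix units $E_{ij}$, so $J$ is self-adjoint in the Frobenius inner product. Its eigenvalues are
\[
\frac{\lambda_W}{\alpha}\Bigl(\frac1{r_i}+\frac1{r_j}\Bigr)(g_i+g_j),\qquad g_i=\alpha r_i-\lambda_H ,
\]
and each is at least $2\lambda_W(1-\lambda_H/(\alpha r_{\min}))=\theta_{\min}$. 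Hence $\norm{I-hJ}\le1-h\theta_{\min}$, and shrinking $\varepsilon_0$ and the neighbourhood absorbs the $O(h^2)$ and Lipschitz errors into the factor $1-\tfrac12h\theta_{\min}$.

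The main obstacle is the second-order bookkeeping: showing that the $\eta^2E$ terms vanish at $G_{\mathrm{within}}$, and that the $\eta^2E(E-1)$ coefficient is exactly $\tfrac12\nabla\psi$ after inverting $J$ versus $\mathcal H$. The factor conventions ($D$ versus $\bar\Lambda$, and the Gram-space gradient versus the $W$-space gradient) are easy to get wrong. I would settle them by working at $G_{\mathrm{within}}$ with the commuting eigenbasis of $R$ wherever possible, and by checking the $C=1$ scalar case first.
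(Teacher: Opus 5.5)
Your architecture matches the paper's: the exact Gram map $G\mapsto S(G)\,G\,S(G)^\top$ with $S=\sum_mp_mM_m$, a second-order expansion in $h$, a Newton/contraction argument around the invertible operator $\mathcal S_{G_{\mathrm{within}}}\circ\mathcal H$, and the same spectral bound for part~3. The gap is in the step you flag as the main obstacle, and the resolution you sketch is wrong.

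You place ``the averaging of the $M_m$ before forming $SGS^\top$'' in an $E$-independent term $\eta^2E\,B(G)$, which the $E=1$ argument then removes. That leaves $\frac12\sum_mp_m\,\mathrm{d}D_m[D_m]$ as the coefficient of $\eta^2E(E-1)$. But $M_m=I_C-2h\Lambda_m(G)+O(h^2)$ gives
\[
\sum_mp_mM_mGM_m^\top-SGS^\top=\sum_mp_m(M_m-S)G(M_m-S)^\top
=4h^2\sum_mp_m(\Lambda_m-\bar\Lambda)G(\Lambda_m-\bar\Lambda)+O(h^3),
\]
which scales like $\eta^2E^2=\eta^2E(E-1)+\eta^2E$. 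Its $\eta^2E$ part combines with your single-step term $4\eta^2E\sum_mp_m\Lambda_mG\Lambda_m$ into $4\eta^2E\,\bar\Lambda G\bar\Lambda$, which is what actually vanishes at $G_{\mathrm{within}}$. Its $\eta^2E(E-1)$ part, however, stays in the drift. The correct coefficient of $\eta^2E(E-1)$ is
\[
\tfrac12\sum_mp_m\,\mathrm{d}D_m[D_m]-4\sum_mp_m(\Lambda_m-\bar\Lambda)G(\Lambda_m-\bar\Lambda)
=\mathcal S_G[\nabla\psi(G)]+4\bar\Lambda G\bar\Lambda,
\qquad
\nabla\psi(G)=\sum_mp_m\bigl(\mathrm{d}\Lambda_m[D_m]+2\Lambda_m^2\bigr).
\]
At $G_{\mathrm{within}}$ your coefficient overshoots this by $4\sum_mp_m\Lambda_mG_{\mathrm{within}}\Lambda_m$, which is nonzero whenever the client covariances are not all equal. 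In the commuting case of \eqref{sub:drift-sign-main}, it would turn the factor $\sqrt{\lambda_H\lambda_W}-\frac34\sqrt{\bar\lambda_i}$ into $\sqrt{\lambda_H\lambda_W}-\frac12\sqrt{\bar\lambda_i}$. So the plan as written produces the wrong $\Delta$, and no adjustment of factor conventions repairs it.

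The fix is to expand while averaging is still linear, that is, expand the multipliers before forming the Gram matrix. Using $\Lambda_m(G_{m,k})=\Lambda_m(G)-k\eta\,\mathrm{d}\Lambda_m[D_m]+O(k^2\eta^2)$ and collecting the cross terms of the ordered product,
\[
M_m(G)=I_C-2h\Lambda_m+\eta^2E(E-1)\bigl(\mathrm{d}\Lambda_m[D_m]+2\Lambda_m^2\bigr)+O(h^3),
\qquad
S(G)=I_C-2h\bar\Lambda+\eta^2E(E-1)\nabla\psi(G)+O(h^3).
\]
This is the paper's head-level expansion $\Phi(W)=W-h\nabla\bar F+\frac{\eta^2E(E-1)}2\nabla\Psi+O(h^3)$, written as a multiplier via $\nabla\bar F=2\bar\Lambda W$ and $\nabla\Psi=2\nabla\psi(G)W$. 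Only then form $SGS^\top$. The single extra second-order term is $4h^2\bar\Lambda G\bar\Lambda$, which is quadratic in $\bar\Lambda$ and hence $O(h^4)$ at the fixed point. The drift is therefore exactly $\mathcal S_G[\nabla\psi]$, and $J^{-1}\mathcal S_{G_{\mathrm{within}}}[\nabla\psi]=\frac12\mathcal H^{-1}[\nabla\psi]$ because $J=2\,\mathcal S_{G_{\mathrm{within}}}\circ\mathcal H$.

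Your $W$-space heuristic (the local-SGD implicit regularizer) and your planned differentiation of $\psi(WW^\top)$ amount to this computation. Use them to define the drift, not to confirm the Gram-level coefficient you wrote. With this repair the rest goes through as in the paper. Note that the paper needs only an $O(h^2)$ remainder in $C^0$ plus $D\Phi^G=\mathrm{Id}-hJ+O(h\norm{G-G_{\mathrm{within}}}_F+h^2)$, not the $C^1$ bound $Kh^2$ you state.
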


\subsection{Proof of Theorem~\ref{sub:finite}}
\label{sub:finite-derivation}

\begin{proof}
We write $\Phi=\Phi_{\eta,E}$ for the head update and
$\Phi^G=\Phi^G_{\eta,E}$ for its induced Gram map.
All remainder estimates below hold uniformly in a fixed neighborhood
of $G_{\mathrm{within}}$, for sufficiently small $h=\eta E$.
Their constants may depend on the client covariances, weights,
regularization parameters and $C$, but not on $\eta$, $E$ or $P$.
For linear operators on $\operatorname{Sym}(C)$, we use the norm
induced by the Frobenius norm and denote the identity operator by
$\operatorname{Id}$.

\medskip
\noindent\emph{1. Exact Gram updates and the local expansion.}
Fix a broadcast head $W$ with Gram $G=WW^\top$.
For client $m$, let $W_{m,k}$ be the head after $k$ local steps and
let $G_{m,k}=W_{m,k}W_{m,k}^\top$, with $W_{m,0}=W$ and $G_{m,0}=G$.
Define
\[
A_{m,k}=I_C-2\eta\Lambda_m(G_{m,k}).
\]
The head update is $W_{m,k+1}=A_{m,k}W_{m,k}$.
Since $A_{m,k}$ is symmetric, its Gram satisfies
\begin{equation}
G_{m,k+1}=A_{m,k}G_{m,k}A_{m,k}.
\label{sub:profiled-local-gram}
\end{equation}
Thus each local Gram trajectory depends only on the initial Gram $G$.
The returned head is $W_{m,E}=M_m(G)W$, where
\[
M_m(G)=A_{m,E-1}\cdots A_{m,0}.
\]
Writing $M(G)=\sum_m p_mM_m(G)$, the server update and its Gram are
\[
\Phi(W)=M(G)W,
\qquad
\Phi^G(G)=M(G)GM(G)^\top.
\]
This establishes that the server Gram evolves independently of the
choice of head factor $W$.

Choose two closed Frobenius balls centered at
$G_{\mathrm{within}}$, with radii $0<r_1<r_2$, both contained
in the positive-definite cone.
Let the initial Gram satisfy
$\norm{G-G_{\mathrm{within}}}_F\le r_1$.
On the larger ball, smoothness and compactness give uniform
bounds on the required derivatives and on each local increment:
$\norm{G_{m,k+1}-G_{m,k}}_F\le c\eta$ for sufficiently small $\eta$.
If $h=\eta E$ is small enough that $ch<r_2-r_1$, induction gives
\[
\norm{G_{m,k}-G}_F\le c\eta k,
\qquad
\norm{G_{m,k}-G_{\mathrm{within}}}_F
\le r_1+c\eta k<r_2,
\qquad 0\le k\le E.
\]
Thus all local iterates remain in the larger ball, where these
uniform bounds apply.

The estimates can be made independent of the feature width.
Indeed, write $W=G^{1/2}V$, where $VV^\top=I_C$.
Local gradient descent from $W$ is the corresponding iteration from
$G^{1/2}$ followed by right multiplication by $V$.
Since $\norm{ZV}_F=\norm Z_F$ for every $C\times C$ matrix $Z$,
the expansion and its remainder can be established using square
heads and then transferred to every $P\ge C$.

For a fixed client, abbreviate $g_m=\nabla F_m$ and $W_k=W_{m,k}$.
Taylor expansion along the local iterates gives
\[
\begin{aligned}
W_k-W
&=-\eta k g_m(W)+O((\eta k)^2),\\
g_m(W_k)
&=g_m(W)-\eta kDg_m(W)[g_m(W)]+O((\eta k)^2).
\end{aligned}
\]
Substituting the second expansion into
$W_E=W-\eta\sum_{k=0}^{E-1}g_m(W_k)$ yields
\begin{equation}
\begin{aligned}
\mathrm{GD}_{\eta}^{E}[F_m](W)
=W-h\nabla F_m(W)
+\frac{\eta^2E(E-1)}2
  \nabla^2F_m(W)[\nabla F_m(W)]+O(h^3).
\end{aligned}
\label{sub:local-expansion}
\end{equation}
Here $\sum_{k=0}^{E-1}k=E(E-1)/2$, while the summed remainder is
bounded by a constant times
$\eta^3\sum_{k=0}^{E-1}k^2\le h^3/3$.
This gives a remainder bound uniform in $E$.

\medskip
\noindent\emph{2. The averaged update and its derivative.}
Define
\[
\Psi(W)=\frac12\sum_m p_m\norm{\nabla F_m(W)}_F^2
       =\psi(WW^\top).
\]
Its gradient is
\[
\nabla\Psi(W)
=\sum_m p_m\nabla^2F_m(W)[\nabla F_m(W)].
\]
Averaging \eqref{sub:local-expansion} therefore gives
\[
\Phi(W)
=W-h\nabla\bar F(W)
 +\frac{\eta^2E(E-1)}2\nabla\Psi(W)+O(h^3).
\]
Thus, up to an $O(h^3)$ remainder, one communication round agrees
with a gradient step of size $h$ on
$\bar F-\eta(E-1)\Psi/2$.
The identity $\nabla\bar F=\sum_m p_m\nabla F_m$ also gives
\[
\Psi(W)
=\frac12\norm{\nabla\bar F(W)}_F^2
 +\frac12\sum_m p_m
       \norm{\nabla F_m(W)-\nabla\bar F(W)}_F^2.
\]
The second term measures the dispersion of the client gradients
at the common head $W$.

For a symmetric matrix $A$, define the linear operator
$\mathcal S_A[X]=AX+XA$ on symmetric matrices.
Using $\nabla\bar F(W)=2\bar\Lambda(G)W$ and
$\nabla\Psi(W)=2\nabla\psi(G)W$, we obtain
\begin{equation}
\begin{aligned}
\Phi^G(G)
=G-2h\mathcal S_G[\bar\Lambda(G)]+\eta^2E(E-1)\mathcal S_G[\nabla\psi(G)]+4h^2\bar\Lambda(G)G\bar\Lambda(G)+O(h^3).
\end{aligned}
\label{sub:gram-expansion}
\end{equation}

We next estimate the derivative of the exact Gram map.
Differentiating the recurrence \eqref{sub:profiled-local-gram}
with respect to the initial Gram gives
$DG_{m,k}(G)=\operatorname{Id}+O(\eta k)$.
The product rule for $M_m(G)$ then gives
\[
DM_m(G)[X]
=-2\eta\sum_{k=0}^{E-1}
P_{m,>k}\,
D\Lambda_m(G_{m,k})[DG_{m,k}(G)[X]]\,
P_{m,<k},
\]
where
\[
P_{m,>k}=A_{m,E-1}\cdots A_{m,k+1},
\qquad
P_{m,<k}=A_{m,k-1}\cdots A_{m,0}.
\]
Empty products equal $I_C$. Both products are $I_C+O(h)$.
Using $G_{m,k}=G+O(\eta k)$ in these expressions and averaging
over clients gives
\[
\begin{aligned}
M(G)&=I_C-2h\bar\Lambda(G)+O(h^2),\\
DM(G)[X]&=-2hD\bar\Lambda(G)[X]+O(h^2)\norm X_F.
\end{aligned}
\]
Applying the product rule to $M(G)GM(G)^\top$ now yields
\begin{equation}
D\Phi^G(G)
=\operatorname{Id}
-2h\bigl(
\mathcal S_G\circ D\bar\Lambda(G)
+\mathcal S_{\bar\Lambda(G)}
\bigr)+O(h^2).
\label{sub:derivative-estimate}
\end{equation}
The remainder is uniform in the induced operator norm on a fixed
smaller neighborhood of $G_{\mathrm{within}}$.

\medskip
\noindent\emph{3. Existence and uniqueness of a nearby fixed point.}
For fixed $(\eta,E)$, define
\[
Q(G)=\frac{G-\Phi^G(G)}{2h},
\qquad
\mathcal T=\mathcal S_{G_{\mathrm{within}}}\circ\mathcal H.
\]
A fixed point of $\Phi^G$ is exactly a zero of $Q$.
At the reference Gram,
$\bar\Lambda(G_{\mathrm{within}})=0$ and
$D\bar\Lambda(G_{\mathrm{within}})=\mathcal H$.
Hence \eqref{sub:derivative-estimate} implies
\[
DQ(G)
=\mathcal T+O(\norm{G-G_{\mathrm{within}}}_F+h).
\]
The operators $\mathcal H$ and $\mathcal S_{G_{\mathrm{within}}}$
are diagonal in the symmetric matrix basis associated with an
orthonormal eigenbasis of $R$.
Their eigenvalues are positive, so their product $\mathcal T$
is self-adjoint, positive definite and invertible.

Consider the auxiliary map
\[
\mathcal K(G)=G-\mathcal T^{-1}Q(G).
\]
The estimate for $DQ$ gives
\[
D\mathcal K(G)
=\operatorname{Id}-\mathcal T^{-1}DQ(G)
=O\!\left(\norm{G-G_{\mathrm{within}}}_F+h\right).
\]
Also, \eqref{sub:gram-expansion} gives
$Q(G_{\mathrm{within}})=O(h)$, and hence
\[
\mathcal K(G_{\mathrm{within}})-G_{\mathrm{within}}
=-\mathcal T^{-1}Q(G_{\mathrm{within}})=O(h).
\]
Thus there are constants $c_1,c_2>0$ such that, uniformly
in the neighborhood above for sufficiently small $h$,
\[
\begin{aligned}
\norm{D\mathcal K(G)}_{\mathrm{op}}
\le c_1\bigl(\norm{G-G_{\mathrm{within}}}_F+h\bigr),
\quad
\norm{\mathcal K(G_{\mathrm{within}})-G_{\mathrm{within}}}_F
\le c_2h.
\end{aligned}
\]
Choose $r>0$ small enough that the closed ball
$\norm{G-G_{\mathrm{within}}}_F\le r$ lies in this neighborhood
and $c_1r\le1/4$.
Then choose $\varepsilon_0>0$ small enough that the estimates
above hold for $h\le\varepsilon_0$ and
\[
c_1\varepsilon_0\le\frac14,
\qquad
c_2\varepsilon_0\le\frac r2.
\]
For every such $h$, these choices ensure
\[
\sup_{\norm{G-G_{\mathrm{within}}}_F\le r}
\norm{D\mathcal K(G)}_{\mathrm{op}}\le\frac12,
\qquad
\norm{\mathcal K(G_{\mathrm{within}})-G_{\mathrm{within}}}_F
\le\frac r2.
\]

Since the ball is convex, the derivative bound implies
\[
\norm{\mathcal K(G)-\mathcal K(G')}_F
\le\frac12\norm{G-G'}_F
\]
for any two matrices $G,G'$ in the ball.
Moreover, for every $G$ in the ball,
\[
\begin{aligned}
\norm{\mathcal K(G)-G_{\mathrm{within}}}_F
&\le
\norm{\mathcal K(G)-\mathcal K(G_{\mathrm{within}})}_F
+\norm{\mathcal K(G_{\mathrm{within}})-G_{\mathrm{within}}}_F\\
&\le\frac12\norm{G-G_{\mathrm{within}}}_F+\frac r2
\le r.
\end{aligned}
\]
Thus $\mathcal K$ is a contraction from the closed ball into itself.
Banach's fixed-point theorem gives a unique fixed point in this
ball, denoted $G_{\eta,E}$.
Since
\[
\mathcal K(G)=G
\quad\Longleftrightarrow\quad
Q(G)=0
\quad\Longleftrightarrow\quad
\Phi^G(G)=G,
\]
this is also the unique fixed point of the Gram update in the ball.
The ball is contained in the positive-definite cone, so
$G_{\eta,E}\succ0$.

Finally, the fixed-point identity and the contraction bound give
\[
\norm{G_{\eta,E}-G_{\mathrm{within}}}_F
\le
\frac12\norm{G_{\eta,E}-G_{\mathrm{within}}}_F
+\norm{\mathcal K(G_{\mathrm{within}})-G_{\mathrm{within}}}_F.
\]
Rearranging yields
\[
\norm{G_{\eta,E}-G_{\mathrm{within}}}_F
\le2c_2h=O(h).
\]

\medskip
\noindent\emph{4. Displacement of the fixed point.}
The preceding estimate implies
$\bar\Lambda(G_{\eta,E})=O(h)$.
Evaluate \eqref{sub:gram-expansion} at $G_{\eta,E}$ and divide
the fixed-point equation by $2h$. This gives
\[
\begin{aligned}
\mathcal S_{G_{\eta,E}}[\bar\Lambda(G_{\eta,E})]
=\frac{\eta(E-1)}2
\mathcal S_{G_{\eta,E}}[\nabla\psi(G_{\eta,E})]+2h\bar\Lambda(G_{\eta,E})G_{\eta,E}
        \bar\Lambda(G_{\eta,E})+O(h^2).
\end{aligned}
\]
For $G\succ0$, the operator $\mathcal S_G$ multiplies the
$(i,j)$ entry by the sum of two eigenvalues of $G$, when
expressed in an eigenbasis of $G$.
These sums are bounded away from zero in our neighborhood,
so $\mathcal S_G^{-1}$ is uniformly bounded there.
The quadratic term
$2h\bar\Lambda(G_{\eta,E})G_{\eta,E}\bar\Lambda(G_{\eta,E})$
is $O(h^3)$ because
$\bar\Lambda(G_{\eta,E})=O(h)$.
Applying $\mathcal S_{G_{\eta,E}}^{-1}$ therefore gives
\[
\bar\Lambda(G_{\eta,E})
=\frac{\eta(E-1)}2\nabla\psi(G_{\eta,E})+O(h^2).
\]
We may replace $\nabla\psi(G_{\eta,E})$ by
$\nabla\psi(G_{\mathrm{within}})$ with an additional $O(h^2)$
error: the two arguments differ by $O(h)$ and $\eta(E-1)\le h$.
Finally, Taylor expansion at $G_{\mathrm{within}}$ yields
\[
\mathcal H[G_{\eta,E}-G_{\mathrm{within}}]
=\frac{\eta(E-1)}2\nabla\psi(G_{\mathrm{within}})+O(h^2).
\]
Applying $\mathcal H^{-1}$ proves
\[
G_{\eta,E}-G_{\mathrm{within}}
=\eta(E-1)\Delta+O(h^2),
\qquad
\Delta=\frac12\mathcal H^{-1}
       [\nabla\psi(G_{\mathrm{within}})].
\]
The uniform remainder gives the asserted bound with a constant $K$.
For $E=1$, the exact one-step update fixes
$G_{\mathrm{within}}$. Uniqueness in the ball therefore gives
$G_{\eta,1}=G_{\mathrm{within}}$ exactly.

\medskip
\noindent\emph{5. Local convergence of the Gram iteration.}
We now establish contraction of the actual update $\Phi^G$,
rather than the auxiliary map $\mathcal K$.
Let $r_i$ be the eigenvalues of $R$ and
$\beta_i=\alpha r_i-\lambda_H>0$ those of $G_{\mathrm{within}}$.
In the corresponding symmetric matrix basis, the eigenvalues
of $\mathcal T$ are
\[
\theta_{ij}
=\frac{\lambda_W}{2\alpha}
 (\beta_i+\beta_j)(r_i^{-1}+r_j^{-1}).
\]
Put $s=\sqrt{\lambda_H\lambda_W}$ and $r_{\min}=\min_i r_i$.
Then
\[
\begin{aligned}
\theta_{ij}
&=\frac{\lambda_W}{2}
\left[
\frac{(r_i+r_j)^2}{r_ir_j}
-2s(r_i^{-1}+r_j^{-1})
\right]\\
&\ge2\lambda_W(1-s/r_{\min})=\theta_{\min}>0.
\end{aligned}
\]
The inequality uses $(r_i+r_j)^2/(r_ir_j)\ge4$ and
$r_i^{-1}+r_j^{-1}\le2/r_{\min}$.
Equality holds when $r_i=r_j=r_{\min}$, so this is the
smallest eigenvalue of $\mathcal T$.

Because $\mathcal T$ is self-adjoint, for
$h\le1/(2\max_{i,j}\theta_{ij})$,
\[
\norm{\operatorname{Id}-2h\mathcal T}_{\mathrm{op}}
=1-2h\theta_{\min}.
\]
By \eqref{sub:derivative-estimate} and smoothness,
\[
D\Phi^G(G)
=\operatorname{Id}-2h\mathcal T
 +O\!\left(h\norm{G-G_{\mathrm{within}}}_F+h^2\right).
\]

Since $G_{\eta,E}-G_{\mathrm{within}}=O(h)$, there is a
constant $C_0>0$ such that
\[
\norm{G_{\eta,E}-G_{\mathrm{within}}}_F\le C_0h.
\]
For any $G$ satisfying
$\norm{G-G_{\eta,E}}_F\le r_{\mathrm{att}}$, we therefore have
\[
\norm{G-G_{\mathrm{within}}}_F
\le r_{\mathrm{att}}+C_0h.
\]
Thus the ball centered at $G_{\eta,E}$ lies inside the
uniqueness ball whenever $r_{\mathrm{att}}+C_0h\le r$. On this ball, the derivative estimate above gives
\[
\norm{D\Phi^G(G)}_{\mathrm{op}}
\le
1-2h\theta_{\min}
+C_1h r_{\mathrm{att}}+C_2h^2
\]
for constants $C_1,C_2>0$ independent of $\eta$, $E$ and $P$.
Choose a fixed radius $r_{\mathrm{att}}>0$ small enough that
\[
r_{\mathrm{att}}\le\frac r2,
\qquad
C_1r_{\mathrm{att}}\le\frac{\theta_{\min}}2.
\]
Then decrease $\varepsilon_0$, if necessary, so that all
previous small-$h$ conditions remain valid and
\[
C_0\varepsilon_0\le\frac r2,
\qquad
C_2\varepsilon_0\le\theta_{\min}.
\]
For every $0<h\le\varepsilon_0$, these choices ensure that
the ball lies inside the uniqueness ball and that
\[
\sup_{\norm{G-G_{\eta,E}}_F\le r_{\mathrm{att}}}
\norm{D\Phi^G(G)}_{\mathrm{op}}
\le 1-\frac12h\theta_{\min}
=:q,
\qquad 0<q<1.
\]
Since the ball is convex and
$\Phi^G(G_{\eta,E})=G_{\eta,E}$, the mean-value bound gives
\[
\begin{aligned}
\norm{\Phi^G(G)-G_{\eta,E}}_F
=\norm{\Phi^G(G)-\Phi^G(G_{\eta,E})}_F \le q\norm{G-G_{\eta,E}}_F \le q r_{\mathrm{att}}<r_{\mathrm{att}}.
\end{aligned}
\]
Thus each update decreases the distance to the fixed point and keeps the Gram matrix inside the same ball. The same estimate therefore applies at every subsequent round.
For any initial Gram satisfying
$\norm{G_0-G_{\eta,E}}_F\le r_{\mathrm{att}}$, iteration yields
\[
\norm{G_t-G_{\eta,E}}_F
\le q^t\norm{G_0-G_{\eta,E}}_F
=
\left(1-\frac12h\theta_{\min}\right)^t
\norm{G_0-G_{\eta,E}}_F,
\]
which proves the claimed local convergence.
Although $G_{\eta,E}$ depends on $\eta$ and $E$,
the radius $r_{\mathrm{att}}$ and the upper bound
$\varepsilon_0$ can be chosen independently of $\eta$, $E$ and $P$.
\end{proof}

\subsection{Dependence of the displacement on client covariances}
\label{sub:covariance-dependence}
For small $h$, Theorem~\ref{sub:finite} gives $G_{\eta,E}-G_{\mathrm{within}}=\eta(E-1)\Delta+O(h^2)$. To understand how differences among client covariances affect this displacement, we here evaluate the coefficient $\Delta$ in terms of these covariances. To obtain an explicit prediction, we treat the case of commuting client covariances.

Assume that the client covariances commute pairwise.
In a common orthonormal eigenbasis, write
\[
\Sigma_m=\operatorname{diag}(\lambda_{m,1},\ldots,\lambda_{m,C}),
\qquad
\bar\lambda_i=\sum_m p_m\lambda_{m,i}.
\]
Define
\[
r_i=\sqrt{\bar\lambda_i},
\qquad
\alpha=\sqrt{\lambda_H/\lambda_W},
\qquad
s=\sqrt{\lambda_H\lambda_W}.
\]
The reference Gram is diagonal in this basis, with entries
\[
\bar g_i:=(G_{\mathrm{within}})_{ii}
=\alpha r_i-\lambda_H.
\]
To measure the covariance differences in each coordinate, put
\[
e_{m,i}=\frac{\lambda_{m,i}}{\bar\lambda_i}-1,
\qquad
\bar e_i=\sum_m p_m e_{m,i}^2.
\]
The definition of $\bar\lambda_i$ implies
$\sum_m p_m e_{m,i}=0$.

For a diagonal Gram $G=\operatorname{diag}(g_1,\ldots,g_C)$,
the $i$th diagonal entry of $\Lambda_m(G)$ is
\[
\ell_{m,i}(g_i)
=\frac{\lambda_W}{2}
-\frac{\lambda_H\lambda_{m,i}}{2(g_i+\lambda_H)^2}.
\]
Its derivative is
\[
\ell'_{m,i}(g_i)
=\frac{\lambda_H\lambda_{m,i}}{(g_i+\lambda_H)^3}.
\]
The definition of $\psi$ therefore reduces to
\[
\psi(G)
=2\sum_i g_i\sum_m p_m\ell_{m,i}(g_i)^2.
\]
At $g_i=\bar g_i$, using $\bar g_i+\lambda_H=\alpha r_i$
and $\lambda_{m,i}=r_i^2(1+e_{m,i})$, we obtain
\[
\ell_{m,i}(\bar g_i)
=-\frac{\lambda_W}{2}e_{m,i},
\qquad
\ell'_{m,i}(\bar g_i)
=\frac{\lambda_W}{\alpha r_i}(1+e_{m,i}).
\]

Differentiating $\psi$ with respect to $g_i$ now gives
\[
\begin{aligned}
\relax[\nabla\psi(G_{\mathrm{within}})]_{ii}
=2\sum_m p_m\ell_{m,i}(\bar g_i)^2
 +4\bar g_i\sum_m p_m
   \ell_{m,i}(\bar g_i)\ell'_{m,i}(\bar g_i)
   =\lambda_W^2
  \left(-\frac32+\frac{2s}{r_i}\right)\bar e_i.
\end{aligned}
\]
The relations $\sum_m p_m e_{m,i}=0$ and
$\bar g_i/(\alpha r_i)=1-s/r_i$ are used.

The full gradient at $G_{\mathrm{within}}$ is diagonal as well.
Indeed, for every diagonal matrix
$D=\operatorname{diag}(\pm1,\ldots,\pm1)$, we have
$\psi(DGD)=\psi(G)$ and
$DG_{\mathrm{within}}D=G_{\mathrm{within}}$.
Differentiating this invariance gives
\[
D\nabla\psi(G_{\mathrm{within}})D
=\nabla\psi(G_{\mathrm{within}}).
\]
Since this holds for every choice of signs, all off-diagonal
entries of the gradient vanish.

Recall that the displacement coefficient satisfies
\[
2\mathcal H[\Delta]=\nabla\psi(G_{\mathrm{within}}).
\]
In the common eigenbasis, the Hessian acts entrywise as
\[
(\mathcal H[X])_{ij}
=\frac{\lambda_W}{2\alpha}
(r_i^{-1}+r_j^{-1})X_{ij}.
\]
All these coefficients are positive, so a diagonal
$\nabla\psi(G_{\mathrm{within}})$ gives a diagonal $\Delta$.
Writing $\Delta_i:=\Delta_{ii}$, we obtain
\[
\begin{aligned}
\Delta_i
=\frac{\alpha r_i}{2\lambda_W}
  [\nabla\psi(G_{\mathrm{within}})]_{ii}=\frac{\alpha r_i\lambda_W}{2}
  \left(-\frac32+\frac{2s}{r_i}\right)\bar e_i=s\left(s-\frac34r_i\right)\bar e_i,
\end{aligned}
\]
where the last equality uses $\alpha\lambda_W=s$.
Substituting the definitions of $s$, $r_i$ and $\bar e_i$
gives
\begin{equation}
\Delta_i
=\sqrt{\lambda_H\lambda_W}
\left(
\sqrt{\lambda_H\lambda_W}
-\frac34\sqrt{\bar\lambda_i}
\right)
\sum_m p_m
\left(\frac{\lambda_{m,i}}{\bar\lambda_i}-1\right)^2.
\label{sub:drift-sign-main}
\end{equation}

At fixed $\bar\lambda_i$, this coefficient is proportional
to the weighted variance $\bar e_i$ of the relative
client covariance deviations.
If $\bar e_i=0$, then $\Delta_i=0$.
For $\bar e_i>0$, $\Delta_i$ is positive when
$\bar\lambda_i<16\lambda_H\lambda_W/9$, zero at equality,
and negative when $\bar\lambda_i>16\lambda_H\lambda_W/9$.

For sufficiently small $h$, Theorem~\ref{sub:finite} gives
\[
(G_{\eta,E}-G_{\mathrm{within}})_{ii}
=\eta(E-1)\Delta_i+O(h^2).
\]
This predicted displacement $\eta(E-1)\Delta_i$ is compared with the numerically measured displacement as shown in Figure~\ref{sub:fig-finite}(d).

\subsection{Numerical checks of finite-step dynamics}
\label{sub:num-finite}

We numerically examine the fixed points of the profiled Gram update.
Figure~\ref{sub:fig-finite}(a--c) compares them with the one-step
reference $G_{\mathrm{within}}$ and the BW barycenter $G_\star$
predicted by Theorem~\ref{sub:oracle} under exact closest-head selection.
Panels (d) and (e) test the small-$h$ predictions for the displacement
and local convergence rate, where $h=\eta E$.
Below, $G_{\eta,E}$ denotes the numerically obtained fixed point,
including in experiments beyond the small-$h$ regime.

Panels (a--c) use three random noncommuting instances with $C=3$
and $M=4$, five learning rates
$\eta\in\{0.02,0.05,0.1,0.2,0.5\}$, and ten local step counts
$E$ ranging from $1$ to $2000$.
For each pair $(\eta,E)$, we iterate the server update until
the server Gram residual is below $10^{-13}$.

\begin{figure}[!htbp]
\centering
\includegraphics[width=\linewidth]{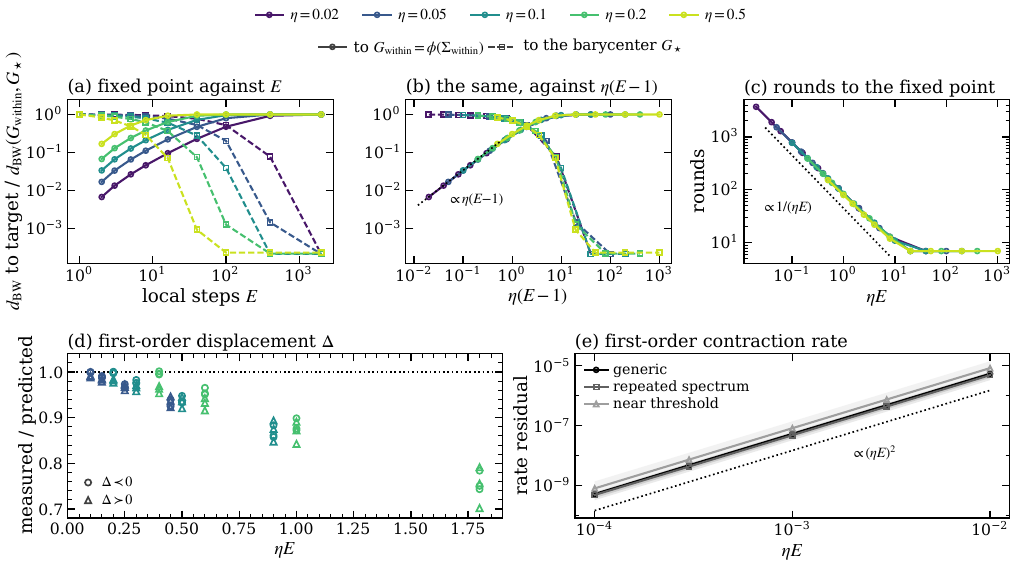}
\caption{Numerical checks of finite-step profiled dynamics.
(a) BW distances from $G_{\eta,E}$ to $G_{\mathrm{within}}$
(circles, solid) and $G_\star$ (squares, dashed), normalized by
$d_{\mathrm{BW}}(G_{\mathrm{within}},G_\star)$;
medians over three instances with $C=3$ and $M=4$.
(b) The same distances plotted against $\eta(E-1)$;
the dotted guide has slope one.
(c) Communication rounds needed to reach the stopping criterion.
(d) Measured displacements divided by the predictions
$\eta(E-1)\Delta_i$ in 24 commuting configurations, colored by $\eta$.
(e) Absolute errors in the first-order prediction
$1-2h\theta_{\min}$ for the Jacobian spectral radius;
medians and interquartile ranges within each instance family
over 780 stability records. The dotted guide has slope two.}
\label{sub:fig-finite}
\end{figure}

\paragraph{Fixed points and the BW prediction.}
Panels (a) and (b) show how the fixed point changes as local
training increases. All distances in these panels are normalized
by $d_{\mathrm{BW}}(G_{\mathrm{within}},G_\star)$.
For $E=1$, the numerical fixed point agrees with
$G_{\mathrm{within}}$ to within $3.5\times10^{-6}$ in these units.
For small $h$, its distance from $G_{\mathrm{within}}$ grows
approximately linearly with $\eta(E-1)$, consistent with
Theorem~\ref{sub:finite}. Panel (b) makes this scaling visible
by plotting the same data against $\eta(E-1)$.

With more local training, the fixed points in these examples
move away from $G_{\mathrm{within}}$ and closer to $G_\star$.
The distances to the two references cross near $\eta(E-1)=2$,
and the normalized distance to $G_\star$ eventually levels off
between $1.4\times10^{-4}$ and $8.3\times10^{-4}$.
Thus finite-step profiled training closely approaches the BW
prediction in these examples, even though it does not explicitly
impose the selection rule assumed in Theorem~\ref{sub:oracle}.

\paragraph{Communication rounds.}
Panel (c) shows the communication cost of reaching the fixed point.
For small $h$, the median round count decreases approximately
as $1/(\eta E)$, from $3698$ rounds at $\eta E=0.02$.
It levels off at $7$ rounds for $\eta E\ge20$.
The number of local head steps per client is $E$ times the round
count, so fewer communication rounds need not mean fewer local updates.

\paragraph{Local optimality and head selection.}
At $E=2000$ and $\eta\in\{0.1,0.5\}$, we check whether the
returned heads are both locally optimal and close to those prescribed
by the selection rule.
Writing $W$ for the broadcast and $U_m$ for client $m$'s return,
the maximum relative Gram error satisfies
\[
\max_m
\frac{\norm{U_mU_m^\top-G_m}_F}{\norm{G_m}_F}
<4\times10^{-15}.
\]
In contrast, the selection error
\[
\max_m
\frac{\norm{U_m-\Pi_m(W)}_F}{\norm{\Pi_m(W)}_F}
\]
ranges from $6\times10^{-5}$ to $3\times10^{-4}$ across
the three noncommuting instances.
The heads therefore have nearly optimal local Grams but differ
from the closest optimal heads.

This distinction is possible because successive gradient steps
multiply the head by symmetric matrices whose ordered product
need not be symmetric. This product can differ from the symmetric
multiplier $T_m(WW^\top)$ defining $\Pi_m(W)$ in
Lemma~\ref{sub:selection-main}.
The selection errors accompany the nonzero distance plateau above;
we also observe continued head motion after the server Gram
stabilizes in the noncommuting cases with $E>1$.
For comparison, in one commuting instance with $E=2000$ and
$\eta\in\{0.1,0.5\}$, the selection error is below $7\times10^{-15}$.
The BW distance from $G_{\eta,E}$ to $G_\star$, normalized by
$d_{\mathrm{BW}}(G_{\mathrm{within}},G_\star)$ as in panels (a)
and (b), is below $1.14\times10^{-6}$.

\paragraph{Testing the displacement formula.}
Panel (d) tests the coefficient $\Delta_i$ in
\eqref{sub:drift-sign-main} using 24 commuting configurations.
In their common eigenbasis, the first-order prediction is
\[
(G_{\eta,E}-G_{\mathrm{within}})_{ii}
\approx\eta(E-1)\Delta_i.
\]
The panel plots the measured displacement divided by this
prediction, so a ratio of one indicates agreement.
Across all 72 coordinates, including positive and negative
predicted displacements, every measured sign agrees with
the prediction. For the eight configurations with $h\le0.25$,
the ratios lie in $[0.96,1.00]$.

\paragraph{Testing the local convergence rate.}
Panel (e) tests the first-order prediction
$1-2h\theta_{\min}$ for the Jacobian spectral radius at the
fixed point, derived in Appendix~\ref{sub:finite-derivation}.
It plots the absolute error
\[
\left|
\operatorname{spr}\!\left(D\Phi_{\eta,E}^G(G_{\eta,E})\right)
-(1-2h\theta_{\min})
\right|,
\]
where $\operatorname{spr}$ denotes spectral radius.
This prediction concerns the derivative at the fixed point;
the bound $1-\tfrac12h\theta_{\min}$ in
Theorem~\ref{sub:finite} instead controls contraction
throughout a neighborhood.

We evaluate 780 stability records covering generic instances,
instances with repeated eigenvalues, and fully active instances
near the activation threshold, using $E\in\{2,4,8\}$ and
\[
h\in\{10^{-4},3\times10^{-4},10^{-3},3\times10^{-3},10^{-2}\}.
\]
All measured spectral radii are below one.
The prediction errors are consistent with $O(h^2)$ scaling,
with a median fitted log--log slope of $1.999$.
Slopes are fitted separately for each instance and $E$, using
$h\le10^{-3}$ for near-threshold instances and all five values
otherwise. Panel (e) displays all five values in every case.

\end{document}